\pdfoutput=1
\documentclass{article}

\usepackage[preprint]{neurips_2026}
\workshoptitle{Managing Agents that Manage Agents}

\usepackage{amsmath,amsfonts,amssymb,amsthm}
\usepackage{graphicx}
\usepackage{url}
\usepackage{booktabs}
\usepackage{enumitem}
\usepackage{algorithm}
\usepackage{algorithmic}
\usepackage{xcolor}
\usepackage{natbib}

\usepackage{hyperref}

\newtheorem{theorem}{Theorem}
\newtheorem{corollary}[theorem]{Corollary}
\newtheorem{definition}[theorem]{Definition}

\newtheorem{proposition}[theorem]{Proposition}
\theoremstyle{remark}
\newtheorem{remark}[theorem]{Remark}
\theoremstyle{plain}

\newcommand{\X}{\mathcal{X}}
\newcommand{\Y}{\mathcal{Y}}
\newcommand{\D}{\mathcal{D}}
\newcommand{\Hyp}{\mathcal{H}}

\newcommand{\statPBK}{55}
\newcommand{\statPBSeeds}{5}
\newcommand{\statPBRounds}{4}
\newcommand{\statPBTrajectories}{1{,}650}

\newcommand{\statPBModelF}{53.46}
\newcommand{\statPBModelDf}{(1, 54)}
\newcommand{\statPBModelP}{{<}0.0001}
\newcommand{\statPBModelEta}{0.497}
\newcommand{\statPBModelEtaRunningBest}{0.514}
\newcommand{\statPBModelCohenF}{0.99}
\newcommand{\statPBModelCohenFCI}{$[0.738, \, 1.319]$}
\newcommand{\statPBModeF}{0.50}
\newcommand{\statPBModeDf}{(2, 108)}
\newcommand{\statPBModeP}{0.61}
\newcommand{\statPBModeEta}{0.009}
\newcommand{\statPBModeCohenF}{0.10}
\newcommand{\statPBInterF}{0.81}
\newcommand{\statPBInterDf}{(2, 108)}
\newcommand{\statPBInterP}{0.45}
\newcommand{\statPBInterEta}{0.015}
\newcommand{\statPBInterCohenF}{0.12}

\newcommand{\statPBHaikuIndep}{24/55 (44\%) $[31,57]$}
\newcommand{\statPBHaikuBlind}{26/55 (47\%) $[35,60]$}
\newcommand{\statPBHaikuDiag}{27/55 (49\%) $[36,62]$}
\newcommand{\statPBSonnetIndep}{47/55 (85\%) $[74,92]$}
\newcommand{\statPBSonnetBlind}{47/55 (85\%) $[74,92]$}
\newcommand{\statPBSonnetDiag}{49/55 (89\%) $[78,95]$}
\newcommand{\statPBHaikuSolveRange}{44--49\%}
\newcommand{\statPBSonnetSolveRange}{85--89\%}

\newcommand{\statPBModePower}{11\%}
\newcommand{\statPBInterPower}{17\%}
\newcommand{\statPBModeReqK}{467}
\newcommand{\statPBInterReqK}{364}
\newcommand{\statPBPowerMultiple}{7--9$\times$}

\newcommand{\statPBSeedSD}{0.096}
\newcommand{\statPBSignalOverNoise}{1.7$\times$}
\newcommand{\statPBBetweenModeSignal}{0.057}

\newcommand{\statPBEdgeN}{275}
\newcommand{\statPBHaikuEdge}{(0, -0.39, -0.46, -0.45)}
\newcommand{\statPBSonnetEdge}{(0, -0.33, -0.41, -0.46)}
\newcommand{\statPBHaikuEta}{$0.097 \to 0.036 \to 0.033$}
\newcommand{\statPBSonnetEta}{$0.156 \to 0.068 \to 0.029$}

\newcommand{\statPBEdgeCIReps}{9{,}999}
\newcommand{\statPBHaikuEdgeCI}{$[{-}0.449, \, {-}0.322]$, $[{-}0.485, \, {-}0.431]$, $[{-}0.478, \, {-}0.416]$}
\newcommand{\statPBSonnetEdgeCI}{$[{-}0.373, \, {-}0.285]$, $[{-}0.449, \, {-}0.365]$, $[{-}0.482, \, {-}0.435]$}

\newcommand{\statPBTieRange}{80--95\%}
\newcommand{\statPBRegressMax}{2.9\%}
\newcommand{\statPBHaikuFlat}{80.7\%}
\newcommand{\statPBSonnetFlat}{69.1\%}
\newcommand{\statPBHaikuPadShare}{27.6\%}
\newcommand{\statPBSonnetPadShare}{53.9\%}
\newcommand{\statPBHaikuPadTieRange}{33--45\%}
\newcommand{\statPBSonnetPadTieRange}{74--86\%}
\newcommand{\statPBHaikuGenuineTieRange}{52--59\%}
\newcommand{\statPBSonnetGenuineTieRange}{13--21\%}
\newcommand{\statPBHaikuAbstainZ}{0.81}
\newcommand{\statPBSonnetAbstainZ}{0.67}
\newcommand{\statPBHaikuMoversEdge}{{+}0.233 to {+}0.500}
\newcommand{\statPBSonnetMoversEdge}{{+}0.346 to {+}0.393}
\newcommand{\statPBHaikuEdgeHeadroom}{({-}0.345, {-}0.435, {-}0.411)}
\newcommand{\statPBSonnetEdgeHeadroom}{({-}0.084, {-}0.143, {-}0.276)}
\newcommand{\statPBHaikuEdgeHeadroomCI}{$[{-}0.426, \, {-}0.247]$, $[{-}0.474, \, {-}0.383]$, $[{-}0.460, \, {-}0.348]$}
\newcommand{\statPBSonnetEdgeHeadroomCI}{$[{-}0.173, \, {+}0.021]$, $[{-}0.286, \, {+}0.028]$, $[{-}0.389, \, {-}0.125]$}
\newcommand{\statPBHaikuHeadroomN}{194$\to$158 of 275}
\newcommand{\statPBSonnetHeadroomN}{113$\to$49 of 275}

\newcommand{\statPBSonnetHeadroomNLast}{49}
\newcommand{\statPBHaikuTieSeries}{(0.887, 0.945, 0.949)}
\newcommand{\statPBSonnetTieSeries}{(0.800, 0.898, 0.953)}
\newcommand{\statPBHaikuRegressSeries}{(0.004, 0.015, 0.000)}
\newcommand{\statPBSonnetRegressSeries}{(0.029, 0.011, 0.007)}
\newcommand{\statPBHaikuRegressRows}{5}
\newcommand{\statPBSonnetRegressRows}{12}

\newcommand{\statPBOverlapK}{30}
\newcommand{\statPBOverlapHalfK}{15}
\newcommand{\statPBOverlapMStar}{30}
\newcommand{\statPBOverlapViolate}{23/55 (42\%)}
\newcommand{\statPBOverlapVoteErr}{0.418}
\newcommand{\statPBOverlapVoteErrCI}{$[0.291, \, 0.545]$}
\newcommand{\statPBOverlapEpsBar}{0.356}
\newcommand{\statPBOverlapEpsBarCI}{$[0.277, \, 0.436]$}
\newcommand{\statPBOverlapMarkov}{0.712}
\newcommand{\statPBOverlapGap}{{+}0.062}
\newcommand{\statPBOverlapGapCI}{$[{-}0.018, \, {+}0.142]$}
\newcommand{\statPBOverlapGapPosPct}{94\%}
\newcommand{\statPBOverlapGapLastExclRound}{2}
\newcommand{\statPBOverlapGapLastExcl}{{+}0.089}
\newcommand{\statPBOverlapGapLastExclCI}{$[{+}0.011, \, {+}0.167]$}
\newcommand{\statPBOverlapArms}{6}
\newcommand{\statPBOverlapArmMStar}{6}
\newcommand{\statPBOverlapArmViolate}{28/55}

\newcommand{\statPBOverlapTierK}{15}
\newcommand{\statPBOverlapSonnetMStar}{15}
\newcommand{\statPBOverlapSonnetViolate}{8/55}
\newcommand{\statPBOverlapSonnetVoteErr}{0.145}
\newcommand{\statPBOverlapSonnetEpsBar}{0.147}
\newcommand{\statPBOverlapSonnetGap}{{-}0.001}
\newcommand{\statPBOverlapSonnetGapCI}{$[{-}0.039, \, {+}0.038]$}
\newcommand{\statPBOverlapHaikuMStar}{15}
\newcommand{\statPBOverlapHaikuViolate}{30/55}
\newcommand{\statPBOverlapHaikuVoteErr}{0.545}
\newcommand{\statPBOverlapHaikuEpsBar}{0.565}
\newcommand{\statPBOverlapHaikuGap}{{-}0.019}
\newcommand{\statPBOverlapHaikuGapCI}{$[{-}0.057, \, {+}0.017]$}

\newcommand{\statPBTauResolved}{1.0}
\newcommand{\statPBHaikuResolvedRange}{29--38\%}
\newcommand{\statPBSonnetResolvedRange}{71--75\%}
\newcommand{\statPBHaikuResolvedIndep}{21/55 (38\%)}
\newcommand{\statPBHaikuResolvedDiag}{16/55 (29\%)}
\newcommand{\statPBSonnetResolvedDiag}{41/55 (75\%)}
\newcommand{\statPBOverlapMStarResolved}{30}
\newcommand{\statPBOverlapViolateResolved}{23/55}

\newcommand{\statPBHaikuIndepQ}{0.432}
\newcommand{\statPBHaikuBlindQ}{0.449}
\newcommand{\statPBHaikuDiagQ}{0.466}
\newcommand{\statPBSonnetIndepQ}{0.875}
\newcommand{\statPBSonnetBlindQ}{0.856}
\newcommand{\statPBSonnetDiagQ}{0.871}
\newcommand{\statPBContrastFRange}{0.56--2.16}
\newcommand{\statPBContrastPRange}{0.15--0.46}
\newcommand{\statPBInteractionPRange}{0.13--0.33}

\newcommand{\statPBAuditFailN}{23/50 (46\%)}
\newcommand{\statPBAuditRootCauses}{5}
\newcommand{\statPBTaskPoolRepos}{10}
\newcommand{\statPBTaskPoolRetained}{40}
\newcommand{\statPBTaskPoolBackfilled}{15}

\newcommand{\statPCSessions}{401}
\newcommand{\statPCCommits}{1{,}211}
\newcommand{\statPCRepos}{23}
\newcommand{\statPCDevelopers}{20}

\newcommand{\statPCOpusVersions}{five}
\newcommand{\statPCSonnetVersions}{three}
\newcommand{\statPCOpusTrailers}{eight}
\newcommand{\statPCSonnetTrailers}{four}
\newcommand{\statPCVersionSpan}{generations~4.5 through~5 in both families}

\newcommand{\statPCDecayPct}{70.0\%}
\newcommand{\statPCMedianRatio}{0.49}
\newcommand{\statPCPermCI}{$[0.400, \, 0.793]$}
\newcommand{\statPCPermP}{{<}0.0001}
\newcommand{\statPCPermN}{70}

\newcommand{\statPCTauOpus}{0.55}
\newcommand{\statPCTauSonnet}{0.55}
\newcommand{\statPCNOpus}{158}
\newcommand{\statPCNSonnet}{177}
\newcommand{\statPCGap}{{-}0.003}
\newcommand{\statPCGapCI}{$[{-}0.075, \, {+}0.069]$}
\newcommand{\statPCGapPermP}{0.93}
\newcommand{\statPCGapMWUP}{0.91}
\newcommand{\statPCEvaluePoint}{1.10}
\newcommand{\statPCEvalueCI}{1.00}
\newcommand{\statPCStratGap}{{+}0.002}
\newcommand{\statPCStratP}{0.96}
\newcommand{\statPCStrata}{4}
\newcommand{\statPCChurnRhoOpus}{0.80}
\newcommand{\statPCChurnRhoSonnet}{0.75}
\newcommand{\statPCChurnGap}{0.043}
\newcommand{\statPCChurnGapP}{0.54}
\newcommand{\statPCChurnGapN}{123}
\newcommand{\statPCMultilevelCoef}{{+}0.009}
\newcommand{\statPCMultilevelCI}{$[{-}0.070, \, {+}0.088]$}
\newcommand{\statPCMultilevelP}{0.82}
\newcommand{\statPCVarRepo}{0.132}
\newcommand{\statPCVarDev}{0.062}

\newcommand{\statPCRho}{0.77}
\newcommand{\statPCRhoCI}{$[0.65, \, 0.92]$}
\newcommand{\statPCSigma}{2.39}
\newcommand{\statPCSigmaCI}{$[2.24, \, 2.54]$}
\newcommand{\statPCPiPure}{0.071}
\newcommand{\statPCTrainN}{137}
\newcommand{\statPCTestN}{60}
\newcommand{\statPCKSHeldout}{0.18}
\newcommand{\statPCKSHeldoutP}{0.045}
\newcommand{\statPCKSHeldoutSeedRange}{$[0.0003, \, 0.44]$, rejecting in 6/8 seeds}
\newcommand{\statPCKSInsample}{0.21}
\newcommand{\statPCKSInsampleP}{{<}0.0001}

\newcommand{\statPCMedianLen}{1}
\newcommand{\statPCMeanLen}{3.0}
\newcommand{\statPCMaxLen}{45}
\newcommand{\statPCSingleCommitPct}{50.9\%}

\newcommand{\statPCCtrlSince}{2021-11-08}
\newcommand{\statPCCtrlUntil}{2022-09-19}
\newcommand{\statPCCtrlRepos}{122}
\newcommand{\statPCCtrlDevelopers}{62}
\newcommand{\statPCCtrlSessions}{9{,}395}
\newcommand{\statPCCtrlCommits}{17{,}694}
\newcommand{\statPCCtrlTau}{0.39}
\newcommand{\statPCCtrlRho}{0.86}
\newcommand{\statPCCtrlRhoCI}{$[0.82, \, 0.91]$}
\newcommand{\statPCCtrlSingleCommitPct}{63.9\%}
\newcommand{\statPCCtrlTauMulti}{0.435}
\newcommand{\statPCTauMulti}{0.593}
\newcommand{\statPCCtrlGapOpus}{{-}0.160}
\newcommand{\statPCCtrlGapOpusP}{{<}0.0002}
\newcommand{\statPCCtrlGapSonnet}{{-}0.163}
\newcommand{\statPCCtrlGapSonnetP}{{<}0.0002}
\newcommand{\statPCCtrlMLOpusCoef}{{+}0.145}
\newcommand{\statPCCtrlMLOpusP}{0.00082}
\newcommand{\statPCCtrlMLSonnetCoef}{{+}0.154}
\newcommand{\statPCCtrlMLSonnetP}{0.00026}

\newcommand{\statPCReleasedDevelopers}{71}
\newcommand{\statPCReleasedDatasets}{four}

\newcommand{\statPCHaikuSessions}{9}
\newcommand{\statPCUnresolvedSessions}{5}
\newcommand{\statPCTierExcluded}{14}
\newcommand{\statPCPermReps}{9{,}999}

\newcommand{\statPCWithinSince}{2025-09-18}
\newcommand{\statPCWithinUntil}{2026-12-31}
\newcommand{\statPCWithinAISessions}{487}
\newcommand{\statPCWithinAICommits}{1{,}493}
\newcommand{\statPCWithinNonAISessions}{1{,}874}
\newcommand{\statPCWithinNonAICommits}{4{,}914}
\newcommand{\statPCWithinDevelopers}{22}
\newcommand{\statPCWithinTauAI}{0.540}
\newcommand{\statPCWithinTauNonAI}{0.419}
\newcommand{\statPCWithinFloorB}{15}
\newcommand{\statPCWithinFloorC}{30}
\newcommand{\statPCWithinFloorD}{50}
\newcommand{\statPCWithinFZeroPairs}{21}
\newcommand{\statPCWithinFZeroHigher}{14}
\newcommand{\statPCWithinFZeroSignP}{0.19}
\newcommand{\statPCWithinFZeroWilcoxP}{0.022}
\newcommand{\statPCWithinFZeroMedian}{{+}0.127}
\newcommand{\statPCWithinFBPairs}{14}
\newcommand{\statPCWithinFBHigher}{10}
\newcommand{\statPCWithinFBSignP}{0.18}
\newcommand{\statPCWithinFBWilcoxP}{0.011}
\newcommand{\statPCWithinFBMedian}{{+}0.094}
\newcommand{\statPCWithinFCPairs}{9}
\newcommand{\statPCWithinFCHigher}{6}
\newcommand{\statPCWithinFCSignP}{0.51}
\newcommand{\statPCWithinFCWilcoxP}{0.098}
\newcommand{\statPCWithinFCMedian}{{+}0.061}
\newcommand{\statPCWithinFDPairs}{7}
\newcommand{\statPCWithinFDHigher}{5}
\newcommand{\statPCWithinFDSignP}{0.45}
\newcommand{\statPCWithinFDWilcoxP}{0.11}
\newcommand{\statPCWithinFDMedian}{{+}0.127}
\newcommand{\statPCWithinFBDeltaMin}{{-}0.078}
\newcommand{\statPCWithinFBDeltaMax}{{+}0.332}
\newcommand{\statPCWithinTopVolNonAI}{1{,}340}
\newcommand{\statPCWithinTopVolAI}{444}
\newcommand{\statPCWithinTopVolDelta}{{+}0.127}

\newcommand{\statPCPairedPairs}{12}
\newcommand{\statPCPairedHigher}{9}
\newcommand{\statPCPairedSignP}{0.15}
\newcommand{\statPCPairedWilcoxP}{0.034}
\newcommand{\statPCPairedMedian}{{+}0.117}
\newcommand{\statPCPairedFloorD}{50}
\newcommand{\statPCPairedFDPairs}{4}
\newcommand{\statPCPairedFDHigher}{2}
\newcommand{\statPCPairedFDSignP}{1}
\newcommand{\statPCPairedFDMedian}{{+}0.017}
\newcommand{\statPCPairedTopVolCtrl}{322}
\newcommand{\statPCPairedTopVolAI}{257}
\newcommand{\statPCPairedTopVolDelta}{{+}0.320}
\newcommand{\statPCPairedMaxRatio}{33}

\newcommand{\statPCCapSessions}{343}
\newcommand{\statPCCapCells}{8}
\newcommand{\statPCCapSteps}{7}
\newcommand{\statPCCapCellsOpus}{four}
\newcommand{\statPCCapCellsSonnet}{three}
\newcommand{\statPCCapCellsHaiku}{one}
\newcommand{\statPCCapAbsentOpusVersion}{5}
\newcommand{\statPCCapFamilyRho}{{+}0.012}
\newcommand{\statPCCapFamilyP}{0.82}
\newcommand{\statPCCapFamilyCoef}{{+}0.003}
\newcommand{\statPCCapFamilyCoefP}{0.75}
\newcommand{\statPCCapFamilyCI}{$[{-}0.016, \, {+}0.022]$}
\newcommand{\statPCCapVersionRho}{{-}0.008}
\newcommand{\statPCCapVersionP}{0.88}
\newcommand{\statPCCapVersionCoef}{{-}0.011}
\newcommand{\statPCCapVersionCoefP}{0.38}
\newcommand{\statPCCapVersionCI}{$[{-}0.036, \, {+}0.014]$}
\newcommand{\statPCCapSwing}{0.155}
\newcommand{\statPCCapTauSonnetMid}{0.569}
\newcommand{\statPCCapTauSonnetNew}{0.446}
\newcommand{\statPCCapTauHaiku}{0.352}
\newcommand{\statPCCapNHaiku}{8}

\title{Audit the Scaffold, Not the Checkpoint:\\
A Stationarity Dichotomy for Recursive Self-Improvement in Agentic Coding}

\author{
  Sebastian Bobadilla-Suarez\thanks{Corresponding author.} \\
  OnCorps \\
  \texttt{sebastian.suarez@oncorps.io} \\
  \And
  Bob Suh \\
  OnCorps \\
  \texttt{bob@oncorps.io} \\
  \And
  Ryan Fortin \\
  OnCorps \\
  \texttt{ryan@oncorps.io} \\
}

\begin{document}
\maketitle

\begin{abstract}
An auditor who checks whether a system's weights are frozen is checking the wrong thing. Our \emph{stationarity dichotomy} says that iterative self-modification hits strict diminishing returns whenever the agent's reachable set of edits stays fixed, and can escape only if that set expands. Rewriting scaffolding---tools, verifiers, decomposition---expands what an agent reaches without touching a weight, so frozen weights buy an \emph{eventual} ceiling but no stationarity along the way. The criterion also separates three regimes usually merged: search within a fixed class, test-time \emph{training} that raises the ceiling itself, and scaffold rewriting between them. Audit the scaffold, not the checkpoint.

The same ceiling binds sideways. Best-of-$k$ orchestration realizes the best worker's ceiling \emph{exactly}: width buys rate, not budget. Re-consulting a fixed pool has a horizon computable in advance, decided by the pool alone, and the one arrangement that would beat it, a weighted vote, needs diversity real workers lack: on \statPBOverlapK\ same-family workers the failure overlap sits at its maximum, and a majority fails \statPBOverlapViolate\ of tasks.

We obtain the criterion by reading refinement as gradient boosting on the \emph{residual error} between draft and target, a patch or git diff, and then measuring where that reading breaks: patches compose instead of standing beside each other to be voted on, and failures overlap. What we measure is saturation. Per-round improvement decays toward zero on SWE-bench, and churn decays geometrically across \statPCSessions\ production sessions, a shape \emph{shared} with a pre-AI human baseline that establishes the regime without identifying its cause. Both breaks are engineering choices rather than laws about code, so together they specify a harness worth building.
\end{abstract}

\section{Introduction}
\label{sec:intro}

Individual LLM calls remain \emph{weak} in a precise sense: they make errors and struggle with complex, multi-step reasoning. A growing practice addresses this by deploying \emph{multiple agents} that iteratively generate, critique, and refine code---\emph{agentic coding}. The stakes of formalizing that practice are no longer speculative. Anthropic reports that more than 80\% of the code merged into its own production codebase was authored by Claude as of May 2026, up from low single digits before its coding-agent preview launched in February 2025. Engineers increasingly supply the goal rather than the method \citep{anthropic2026aibuildsitself}. At that scale, agentic coding also becomes a tangible microcosm for the mechanics of AI recursive self-improvement (RSI) \citep{good1965speculations, bostrom2014superintelligence}. Yet the practice lacks a rigorous mathematical foundation: how many agents to use, how to formulate patches, and when to stop refining are all largely ad hoc. This paper asks: \emph{when does iterative self-modification run out of room, and is a frozen checkpoint evidence that it has?}

The answer is a structural dichotomy (\S\ref{sec:theory}) and not a tunable threshold: RSI saturates whenever the agent's reachable set of edits stays fixed, and can run away only if that set keeps expanding. So \emph{frozen weights are not by themselves sufficient} for the bounded regime, and the object to audit is the scaffold (\S\ref{sec:discussion}); whether real systems occupy that bounded regime is a statistical question Parts~B--C test. We reach the criterion by reading refinement as gradient boosting on the residual error, then measuring where the reading breaks. Each agent \emph{refines} an existing draft, predicting the residual between current code and target; it does not \emph{generate} one from scratch, and that is what makes the process functional gradient descent. Deployed loops then fail two of boosting's defining preconditions, and we measure those failures instead of assuming them away (\S\ref{sec:framework}). Two scope limits up front. The criterion is checkable only through a scaffold-immutability proxy, not a general estimator for the hypothesis class, which we lack; and the production dynamics are \emph{consistent with} the bounded regime without establishing that a fixed hypothesis class \emph{causes} them (\S\ref{sec:partC}). Within those limits, what follows is a criterion, not a convergence rate.

\paragraph{Contributions.}
\begin{itemize}[leftmargin=*,itemsep=1pt]
    \item \textbf{Audit the scaffold, not the checkpoint.} The \emph{refinement game} yields monotone improvement with saturation (Thm.~\ref{thm:refinement}) and a \emph{stationarity dichotomy} (Prop.~\ref{prop:dichotomy}): strict diminishing returns whenever the reachable edit set stays fixed. Since rewriting scaffolding expands that set without touching a weight, frozen weights buy an eventual ceiling but no stationarity along the way (Cor.~\ref{cor:frozen-ceiling}). Expansion is necessary but not sufficient for escape, so the diagnostic over-flags by construction. Saturation is measured on both sides: on SWE-bench, and across \statPCSessions\ production sessions against a \statPCCtrlSessions-session pre-AI baseline (\S\ref{sec:partB}, \S\ref{sec:partC}).
    \item \textbf{The same ceiling binds sideways.} Best-of-$k$ orchestration realizes the best worker's ceiling \emph{exactly} (Prop.~\ref{prop:orchestration}), and the weighted vote that would beat it needs diversity real workers lack. On our pool---same-family workers, so this settles only the pessimistic half---failure overlap sits at its maximum: some task defeats every worker, and a majority fails \statPBOverlapViolate\ of tasks, which by Prop.~\ref{prop:diversity} \emph{is} the vote's error. Re-consulting that pool is bounded separately, and \emph{computably}: either some nonnegative combination of it already fits the work, in which case the edge survives every round, or the edge hypothesis fails by round $\log(1/\epsilon_0)/2\gamma^2$, uniformly over the schedule, where $\epsilon_0$ is the best error any nonnegative combination achieves and $\gamma$ the per-round edge (Prop.~\ref{prop:pool-dichotomy}, \S\ref{sec:theory}, Appendix~\ref{app:orchestration}).
    \item \textbf{Where deployed loops break boosting.} They fail its preconditions in two places we locate and measure. Neither is a law about code; both are engineering choices, so read prescriptively they specify a harness---unbuilt, ours included---precisely enough to build from (\S\ref{sec:framework}, open problem~1).
\end{itemize}

\section{Related Work}
\label{sec:related}

AdaBoost \citep{freund1997decision} and its margin-based generalization \citep{schapire1998boosting, schapire2013explaining} underwrite our theoretical results. Boosting as functional gradient descent \citep{mason2000boosting, friedman2001greedy} underwrites the residual-as-patch mapping, and the specification-to-code map is an instance of structured prediction \citep{tsochantaridis2005large}. The ``intelligence explosion'' \citep{good1965speculations, bostrom2014superintelligence} and the G\"odel machine \citep{schmidhuber2003goedel} motivate the RSI reading. Code-level self-improvement is precisely the regime our frozen-weight ceiling addresses: STOP \citep{zelikman2023stop}, FunSearch \citep{romeraparedes2024funsearch}, and the Darwin G\"odel Machine \citep{zhang2025darwin}, which rewrites its own scaffold with frozen weights. The harness itself is increasingly argued to be a separately-optimizable layer \citep{he2026harness}. Concurrent work pushes that axis to its limit: \citet{xue2026rethinking} let a frontier model compose its own improvement process online, which is frozen-weight scaffold expansion in close to its purest form, and find it beats staged and reflective-evolutionary pipelines \citep{agrawal2025gepa} with a strong optimizer but \emph{loses} with a medium one. Under Cor.~\ref{cor:frozen-ceiling} that is the expected shape, since scaffold expansion pays only as far as the frozen checkpoint allows. A recent survey taxonomizes RSI along the same axis, convergent \emph{bounded self-refinement} vs.\ \emph{open-ended} RSI \citep{chen2026recursive}, and our dichotomy (Prop.~\ref{prop:dichotomy}) gives that split a structural cause. Empirical multi-agent and self-refinement systems, and the test-time-compute scaling literature whose diminishing returns our refinement game predicts from first principles, are surveyed in Appendix~\ref{app:related}.

One quantity needs separating from a nearby one, since our overlap result would otherwise read as known. Repeated sampling measures \emph{coverage}, whether \emph{any} of $k$ samples solves a task, and coverage keeps growing with $k$ \citep{brown2024monkeys}. That is a statement about an oracle verifier's reach: it presumes something can identify the good sample after the fact. Prop.~\ref{prop:diversity} governs the different quantity an orchestrator is left with when no such verifier exists: the \emph{majority vote's} error. And $m^\star{=}k$ says the tasks a vote cannot recover are exactly those every worker fails. Growing coverage and a failing vote are therefore consistent, and it is the second that binds verifier-free orchestration. Cross-family aggregation is where the optimistic half would be settled, and the existing cross-vendor aggregators \citep{wang2024mixtureofagents, jiang2023llmblender} report end-to-end gains without reporting the overlap statistic Prop.~\ref{prop:diversity} makes decisive (open problem~2).

The dichotomy also separates three cases the test-time-scaling literature often merges: inference compute searches within a class the weights and scaffold already fix; test-time \emph{training} \citep{sun2019testtime, hardt2023nearest, sun2024learning, akyurek2024testtime, hubotter2024efficiently} raises the ceiling itself, as does letting the model's own generated edits drive the update \citep{zweiger2025seal}; and between them sits this paper's case, rewriting the scaffold with weights frozen. That separation is ours, and it matters: an auditor who checks only whether weights are frozen catches test-time training and misses scaffold expansion entirely (Appendix~\ref{app:rsi}).

\section{Framework: Agentic Coding as Boosting}
\label{sec:framework}

A \emph{weak coding agent} predicts the \emph{residual error} between the current draft and the optimal target, naturally represented as a git diff. It does not generate the full file from scratch. Requiring the patch to improve the draft more often than not is stringent in code space: a random edit almost surely breaks a working program, so even a small positive edge already presupposes non-trivial program understanding, not mere pattern completion (Remark~\ref{rem:capability-threshold}). Two levels then come apart. The code \emph{artifact} evolves by \emph{sequential composition}: each patch applies on top of the current draft, so the process is order-dependent (the refinement game, Thm.~\ref{thm:refinement}). The boosting \emph{analysis} operates on an additive confidence margin $f(x) = \sum_t \alpha_t h_t(x)$, a weighted vote over specifications, and it is this margin, not the code bytes, that decays exponentially under the exponential loss (Thm.~\ref{thm:training-error}). Appendix~\ref{app:mapping} (Table~\ref{tab:mapping}) tabulates the mapping term by term, alongside the Protocol it implies.

\paragraph{Finding: deployed loops compose where boosting retains, and their failures overlap.} \emph{Structurally}, a refinement loop \emph{composes} patches ($R_T\circ\cdots\circ R_1$) while AdaBoost \emph{retains} all $T$ learners and combines them by weighted vote, so each patch subsumes its predecessor instead of standing beside it: there is no additive margin for an implicit $\alpha_t$ to weight, and no reweighted distribution $D_t$ is maintained (two further structural gaps, and the objection that task decomposition might realize the protocol implicitly, in Appendix~\ref{app:decomposition}). \emph{Empirically}, worker failures are too correlated for the one boosting-exact arrangement to pay: the tight overlap sits at its ceiling, $m^\star{=}k$, measured in \S\ref{sec:theory}.

\paragraph{Both breaks are engineering choices, so they specify a harness.} Algorithm~\ref{alg:boost} states what a loop would need for the transferred guarantees to hold by construction: retention of all $T$ proposals instead of an overwritten draft, an \emph{explicit distribution} $D_t$ over specifications reweighted toward those still failing, and diversity sufficient to push the majority-failure rate below one half. Prop.~\ref{prop:diversity} makes that last condition exact rather than merely sufficient (open problem~1, \S\ref{sec:discussion}). Weighted voting over a shared instance is already boosting-exact (\S\ref{sec:theory}); what deployed practice lacks is retention and diversity. So the measurement does not retire the mapping as a \emph{model} of agentic coding: it specifies the system that would satisfy it, one we do not build here and are not aware of anyone having built.

\paragraph{What the framing buys.} It supplies the \emph{vocabulary} (edge, weak-learning threshold, effective hypothesis class) that makes ``will this run away?'' structural rather than metaphorical; it names the \emph{invariant to track}, the reachable class and its ceiling; and measuring where it fails produced the build specification above, Prop.~\ref{prop:diversity}, and Prop.~\ref{prop:pool-dichotomy}, none of which the criterion alone would have suggested.

\section{Theoretical Analysis}
\label{sec:theory}

The algebraic steps below are machine-checked in Lean~4 (Mathlib, no \texttt{axiom}, no \texttt{sorry}). The check earns its keep concretely, by \emph{refuting} a conjectured strict inequality in Prop.~\ref{prop:orchestration}, and it cannot reach what matters most: (M1), (M2), and the mapping itself remain modeling assumptions no proof discharges (Remark~\ref{rem:lean-honesty}).\footnote{The formalization is a quality-control layer against AI-hallucinated derivations rather than a contribution: this manuscript was drafted with AI assistance (Responsible Use Statement).} Full proofs are in Appendix~\ref{app:proofs}. Let $\X$ be the space of specifications and $\Y$ the space of code artifacts (a draft $z$, or the optimal target $y$), with a distribution $\D$ over $\X\times\Y$, and a loss $L:\Y\times\Y\to\mathbb{R}_{\ge0}$ (Table~\ref{tab:mapping}: e.g.\ token/AST edit distance).

\begin{definition}[Weak Coding Agent]
\label{def:weak-agent}
A \emph{weak coding agent} proposes a patch via a \emph{refinement operator} $R_t:\Y\times\mathcal{A}\to\Y$, taking a draft $z$ and an edit $a\in\mathcal{A}$ to $z'=R_t(z,a)$; let $h_t(x)\in\{-1,+1\}$ indicate success on specification $x$, $h_t(x)=+1$ iff the draft returned matches the target, $L(y,z')=0$. Every label is $+1$ here, the target being solved by definition, so $\mathrm{err}_{\D}(h_t)=\Pr[h_t(x)={-}1]\le\tfrac12-\gamma_t$ and the appendix's generic $y_i$ reads as $+1$ throughout; $\Hyp$ is instantiated only in Theorem~\ref{thm:generalization}. This $\gamma_t$ is the edge on \emph{solving}, the one Thm.~\ref{thm:training-error} and Cor.~\ref{cor:convergence} require; Part~B measures the edge on \emph{improving} the previous draft, written $\tilde\gamma_t=\tfrac1N\sum_i\mathbf1[q_{i,t}>q_{i,t-1}]-\tfrac12$ on the pass fraction $q_{i,t}$ of task $i$ at round $t$, which scores a tie at ${-}\tfrac12$ under the strict $>$ (\S\ref{sec:partB}). The two need not agree, and which is reported shapes the headline.
\end{definition}

\paragraph{What survives when the Protocol's specifics are dropped.} The \emph{Agentic Boosting Protocol} (Def.~\ref{def:protocol}) is AdaBoost read over specifications, and three classical guarantees transfer verbatim under that reading (Thms.~\ref{thm:training-error}, \ref{thm:fsam}, \ref{thm:generalization}, Cor.~\ref{cor:convergence}). None is new and we rely on none: each needs a retained, reweighted ensemble that no deployed loop maintains, whatever its edge (Appendix~\ref{app:classical-statements}). The results below drop the Protocol's specifics entirely---real loops maintain neither $D_t$ nor $\alpha_t$---so we name the more general update they do perform and ask what guarantee survives. They hold with no mention of AdaBoost: Thm.~\ref{thm:refinement} and Prop.~\ref{prop:dichotomy} are boosting-\emph{independent}, and the framing supplies the vocabulary without supplying the necessity.

\begin{theorem}[Monotone Improvement under Refinement]
\label{thm:refinement}
Write $z_t=R_t(z_{t-1},a_t^*)$ for the draft after round $t$ at that round's best action $a_t^*$, and suppose each $R_t$ satisfies the improvement property $\mathbb{E}_{y\sim\D(\cdot\mid x)}[V(R_t(z,a_t^*))]\ge V(z)+\eta_t$ for quality function $V:\Y\to[0,1]$ and $\eta_t>0$. Then $\mathbb{E}[V(z_T)]\ge V(z_0)+\sum_{t=1}^T\eta_t$, and since $V\le1$, the total improvement saturates: $\sum_{t=1}^\infty\eta_t\le1-V(z_0)$.
\end{theorem}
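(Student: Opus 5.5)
The plan is induction on $T$ combined with the tower property of conditional expectation. Then I bound the telescoped total by the range of $V$.

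The first step is to fix how the hypothesis is read. The improvement property is stated pointwise in the draft $z$: for every $z\in\Y$, the expected quality after applying $R_t$ at $a_t^*$ is at least $V(z)+\eta_t$. Since $z_{t-1}$ is random, I would condition on the history $\mathcal F_{t-1}$, the $\sigma$-algebra generated by $z_0,\dots,z_{t-1}$ and the specification $x$. Applying the pointwise property with $z=z_{t-1}$ gives
\[
\mathbb{E}[V(z_t)\mid\mathcal F_{t-1}]\;\ge\;V(z_{t-1})+\eta_t .
\]
Taking total expectations then yields $\mathbb{E}[V(z_t)]\ge\mathbb{E}[V(z_{t-1})]+\eta_t$.

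The second step is to chain these one-step inequalities from $t=1$ to $T$, by induction or telescoping. With $z_0$ fixed, this gives $\mathbb{E}[V(z_T)]\ge V(z_0)+\sum_{t=1}^T\eta_t$. If $z_0$ is itself random, the same argument gives the bound with $\mathbb{E}[V(z_0)]$ in place of $V(z_0)$.

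The third step is saturation. Since $V\le 1$, we have $\mathbb{E}[V(z_T)]\le 1$. Hence $\sum_{t=1}^T\eta_t\le 1-V(z_0)$ for every $T$. The partial sums are nondecreasing because each $\eta_t>0$, and they are bounded above, so the series converges and $\sum_{t=1}^\infty\eta_t\le 1-V(z_0)$. As a consequence, $\eta_t\to0$, and for every $\epsilon>0$ only finitely many rounds can have $\eta_t\ge\epsilon$; in fact at most $(1-V(z_0))/\epsilon$ of them.

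The main obstacle is the first step, which is measure-theoretic bookkeeping rather than algebra. The statement writes the expectation over $y\sim\D(\cdot\mid x)$ while $z$ enters as a fixed argument. I need to justify substituting the random draft $z_{t-1}$, which may depend on earlier randomness correlated with $y$. I would first check whether the paper intends $a_t^*$ and $R_t$ to be measurable in the history and whether $V$ implicitly depends on $y$ through $L$. If so, I would read the hypothesis as holding conditionally on $\mathcal F_{t-1}$, which is the weakest assumption that makes the tower step valid. Everything after that is routine: telescoping, boundedness of $V$, and monotone convergence of a bounded increasing sequence of partial sums.
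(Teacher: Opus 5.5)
Your proposal is correct and matches the paper's proof: it telescopes the one-step inequalities $\mathbb{E}[V(z_t)]-\mathbb{E}[V(z_{t-1})]\ge\eta_t$, then uses $\mathbb{E}[V(z_T)]\le 1$ for every $T$ to bound the partial sums and pass to the infinite series. The only difference is that you spell out the conditioning (tower-property) step that justifies each one-step inequality when the draft is random. The paper's telescoping leaves that step implicit, and its Lean version simply assumes linearity and monotonicity of expectation as hypotheses.
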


Thm.~\ref{thm:refinement} already implies diminishing returns (Cor.~\ref{cor:probabilistic-refinement} relaxes its expectation to a per-round tail bound), but leaves open what happens if the ceiling itself is not fixed; the following makes that condition precise---the paper's central checkable criterion.

\begin{proposition}[Stationarity Dichotomy]
\label{prop:dichotomy}
Consider the refinement game of Thm.~\ref{thm:refinement}. (i)~\textbf{Fixed ceiling $\Rightarrow$ saturation.} If $V(z)\le B$ for all $z$, then $\sum_{t=1}^\infty\eta_t\le B-\mathbb{E}[V(z_0)]$, forcing $\eta_t\to0$: strict diminishing returns. (ii)~\textbf{Sustained edge $\Rightarrow$ divergence.} If the effective hypothesis class expands enough to sustain a uniform edge $\eta_t\ge c>0$, then $\mathbb{E}[V(z_T)]$ grows without bound and no fixed ceiling can hold.
\end{proposition}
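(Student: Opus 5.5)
Both parts follow from the telescoping bound already inside Thm.~\ref{thm:refinement}, with the ceiling $1$ replaced by a general $B$ in (i) and removed altogether in (ii). The first step is the per-round inequality. Taking expectations of the improvement property over the randomness of $z_{t-1}$ gives, by the tower property, $\mathbb{E}[V(z_t)]\ge\mathbb{E}[V(z_{t-1})]+\eta_t$. Summing from $t=1$ to $T$ then yields
\begin{equation*}
\mathbb{E}[V(z_T)]\ \ge\ \mathbb{E}[V(z_0)]+\sum_{t=1}^T\eta_t .
\end{equation*}
This is exactly the first conclusion of Thm.~\ref{thm:refinement}, and we reuse it. The expectation on $V(z_0)$ covers the case of a random initial draft.

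\textbf{Part (i).} If $V(z)\le B$ for all $z$, then $\mathbb{E}[V(z_T)]\le B$, so $\sum_{t=1}^T\eta_t\le B-\mathbb{E}[V(z_0)]$ for every $T$. The partial sums are nondecreasing because $\eta_t>0$, and they are bounded, so the series converges with the stated bound. Convergence of a series of positive terms forces $\eta_t\to0$, since its terms must vanish. Applying the same argument to the tails gives a stronger statement: $\sum_{t>T}\eta_t\le B-\mathbb{E}[V(z_T)]$. So the improvement still available after round $T$ is bounded by the remaining headroom, which is the quantitative form of ``strict diminishing returns.''

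\textbf{Part (ii)} is where the main obstacle lies, because Thm.~\ref{thm:refinement} assumes $V\le1$. A sustained edge $\eta_t\ge c$ would give $\mathbb{E}[V(z_T)]\ge V(z_0)+cT>1$ once $T>1/c$, which is a contradiction. So (ii) must be read with $V$ taking values in an unbounded range: the effective hypothesis class grows, and each round's ceiling $B_t$ is allowed to increase, with $V$ valued in $[0,\infty)$ and not normalized. Under that reading the telescoped inequality gives $\mathbb{E}[V(z_T)]\ge\mathbb{E}[V(z_0)]+cT\to\infty$. Then any fixed ceiling $B$ is contradicted by choosing $T>(B-\mathbb{E}[V(z_0)])/c$. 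This is the contrapositive of (i): a fixed bound and a uniform edge cannot coexist.

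So the proof has two ingredients: telescoping plus monotone convergence for (i), and a linear lower bound for (ii). The only real work is stating the domain of $V$ carefully enough that (ii) is not vacuous. I would also note that the same argument needs only $\liminf\eta_t>0$, or indeed only $\sum_t\eta_t=\infty$. This makes precise why expansion of the class is necessary but not sufficient: growth of the class permits, but does not guarantee, a divergent series.
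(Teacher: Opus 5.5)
Your proof is correct and follows the paper's own argument: telescope the per-round improvement into $\mathbb{E}[V(z_T)]\ge\mathbb{E}[V(z_0)]+\sum_{t=1}^T\eta_t$, bound the left side by $B$ for part~(i), and for part~(ii) use the linear lower bound with $T>(B-\mathbb{E}[V(z_0)])/c$ (the Archimedean step). Your remark that part~(ii) must drop the $V\le1$ normalization of Thm.~\ref{thm:refinement} matches the paper's note that the telescoping lower bound uses only the per-round improvement, and your explicit passage from bounded partial sums of positive terms to $\eta_t\to0$ spells out a step the paper leaves implicit.
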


Both parts are formally verified in Lean (\texttt{Proposition8.lean}) and are immediate consequences of Thm.~\ref{thm:refinement}. Part~(i) is the stationary regime a bounded quality metric enforces; part~(ii) is what an unbounded ``intelligence explosion'' would require.

The proofs are elementary, a bounded-monotone-sequence argument and the Archimedean property, and that is the point. The claim is not that saturation was hard to prove but that \emph{the reachable class is the right thing to track}: posed that way the answer takes two lines, and the difficulty moves to the modelling assumptions, above all a bounded quality scale, which part~(i) needs and a genuinely open-ended objective would deny (Appendix~\ref{app:bounded-metrics}). The safety-relevant instantiation sets the class to what fixed weights can reach: can reachable edits expand without changing a weight?

\begin{corollary}[Frozen-Weight Ceiling]
\label{cor:frozen-ceiling}
Model a self-improving system as $\mathcal{S}=(W,C)$, frozen weights $W$ and mutable scaffold $C$. Assume (M1)~\emph{class expandability}: the effective hypothesis class $\Hyp_t(C)$ can expand under a scaffold rewrite without altering $W$; and (M2)~\emph{ultimate ceiling}: $V(z)\le V^*(W)$ for all $z$ reachable by any scaffold the model can construct. Then (i)~the system saturates at a value no greater than $V^*(W)$ (Prop.~\ref{prop:dichotomy}(i) with $B=V^*(W)$); (ii)~escaping this ceiling requires raising $V^*(W)$ itself---modifying weights or architecture---not only rewriting the scaffold (contrapositive of Prop.~\ref{prop:dichotomy}(ii)).
\end{corollary}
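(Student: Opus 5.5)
Both parts are direct specializations of Prop.~\ref{prop:dichotomy}. The real work is checking that (M1) and (M2) make its hypotheses hold for the scaffold-indexed refinement game.

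First I fix the game. The system $\mathcal{S}=(W,C_t)$ runs the refinement game of Thm.~\ref{thm:refinement}. At round $t$ the operator $R_t$ and the best action $a_t^*$ may depend on the current scaffold $C_t$, and hence on the class $\Hyp_t(C_t)$, while $W$ stays fixed. Scaffold rewrites are allowed, so the class may expand between rounds as (M1) permits. Every draft $z_t$ is therefore produced by some scaffold the model constructed. (M2) then gives $V(z_t)\le V^*(W)$ pointwise along every trajectory, and taking expectations gives $\mathbb{E}[V(z_T)]\le V^*(W)$ for all $T$.

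For part~(i) I apply Prop.~\ref{prop:dichotomy}(i) with $B=V^*(W)$. Its proof only uses the bound on the realized drafts, not on all of $\Y$: combining $\mathbb{E}[V(z_T)]\ge \mathbb{E}[V(z_0)]+\sum_{t\le T}\eta_t$ from Thm.~\ref{thm:refinement} with the upper bound gives $\sum_{t\le T}\eta_t\le V^*(W)-\mathbb{E}[V(z_0)]$ for every $T$. The partial sums are nonnegative and nondecreasing, so the series converges and $\eta_t\to0$. The sequence $\mathbb{E}[V(z_T)]$ has a limit or limsup that is at most $V^*(W)$, which is the saturation claim. Importantly, (M1) plays no role in this bound. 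It only explains why the per-round ceiling can move, which is why the ceiling is called \emph{eventual} and not stationary.

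For part~(ii) I argue by contrapositive of Prop.~\ref{prop:dichotomy}(ii). Suppose the system escapes the ceiling, meaning either $\mathbb{E}[V(z_T)]>V^*(W)$ for some $T$ or a uniform edge $\eta_t\ge c>0$ is sustained for all rounds. The first case directly contradicts (M2) unless some draft was reached outside the set of scaffold-constructible edits under $W$. The second case gives $\mathbb{E}[V(z_T)]\ge V(z_0)+cT$, and by the Archimedean property this eventually exceeds $V^*(W)$, again contradicting (M2). So any escape requires the premise of (M2) to fail for the system as modified, that is, $V^*(W)$ must change. Since $V^*(\cdot)$ depends only on the weights and architecture, that means modifying $W$ or the architecture; rewriting $C$ alone never suffices.

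I expect the main obstacle to be this last step, not the arithmetic. The statement ``escaping requires raising $V^*(W)$'' is only meaningful once $V^*$ is read as a function of the non-scaffold components. I would make this explicit by defining $V^*(W)$ as the supremum of $V$ over drafts reachable by any scaffold under $W$, which turns (M2) into a definition. Under that reading the contrapositive is immediate, and the modelling content sits entirely in the assumption that this supremum is finite and $\le1$.
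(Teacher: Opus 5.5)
Your proof is correct and follows the paper's: part~(i) is Prop.~\ref{prop:dichotomy}(i) at $B=V^*(W)$, and part~(ii) is the contrapositive of Prop.~\ref{prop:dichotomy}(ii) via the Archimedean property. Your remark that only the realized drafts need the bound usefully reconciles (M2)'s reachable-set quantifier with that proposition's ``for all $z$'' hypothesis. The one point of slack is your ``limit or limsup'': each round adds at least $\eta_t>0$ in expectation, so $\mathbb{E}[V(z_t)]$ is nondecreasing and bounded, and monotone convergence gives an actual limit $L\le V^*(W)$, which is how the paper closes part~(i).
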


Freezing $W$ therefore does \emph{not} by itself place a system in the safe, saturating regime, since (M1) permits $\Hyp_t(C)$ to expand while $W$ stays fixed; frozen weights guarantee only the \emph{eventual} ceiling of part~(i), not stationary behavior along the way. That asymmetry is the load-bearing content: given (M2), part~(ii) is close to definitional, and neither (M1) nor (M2) is a theorem. Both are explicit modeling assumptions (Remark~\ref{rem:lean-honesty}). The criterion also runs one way only, since Prop.~\ref{prop:dichotomy}(ii) needs a \emph{uniform} edge $\eta_t\ge c>0$: a scaffold that expands while its edge shrinks saturates anyway. Self-expansion is thus necessary for escape but not sufficient, and \emph{when} the escape clause bites is open. Appendix~\ref{app:rsi} works one instance, an agent whose test-isolation script persists across rounds, unlocking patches the previous configuration could not propose and so enlarging $\Hyp_t(C)$ with $W$ untouched. It also states where that boundary stops being sharp.

\paragraph{Orchestration: the ceiling along the width axis.} The ceiling also binds when capability is added \emph{sideways}, which is the case an orchestrator faces: let $k$ frozen workers run under scaffolds $C_1,\dots,C_k$ (distinct prompts, tools, or specializations, possibly sharing weights), with $V^*(W,C_i):=\sup\{V(z):z\text{ reachable under }C_i\}$ worker $i$'s tight ceiling.

\begin{proposition}[Orchestration Selects a Ceiling; It Does Not Raise One]
\label{prop:orchestration}
Best-of-$k$ selection attains $V^*_{\mathrm{orch}}=\max_i V^*(W,C_i)$ \emph{exactly}---an unconditional equality, holding even when no single worker's reachable class contains the union of all of them. Consulting all $k$ workers every round of the refinement game gives $\mathbb{E}[V(z_T)]\ge V(z_0)+\sum_{t=1}^T\max_i\eta_t^{(i)}$, dominating every worker's depth-only guarantee, yet since $V\le1$ the total improvement still saturates at $1-V(z_0)$: width buys rate, not budget. (The width half needs one hypothesis beyond Thm.~\ref{thm:refinement}'s---monotonicity of the inner action-expectation, since selection compares two integrands under the same expectation; Prop.~\ref{prop:width-refinement} states it and the formalization flags it.)
\end{proposition}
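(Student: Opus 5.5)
The proof splits into three parts: the exact equality $V^*_{\mathrm{orch}}=\max_i V^*(W,C_i)$, the width-refinement lower bound, and the saturation cap. Each reduces to elementary order facts.

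\emph{Equality.} I will define best-of-$k$ selection as the orchestrator returning, among the $k$ candidate artifacts, the one of highest $V$. The reachable set of the orchestrated system is then exactly $\bigcup_i \mathcal{R}(C_i)$, where $\mathcal{R}(C_i)$ is the set of artifacts reachable under $C_i$: selection picks from worker outputs and composes nothing. The supremum of $V$ over a finite union is the maximum of the suprema, so $V^*_{\mathrm{orch}}=\sup_{\bigcup_i\mathcal{R}(C_i)}V=\max_i\sup_{\mathcal{R}(C_i)}V=\max_i V^*(W,C_i)$. For the $\ge$ direction, each $\mathcal{R}(C_i)$ sits inside the union. For the $\le$ direction, every element of the union lies in some $\mathcal{R}(C_i)$ and so is bounded by that worker's ceiling. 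This argument never needs one worker's class to contain the others, which is why the equality is unconditional. It also explains why a strict inequality $V^*_{\mathrm{orch}}>\max_i V^*(W,C_i)$ cannot hold: selection adds no new artifacts.

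\emph{Width rate.} Here I follow Thm.~\ref{thm:refinement}. At round $t$, each worker $i$ proposes $R^{(i)}_t(z_{t-1},a^{(i)*}_t)$ with improvement $\eta^{(i)}_t$. The orchestrator keeps the candidate with the larger inner expectation of $V$. Using the extra hypothesis of Prop.~\ref{prop:width-refinement}, namely monotonicity of the inner action-expectation, the selected candidate's conditional expectation dominates each worker's. It is therefore at least $V(z_{t-1})+\max_i\eta^{(i)}_t$. Taking outer expectations and using the tower property gives $\mathbb{E}[V(z_t)]\ge\mathbb{E}[V(z_{t-1})]+\max_i\eta_t^{(i)}$. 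Telescoping from $t=1$ to $T$ yields the stated bound. Since $\max_i\eta_t^{(i)}\ge\eta_t^{(j)}$ for every $j$, this bound dominates each worker's depth-only guarantee.

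\emph{Saturation.} Because $V\le1$, the left side of the bound is at most $1$. Hence $\sum_{t=1}^T\max_i\eta^{(i)}_t\le1-V(z_0)$ for every $T$. Letting $T\to\infty$ gives the budget cap, exactly as in Prop.~\ref{prop:dichotomy}(i) with $B=1$. A nonnegative series bounded in this way has terms tending to zero, so $\max_i\eta^{(i)}_t\to0$: width buys rate, not budget.

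\emph{Main obstacle.} The hard step is the width half. Selection is done per realization, so I must justify comparing the two integrands under the same expectation. A $\max$ of expectations is not an expectation of a $\max$, and the argmax is chosen after the draft is seen. I will first state selection as a measurable choice based on the conditional expectation given $z_{t-1}$, and then apply the monotonicity hypothesis pointwise. If that fails, I will fall back to requiring that each $\eta^{(i)}_t$ be a uniform lower bound over drafts. In that case the comparison holds pointwise in $z$ before integrating.
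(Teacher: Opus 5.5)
Your proposal is correct and follows the paper's route. The equality is the commutation of a supremum with a finite maximum; your union-of-reachable-sets phrasing is the same fact the paper writes as $\sup_x\max_i V(h_i(x))=\max_i\sup_x V(h_i(x))$. The width half is pointwise domination of the selected output over each worker, then monotonicity of the inner expectation, then Thm.~\ref{thm:refinement}'s telescoping and its $V\le1$ saturation.

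The obstacle you flag does not arise once selection is by realized quality, as the statement intends (your ``larger inner expectation'' rule would leave the monotonicity hypothesis unused and change the orchestrator). The paper fixes the deterministic index $i^\star=\arg\max_i\eta_t^{(i)}$ before any randomness, so only $\mathbb{E}[V(R^{\mathrm{orch}}_t)]\ge\mathbb{E}[V(R^{(i^\star)}_t)]$ is needed and no max is exchanged with an expectation; your fallback is already part of the improvement hypothesis, which holds at every draft.
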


Orchestration therefore selects which existing ceiling to realize and reaches it sooner; it does not create a higher one. That makes ``reach for capability before orchestration'' a consequence rather than a heuristic. There is no strict regime for hard selection: both halves are Lean-checked (Appendix~\ref{app:orchestration}). We had conjectured a strict inequality, and Lean supplied the counterexample we had missed: two disjoint singleton classes with equal values.

\paragraph{The diversity a vote would need is absent on real workers.} Weighted \emph{voting} is the one arrangement that escapes the ceiling, being boosting-exact, since workers answer the same instance and stand beside each other, so Thm.~\ref{thm:training-error} transfers verbatim with $t$ read as ``worker'' (Prop.~\ref{prop:soft-aggregation}). What it needs is diversity, and the requirement is an exact condition on a single integer.

\begin{proposition}[Bounded Overlap Characterizes a Correct Vote]
\label{prop:diversity}
Let $k$ workers with $\pm1$ outputs be combined by the \emph{unweighted} majority vote, and let $m_i:=\#\{t<k:\,y_ih_t(x_i)={-}1\}$ count the workers that fail point $i$, with $m^\star:=\max_i m_i$ the \emph{tight overlap}. Then the vote has zero training error \emph{iff} $2m_i<k$ for every $i$---equivalently, iff $m^\star{<}k/2$. The condition is sharp in two senses: a single point at $2m_i{=}k$ already makes the error nonzero, and if \emph{every} point sits there the error is $1$.
\end{proposition}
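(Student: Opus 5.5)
The proof reduces to a pointwise analysis of the vote's sign. Fix a point $i$ and define the vote margin $F(x_i)=\sum_{t<k}h_t(x_i)$. Since each $h_t(x_i)\in\{-1,+1\}$, the summand $y_ih_t(x_i)$ equals $+1$ for the $k-m_i$ workers that succeed on $i$ and $-1$ for the $m_i$ that fail. This gives the identity
\[
y_iF(x_i)=(k-m_i)-m_i=k-2m_i .
\]
The first step is to state the convention for the unweighted majority vote $H(x)=\operatorname{sign}(F(x))$, namely that a tie $F=0$ counts as an error. This is the convention under which the sharpness clause ("a single point at $2m_i=k$ already makes the error nonzero") is literally true, and it matches the strict-$>$ tie scoring of Def.~\ref{def:weak-agent}. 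Under it, point $i$ is classified correctly iff $y_iF(x_i)>0$, i.e.\ iff $2m_i<k$.

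Next, write the training error as an average of indicators,
\[
\mathrm{err}=\frac1N\sum_i\mathbf1[\,2m_i\ge k\,].
\]
This is zero iff every indicator vanishes, i.e.\ iff $2m_i<k$ for all $i$. Because the $m_i$ range over a finite index set, that is equivalent to $2\max_i m_i<k$, i.e.\ $m^\star<k/2$. The same equivalence holds for a general empirical distribution in place of the uniform $1/N$ weights, provided every point has positive mass.

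Sharpness follows from the same formula. If some $i_0$ has $2m_{i_0}=k$, then $F(x_{i_0})=0$, so by the tie convention $i_0$ is an error and $\mathrm{err}\ge 1/N>0$. If every point has $2m_i=k$, every indicator is $1$ and $\mathrm{err}=1$. Note that equality $2m_i=k$ forces $k$ to be even, so the sharp case exists only for even $k$; for odd $k$ the condition $m^\star<k/2$ already reads $m^\star\le(k-1)/2$.

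The algebra is routine. The main obstacle is the tie convention: if ties were broken in favour of $+1$, or at random, the "iff" would shift to $2m_i\le k$ and the sharpness clause would fail. I would therefore state the convention explicitly as part of the definition of the unweighted majority vote, and check that the Lean statement (\texttt{sign} with $0\mapsto$ error, or a strict-positivity predicate) encodes the same convention.
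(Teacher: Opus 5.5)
Your proposal is correct and follows the paper's proof step for step. It uses the same margin identity $y_iF(x_i)=k-2m_i$, reads the zero-one loss as an average of nonnegative indicators that vanishes iff every margin is strictly positive, reformulates via the finite max as $m^\star<k/2$, and gets sharpness from a tie at $2m_i=k$ counting as an error; the paper makes that tie convention explicit too, and your evenness remark matches its even-split witness of non-vacuity. One small correction to your aside: random tie-breaking would not shift the condition to $2m_i\le k$, because a tied point is then misclassified with positive probability, so only a deterministic tie-break toward the label (here always $+1$) would do so.
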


Part~B's pool misses the bar at the extreme: at \emph{every} round $m^\star{=}k$, so some task defeats every worker, and a majority fails \statPBOverlapViolate\ of tasks, which \emph{is} the vote's error (\statPBOverlapVoteErr, 95\%~CI \statPBOverlapVoteErrCI). A statistic at its maximum needs no interval, which is why $m^\star{=}k$ decides the question on its own. The proposition itself is bookkeeping, since splitting the margin at $i$ gives $y_if(x_i)=k-2m_i$. The measurement carries the content, because it turns ``is this pool diverse enough to vote?'' into one countable integer. The error comparison settles nothing on its own. The vote's error does exceed the mean per-worker error \statPBOverlapEpsBar\ here, but the paired gap's interval includes zero (\statPBOverlapGap, 95\%~CI \statPBOverlapGapCI), and the excess does not survive restricting the pool to one capability tier, where $m^\star{=}k$ persists while the gap vanishes. Its sign is therefore an effect of pooling two tiers, not a fact about voting (Appendix~\ref{app:orchestration}).

This is about as weak a diversity test as one could field, which cuts both ways. The workers are one vendor, two tiers and three prompt variants at temperature $0.25$, and the feedback-mode contrast is itself null (\S\ref{sec:partB}), so the pool's \emph{effective} diversity is nearer two configurations than \statPBOverlapArms. Low-temperature resampling of a single vendor is therefore close to a lower bound on diversity: it settles the pessimistic half, that resampling one model inherits its hard points, and cannot speak to the optimistic one. Whether genuinely different families clear the bar is the open question for anyone building a voting orchestrator, and this data cannot reach it (Appendix~\ref{app:orchestration}).

\begin{figure}[t]
\centering
\includegraphics[width=0.54\textwidth]{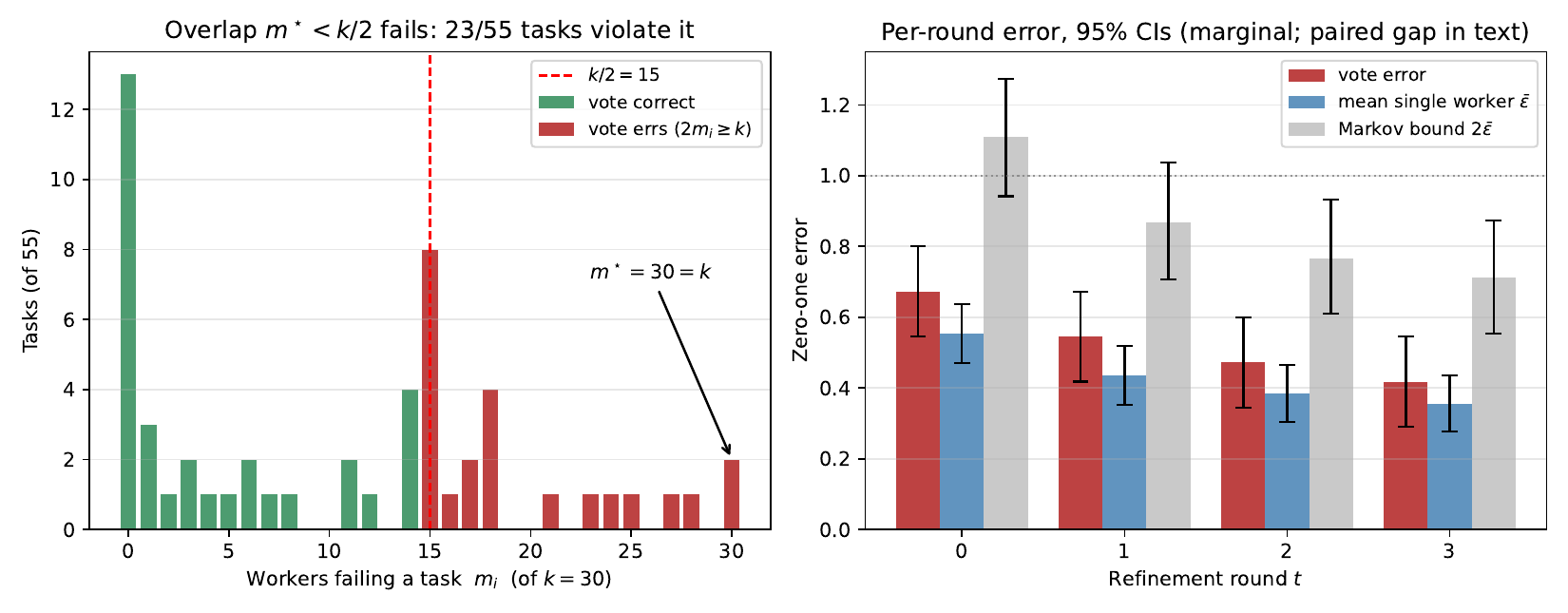}
\caption{Prop.~\ref{prop:diversity}'s overlap condition measured on $k{=}\statPBOverlapK$ workers (\statPBOverlapArms\ arms $\times$ \statPBSeeds\ seeds), shared \statPBK-task pool. \textbf{Left:} workers failing each task; the condition needs every task strictly left of the dashed $k/2$ line, so the red mass is the vote's error and the spike at $m_i{=}k$ is total overlap. \textbf{Right:} per round, vote error against mean per-worker error $\bar\epsilon$ and the Markov bound $2\bar\epsilon$.}
\label{fig:llm-overlap}
\end{figure}

\paragraph{Re-consulting a fixed pool has a horizon fixed in advance.} Width and depth compose in practice, since an orchestrator consults its workers again each round, and how long that can pay is settled by one property of the pool: whether some nonnegative combination of its workers can already fit the work.

\begin{proposition}[Reused-Pool Dichotomy]
\label{prop:pool-dichotomy}
Fix a pool of $k$ workers and run Algorithm~\ref{alg:boost} over it under any schedule at the optimal confidences, with every scheduled worker having edge $\gamma>0$ against the distribution current when it is used. Let $\epsilon_0$ be the smallest zero-one training error attained by any nonnegative combination of the pool. (i)~If $\epsilon_0>0$, then $\epsilon_0\le e^{-2\gamma^2T}$, hence $T\le\log(1/\epsilon_0)/2\gamma^2$---the edge hypothesis provably fails by a finite, computable round, uniformly over the schedule and the confidences, and depending on $k$ only through $\epsilon_0$. (ii)~If instead the pool interpolates ($\epsilon_0{=}0$) with margin $\theta$ at coefficient sum $S$, then for \emph{every} distribution some member has edge $\gamma{=}\theta/2S>0$, fixed \emph{before} the distribution is chosen, so the hypothesis survives every round of any run. The cases are exhaustive and mutually exclusive.
\end{proposition}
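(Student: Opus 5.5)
The plan is to prove the two parts separately and then observe exhaustiveness. Case~(i) combines the classical AdaBoost training-error bound (Thm.~\ref{thm:training-error}) with a lower bound coming from the pool's fixed expressivity. Case~(ii) is a minimax/averaging argument over the interpolating combination.

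\emph{Part (i).} Run Algorithm~\ref{alg:boost} for $T$ rounds under any schedule at the optimal confidences $\alpha_t=\tfrac12\log\frac{1-\epsilon_t}{\epsilon_t}$. Each scheduled worker has edge $\gamma$ against the current $D_t$, so Thm.~\ref{thm:training-error} gives
\[\tfrac1N\textstyle\sum_i \mathbf 1[y_if_T(x_i)\le0]\le\prod_t\sqrt{1-4\gamma_t^2}\le e^{-2\gamma^2T}.\]

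The key step is to notice what $f_T=\sum_t\alpha_th_{s(t)}$ is. Workers may repeat under the schedule, so after collecting coefficients per worker, $f_T$ is a nonnegative combination of the $k$ pool members. Each $\alpha_t>0$ because $\epsilon_t<\tfrac12$. Its zero-one training error is therefore at least $\epsilon_0$ by definition of $\epsilon_0$. Chaining the two bounds gives $\epsilon_0\le e^{-2\gamma^2T}$, and taking logarithms gives $T\le\log(1/\epsilon_0)/2\gamma^2$.

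Nothing in the chain depends on the schedule, on the particular $\alpha_t$ beyond their positivity, or on $k$ except through $\epsilon_0$, which yields the uniformity claims. I would also note that $\epsilon_0$ is a minimum over a finite set of achievable error values ($N$ points), so it is attained. Hence $\epsilon_0>0$ is a genuine positive constant.

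\emph{Part (ii).} Let $g=\sum_j c_jh_j$ with $c_j\ge0$ and $\sum_jc_j=S$ satisfy $y_ig(x_i)\ge\theta>0$ for all $i$. Fix any distribution $D$ and average against it: $\sum_j c_j\,\mathbb E_D[y h_j(x)]=\mathbb E_D[yg(x)]\ge\theta$. Dividing by $S$ makes the left side a convex combination of correlations, so some $j$ has $\mathbb E_D[yh_j]\ge\theta/S$. Since $\mathbb E_D[yh_j]=1-2\,\mathrm{err}_D(h_j)$, this is exactly edge $\theta/2S$.

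The constant depends only on $(\theta,S)$, which is fixed before $D$, so it holds for every $D_t$ the run can produce. That gives survival at every round.

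\emph{Exhaustiveness.} Exclusivity and exhaustiveness follow because $\epsilon_0\ge0$. The one subtlety, which I expect to be the main obstacle, is that part~(ii) presupposes $\epsilon_0=0$ already comes with a strictly positive margin $\theta$. Zero-one error zero means $y_ig(x_i)>0$ at every one of the finitely many training points. So $\theta:=\min_iy_ig(x_i)>0$ exists, and $S>0$ unless $g\equiv0$, which would have error $1$.

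A second point to handle carefully in (i) is the sign convention: ties $y_if=0$ must count as errors in both the AdaBoost bound and the definition of $\epsilon_0$. I would state that convention explicitly so the two error notions coincide.
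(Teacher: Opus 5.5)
Your proposal is correct and follows the paper's argument: for (i) you group rounds by worker to put the run's margin in the pool's nonnegative cone (nonnegativity coming from the edge at the optimal confidence), then sandwich its zero-one loss between $\epsilon_0$ and the product bound $e^{-2\gamma^2T}$; for (ii) you average the pointwise margin floor against $D$ and pigeonhole to get some $e_j\ge\theta/S$, i.e.\ $\epsilon_j\le\tfrac12-\theta/2S$. Your treatment of exclusivity also matches the paper's: a strictly positive minimum margin exists over finitely many points, and $S>0$ because $g\equiv0$ would have error $1$.
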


A fixed pool is therefore not a renewable resource: unless it can already fit the work, re-consulting it has a horizon set by $\epsilon_0$ and $\gamma$ rather than by whatever compute an orchestrator is willing to spend. And the quantity deciding which side of the dichotomy a pool falls on, its reachable cone, is once again a property of the scaffold rather than of any weight. Both halves are Lean-checked and stated with their exact hypotheses as Props.~\ref{prop:pool-horizon} and~\ref{prop:interpolating-pool}.

\section{Experiments}
\label{sec:experiments}

Three complementary parts. \textbf{Part A} is a \emph{stylized theoretical validation}, reported entirely in Appendix~\ref{app:parta}: decision stumps on SWE-bench TF-IDF features with synthetic labels, confirming the machinery is implemented correctly and claiming nothing about code or LLM behavior. \textbf{Part B} is a \emph{regime characterization}: a real coding agent as the weak learner over $K{=}\statPBK$ SWE-bench Lite tasks $\times$ \statPBSeeds\ seeds. \textbf{Part C} supplies the \emph{ecological} evidence: \statPCSessions\ production sessions, placing current practice in the bounded regime without identifying what causes it. All code, processed data, figure inputs, and the raw LLM response cache are in the supplementary repository.

\subsection{Part B: LLM Agent Refinement on SWE-bench Lite}
\label{sec:partB}

We treat \emph{mini-swe-agent} \citep{yang2024sweagent} as the per-round weak learner, wrapped in a refinement outer loop of $T{=}\statPBRounds$ rounds, picked from pilot evidence that the edge collapses by round~1, over $K{=}\statPBK$ SWE-bench Lite tasks \citep{jimenez2023swe} $\times$ \statPBSeeds\ seeds, at generation temperature $0.25$. We cross feedback mode with model tier (Claude Sonnet, the stronger, vs.\ Claude Haiku~4.5, the weaker) in a $2\times3$ design ($\statPBTrajectories$ trajectories). Feedback mode is what each \emph{later} round is told about the agent's prior attempt: \textbf{independent} (no memory across rounds), \textbf{blind} (prior patch only, no critique), or \textbf{diagnostic} (prior patch plus which \texttt{FAIL\_TO\_PASS} tests still fail). No arm is shown the gold patch.

Scoring is a digest-pinned Docker harness running \texttt{FAIL\_TO\_PASS}/\texttt{PASS\_TO\_PASS} under \texttt{pytest}---true binary pass/fail, not an LLM judge. The evaluator is ours rather than SWE-bench's own, and the reason matters for how to read the pool. A $T$-round loop needs per-round scoring inside a live container, which the official \emph{batch} harness does not expose, so we score \texttt{--junit-xml} directly. That much was necessary; scoring \emph{every} repository with \texttt{pytest} was not, since SWE-bench ships a per-repository test command that we reimplemented instead of importing. We audited the evaluator against the gold patch before trusting it, which is how we found the cost: applying the \emph{known-correct} patch to the originally-sampled $K{=}50$ pool left \statPBAuditFailN\ tasks failing deterministically across \statPBAuditRootCauses\ root causes in \emph{our} evaluator, each of which it would otherwise have scored as a model error. We fixed the two shared causes, re-verified against gold, and rebuilt to $K{=}\statPBK$, leaving a pool that is \texttt{pytest}-native by construction (Appendix~\ref{app:partb}).

\paragraph{Results: the capability effect is real and large; the feedback-mode effect is an honestly underpowered null.} A $2\times3$ repeated-measures ANOVA (full table, Cohen's $f$ values and required-$K$ arithmetic in Appendix~\ref{app:partb}) is unambiguous on one effect and silent on the other two, silent meaning underpowered and not negative: model tier explains roughly half the variance ($F\statPBModelDf{=}\statPBModelF$, $p\statPBModelP$, partial $\eta^2{=}\statPBModelEta$), while neither feedback mode ($p{=}\statPBModeP$) nor the interaction ($p{=}\statPBInterP$) reaches significance. Solve rates tell the same story: Haiku \statPBHaikuSolveRange\ across arms against Sonnet \statPBSonnetSolveRange\ ($\ge\tau{=}0.6$ of a task's tests passing, 95\%~Wilson CI; per-arm rates in Appendix~\ref{app:partb}). A case-resampling bootstrap finds only $\statPBModePower$/$\statPBInterPower$ power (mode/interaction) at this $K$, and 80\% power needs \statPBPowerMultiple\ the current sample. \emph{This null is underpowered, not evidence of zero effect.}

\paragraph{These are not SWE-bench resolve rates.} Our $\tau{=}0.6$ thresholds the \emph{fraction} of \texttt{FAIL\_TO\_PASS}/\texttt{PASS\_TO\_PASS} tests passing, so a task can count as solved here and as unresolved by SWE-bench's own criterion; do not read the rates above against published ones. At that criterion ($\tau{=}\statPBTauResolved$) the identical aggregation gives Haiku \statPBHaikuResolvedRange\ and Sonnet \statPBSonnetResolvedRange. The tier gap survives, neither tier sits at a floor, and the weak tier's mode ordering inverts (per-arm rates in Appendix~\ref{app:partb}), one more reason to read the mode contrast as noise. Nothing else moves with $\tau$ (Appendix~\ref{app:partb}). Absolute rates are also harness-specific, since excluding Django and SymPy leaves a pool plausibly easier than Lite as a whole. So the tier \emph{contrast} is the transportable result, and the selection is a property of our tooling rather than of the benchmark.

\paragraph{Refinement goes idempotent, and most of the pooled edge is an artifact of that.} The loop stops calling the model on the round a task first passes and back-fills the rest with a hardcoded perfect score, so a solved trajectory contributes non-events rather than measurements, and each such cell enters $\tilde\gamma_t$ as a failure to improve. Pooled over all cells the estimator therefore tracks the solve rate as much as the dynamics, and tracks it \emph{backwards}: the stronger tier back-fills more rows and posts the more negative pooled edge. So we report that series for disclosure only and rely on cells where a model call happened. Nothing downstream turns on it, since neither Thm.~\ref{thm:refinement} nor Prop.~\ref{prop:dichotomy} ever required a positive edge (pooled values, intervals, and drafting history in Appendix~\ref{app:edge-decomposition}).

On those cells the tiers separate cleanly (Appendix Fig.~\ref{fig:llm-edge}, left): Haiku \statPBHaikuEdgeHeadroom\ on \statPBHaikuHeadroomN\ rows, Sonnet \statPBSonnetEdgeHeadroom\ on \statPBSonnetHeadroomN\ (intervals in Appendix~\ref{app:edge-decomposition}). The weak tier fails to improve tasks it still could; the strong tier's first two intervals \emph{include zero}, so we do not claim it fails the weak-learning condition. What the data show is a loop that acts ever less often rather than one that acts wrongly: ties at \statPBTieRange\ of transitions overall, \statPBSonnetGenuineTieRange\ (Sonnet) and \statPBHaikuGenuineTieRange\ (Haiku) net of back-filling, against at most \statPBRegressMax\ lowering quality. An unchanged round is an \emph{abstention}, and \citet{schapire1999improved}'s confidence-rated bound charges $Z_t=\omega_0+2\sqrt{\omega_+\omega_-}<1$ rather than an error, the $\omega$'s being the abstained, correct and incorrect weight masses, so the transferred product bound still descends, to \statPBHaikuAbstainZ\ and \statPBSonnetAbstainZ\ over three rounds (Appendix Fig.~\ref{fig:llm-edge}, right).

What places Part~B in bounded regime~(i) is the refinement game's own quantity, which back-filling cannot distort because a solved row can no longer improve: mean per-round improvement on the running best falls \statPBHaikuEta\ (Haiku) and \statPBSonnetEta\ (Sonnet). Improvement continues and shrinks toward zero. That is Thm.~\ref{thm:refinement}'s saturation measured rather than argued, and the reason a negative $\tilde\gamma_t$ is compatible with a still-improving artifact. Two limits bound this: conditioning on ``still live'' shrinks and hardens the population with $t$ (\S\ref{sec:limitations}), and this is one vendor's two tiers, one harness, one benchmark, $T{=}\statPBRounds$ rounds.

\subsection{Part C: Refinement Dynamics in Production Agentic Coding}
\label{sec:partC}

\statPCSessions\ real sessions of AI-assisted software development at an enterprise software company. Unlike SWE-bench, production sessions are not curated to be near-solved. They carry genuine residual work, the headroom the theory requires. We extract structural git metrics only, retaining no code content, file paths, commit messages, or author identities. A \emph{session} is a run of consecutive AI-authored commits with an inter-commit gap below 4~hours: \statPCSessions\ sessions, \statPCCommits\ commits, \statPCRepos\ repositories. Construction, the tier-label caveats, and all five findings in full are in Appendix~\ref{app:partc}.

\paragraph{Finding 1: churn converges. Findings 2--3: what that does and does not show.} \statPCDecayPct\ of multi-commit sessions show decreasing churn across commits, at a median late/early churn ratio of \statPCMedianRatio\ (95\%~CI \statPCPermCI; Appendix Fig.~\ref{fig:emp}, left). The ordering is genuinely temporal, not an artifact of which commits happen to be large (commit-order permutation test, $p\statPCPermP$, on the $n{=}\statPCPermN$ sessions of ${\ge}5$ commits, half the cohort being a single commit). That is the safety-relevant descriptive claim: current frozen-weight agentic coding sits in bounded regime~(i), refining like a saturating editor rather than compounding like a runaway process.

It is not the mechanism, and two results say why. First, we ran the control that could falsify an AI-specific reading: a pre-AI human baseline of \statPCCtrlSessions\ sessions shows the same decay \emph{shape} (geometric churn-decay factor $\rho_{\mathrm{human}}{=}\statPCCtrlRho$ against AI-tier $\rho{\approx}\statPCRho$). Geometric front-loading is therefore not AI-specific, and a fixed reachable class is not needed to predict it. Second, the one theory-specific discriminator returns a null: Opus sessions show no higher survivor ratio $\psi$ (net lines over total churn) than Sonnet ones ($\psi_{\mathrm{Opus}}{\approx}\statPCTauOpus$, $n{=}\statPCNOpus$ vs.\ $\psi_{\mathrm{Sonnet}}{\approx}\statPCTauSonnet$, $n{=}\statPCNSonnet$; gap${=}\statPCGap$, 95\%~CI \statPCGapCI, permutation $p{=}\statPCGapPermP$; Appendix Fig.~\ref{fig:emp}, right), and it survives four de-confounding checks and a re-test on version-resolved capability cells. Since Part~B measures a large capability effect on the same kind of contrast, we read the null as a fact about the proxy rather than about the models. It is a \emph{bounded} null, not a demonstrated zero. That costs Finding~2 its force as a mechanism test, and leaves Finding~1's description intact.

\section{Discussion}
\label{sec:discussion}

Two practitioner lessons follow from the saturation bound (Thm.~\ref{thm:refinement}), with all five in Appendix~\ref{app:guidance}: \textbf{(1)~spend on the first attempt, not the refinement loop}, since later rounds mostly leave the artifact untouched (Appendix Fig.~\ref{fig:llm-edge}); and \textbf{(2)~reach for capability before orchestration}, since orchestration cannot exceed the best worker's ceiling (Prop.~\ref{prop:orchestration}) and the capability gap dominated every factor we varied (partial $\eta^2{=}\statPBModelEta$).

The governance reading: the quantity to monitor is not whether weights are frozen \citep{anthropic2026rsp, deepmind2026fsf, openai2025preparedness} but whether the \emph{scaffold} is self-expanding. A loop that only re-samples is bounded by construction; one that rewrites its tools or decomposition is not (M1, Cor.~\ref{cor:frozen-ceiling}). That self-expansion doubles as a reward-hacking risk, since a system licensed to rewrite its own tests may exploit $L(Y,\hat Y)$ rather than improve (Appendix~\ref{app:rsi}--\ref{app:bounded-metrics}).

\paragraph{Open problems.} Each is falsifiable, and blocked by something we did not do rather than something we found. \textbf{(1)~Build the harness} with retention, an explicit distribution over specifications, and enforced $m^\star{<}k/2$ (\S\ref{sec:framework}): does a loop satisfying all three descend at the transferred rate, or does a further gap appear only on contact? \textbf{(2)~Run the vote across model families}: does an ensemble drawn from genuinely different vendors satisfy Prop.~\ref{prop:diversity}? The existing cross-vendor aggregators report end-to-end gains but not $m^\star$ (\S\ref{sec:related}), so the measurement is cheap and unmade. If they fail it, weighted-vote orchestration has no viable regime on real code. \textbf{(3)~Keep refining after the first pass}: no unconditional per-round edge is estimable from data whose later rounds were never run, and one re-run that declines to stop on success settles it. \textbf{(4)~Hold weights fixed and vary the scaffold}, the only design separating a genuinely fixed reachable class from the ordinary economics of editing, which is exactly what Part~C's human baseline leaves open.

\subsection{Limitations}
\label{sec:limitations}

Ordered by what they could cost: the two below could each overturn a claim we make; the seven in Appendix~\ref{app:limitations} bound scope without threatening a specific result.

\begin{itemize}[leftmargin=*,itemsep=1pt]
    \item \textbf{What the edge measurement can and cannot settle (critical caveat).} We have no clean unconditional estimate of the per-round edge and cannot get one from this data: the harness stops calling the model once a task passes, so the rounds that would supply the missing cells were never run (\S\ref{sec:partB}). What we report conditions on a task still being live, so the population shrinks and hardens with $t$ (Sonnet \statPBSonnetHeadroomN) and part of the decline is a change of subsample rather than of behaviour. Reading the stronger tier's zero-crossing intervals as evidence that refinement \emph{does} sustain an edge would overread them exactly as the pooled estimator overreads the opposite (open problem~3). Part~C does not measure $\gamma_t$ at all, so Cor.~\ref{cor:convergence} is conditional throughout.
    \item \textbf{Observational data, tier assignment, and version pooling.} Part~C's model tier is \emph{not} randomly assigned: Opus is often used for planning while Sonnet handles execution, so the tier contrast is partly a task-type contrast. The survivor ratio is also a structural proxy for quality, not functional correctness. Finding~2's checks adjust for task type, repository and developer but not model version, which is why the discriminator is re-run as a version-resolved trend. That the null survives removes pooling as the explanation without making it a demonstrated zero, and reading it as evidence \emph{against} a capability effect would overread it. Finding~3's human contrast is likewise an era comparison.
\end{itemize}
Orchestrator-worker topologies are addressed formally in Appendix~\ref{app:orchestration}.

\section{Conclusion}
\label{sec:conclusion}

Reading agentic coding as residual prediction yields a refinement game, a stationarity dichotomy (Prop.~\ref{prop:dichotomy}), and a frozen-weight ceiling. A fixed reachable class and a bounded metric force strict diminishing returns, so freezing weights does not by itself guarantee a safe, saturating trajectory: audit the scaffold, not the checkpoint. The ceiling binds sideways too: best-of-$k$ orchestration realizes the best worker's ceiling \emph{exactly} (Prop.~\ref{prop:orchestration}), the diversity a vote would need is absent on real workers (a majority failing \statPBOverlapViolate\ of tasks), and re-consulting a fixed pool has a computable horizon unless that pool can already fit the work (Prop.~\ref{prop:pool-dichotomy}). The mapping is also \emph{not} boosting in the technical sense, since patches compose rather than vote. But both remaining gaps are properties of harnesses as currently built rather than laws about code, and together they specify a harness worth building. What we measure is saturation, on SWE-bench and across \statPCSessions\ production sessions matched in shape by a pre-AI human baseline. What remains open is the mechanism: a genuinely fixed reachable class, or the ordinary economics of editing. Holding weights fixed while varying the scaffold would separate the two. This data does not run that design; it argues for the experiment that would.

\section*{Responsible Use Statement}
\label{sec:responsible-use}

Recursive self-improvement in coding agents is dual-use adjacent, but this paper's finding runs the other way: a \emph{saturation and limits} result, not an acceleration recipe. The criterion, whether the reachable hypothesis class is stationary, lets evaluators \emph{recognize} when a self-modifying system could escape diminishing returns instead of reading frozen weights as assurance, and we release no technique or scaffold that expands an agent's reachable class. Part~C releases \statPCReleasedDatasets\ session datasets covering \statPCReleasedDevelopers\ people's commit timing, structural only: developer identity is a salted hash whose roster is withheld, and no code content, file paths or commit messages survive (sanitization audit in Appendix~\ref{app:partc}). This is IP-safe but not anonymous in the strong sense, since commit timing is in principle linkable by anyone holding the original repositories. We also disclose substantial AI assistance in drafting this manuscript.

\clearpage
\typeout{MAINBODY-ENDPAGE:\thepage}

\begin{ack}
The authors thank Tom Edwards and colleagues at OnCorps for feedback and
discussions that improved this work. Tom Edwards authored much of the OnCorps
production code whose commit histories form the Part~C dataset, and we
gratefully acknowledge the engineering work that made that analysis possible.
We thank Brett Roads for detailed comments on an earlier draft that
substantially improved its clarity and framing. We also thank Peter Riefer,
Aristeidis Panos, Bradley Love and Simon Busch-Moreno for their feedback, and
the Data Science Team at OnCorps for comments on the manuscript.
\end{ack}

\bibliographystyle{plainnat}
\bibliography{references}

\begin{thebibliography}{59}
\providecommand{\natexlab}[1]{#1}
\providecommand{\url}[1]{\texttt{#1}}
\expandafter\ifx\csname urlstyle\endcsname\relax
  \providecommand{\doi}[1]{doi: #1}\else
  \providecommand{\doi}{doi: \begingroup \urlstyle{rm}\Url}\fi

\bibitem[Agrawal et~al.(2026)Agrawal, Tan, Soylu, Ziems, Khare, Opsahl-Ong,
  Singhvi, Shandilya, Ryan, Jiang, Potts, Sen, Dimakis, Stoica, Klein, Zaharia,
  and Khattab]{agrawal2025gepa}
Lakshya~A Agrawal, Shangyin Tan, Dilara Soylu, Noah Ziems, Rishi Khare, Krista
  Opsahl-Ong, Arnav Singhvi, Herumb Shandilya, Michael~J Ryan, Meng Jiang,
  Christopher Potts, Koushik Sen, Alexandros~G. Dimakis, Ion Stoica, Dan Klein,
  Matei Zaharia, and Omar Khattab.
\newblock {GEPA}: Reflective prompt evolution can outperform reinforcement
  learning.
\newblock In \emph{International Conference on Learning Representations
  (ICLR)}, 2026.

\bibitem[{AI Security Institute, UK}(2024)]{aisi2024inspect}
{AI Security Institute, UK}.
\newblock Inspect: An open-source framework for large language model
  evaluations.
\newblock \url{https://inspect.aisi.org.uk/}, 2024.
\newblock Released 2024-05-10 as a platform of the UK AI Safety Institute,
  since renamed the AI Security Institute; developed with Meridian Labs.
  Framework source at github.com/UKGovernmentBEIS/inspect\_ai, companion
  evaluation collection at github.com/UKGovernmentBEIS/inspect\_evals. Accessed
  2026-08-21.

\bibitem[Aky{\"u}rek et~al.(2025)Aky{\"u}rek, Damani, Zweiger, Qiu, Guo, Pari,
  Kim, and Andreas]{akyurek2024testtime}
Ekin Aky{\"u}rek, Mehul Damani, Adam Zweiger, Linlu Qiu, Han Guo, Jyothish
  Pari, Yoon Kim, and Jacob Andreas.
\newblock The surprising effectiveness of test-time training for few-shot
  learning.
\newblock In \emph{International Conference on Machine Learning (ICML)}, 2025.

\bibitem[Amodei et~al.(2016)Amodei, Olah, Steinhardt, Christiano, Schulman, and
  Man{\'e}]{amodei2016concrete}
Dario Amodei, Chris Olah, Jacob Steinhardt, Paul Christiano, John Schulman, and
  Dan Man{\'e}.
\newblock Concrete problems in {AI} safety, 2016.

\bibitem[{Anthropic}(2026)]{anthropic2026rsp}
{Anthropic}.
\newblock Anthropic's responsible scaling policy, version 3.4.
\newblock \url{https://www.anthropic.com/rsp}, 2026.
\newblock Effective 2026-07-08. Accessed 2026-08-18.

\bibitem[Beygelzimer et~al.(2015)Beygelzimer, Kale, and
  Luo]{beygelzimer2015optimal}
Alina Beygelzimer, Satyen Kale, and Haipeng Luo.
\newblock Optimal and adaptive algorithms for online boosting.
\newblock In \emph{Proceedings of the 32nd International Conference on Machine
  Learning}, pages 2323--2331, 2015.

\bibitem[Bostrom(2014)]{bostrom2014superintelligence}
Nick Bostrom.
\newblock \emph{Superintelligence: Paths, dangers, strategies}.
\newblock Oxford University Press, 2014.

\bibitem[Brown et~al.(2024)Brown, Juravsky, Ehrlich, Clark, Le, R{\'e}, and
  Mirhoseini]{brown2024monkeys}
Bradley Brown, Jordan Juravsky, Ryan Ehrlich, Ronald Clark, Quoc~V Le,
  Christopher R{\'e}, and Azalia Mirhoseini.
\newblock Large language monkeys: Scaling inference compute with repeated
  sampling, 2024.

\bibitem[Chen et~al.(2026)Chen, Wang, and Qu]{chen2026recursive}
Mingguang Chen, Licheng Wang, and Bo~Qu.
\newblock Recursive self-improvement in ai: From bounded self-refinement to
  autonomous research loops, 2026.

\bibitem[{DeepSeek-AI}(2025)]{deepseek2025r1}
{DeepSeek-AI}.
\newblock {DeepSeek-R1}: Incentivizing reasoning capability in {LLMs} via
  reinforcement learning, 2025.

\bibitem[Favaro and Clark(2026)]{anthropic2026aibuildsitself}
Marina Favaro and Jack Clark.
\newblock When ai builds itself: Our progress toward recursive
  self-improvement, and its implications.
\newblock The Anthropic Institute,
  \url{https://www.anthropic.com/institute/recursive-self-improvement}, 2026.
\newblock Accessed 2026-08-03.

\bibitem[Freund and Schapire(1997)]{freund1997decision}
Yoav Freund and Robert~E Schapire.
\newblock A decision-theoretic generalization of on-line learning and an
  application to boosting.
\newblock \emph{Journal of Computer and System Sciences}, 55\penalty0
  (1):\penalty0 119--139, 1997.

\bibitem[Friedman et~al.(2000)Friedman, Hastie, and
  Tibshirani]{friedman2000additive}
Jerome Friedman, Trevor Hastie, and Robert Tibshirani.
\newblock Additive logistic regression: A statistical view of boosting.
\newblock \emph{Annals of Statistics}, 28\penalty0 (2):\penalty0 337--407,
  2000.

\bibitem[Friedman(2001)]{friedman2001greedy}
Jerome~H Friedman.
\newblock Greedy function approximation: A gradient boosting machine.
\newblock \emph{Annals of Statistics}, 29\penalty0 (5):\penalty0 1189--1232,
  2001.

\bibitem[Good(1965)]{good1965speculations}
Irving~John Good.
\newblock Speculations concerning the first ultraintelligent machine.
\newblock \emph{Advances in computers}, 6:\penalty0 31--88, 1965.

\bibitem[{Google DeepMind}(2026)]{deepmind2026fsf}
{Google DeepMind}.
\newblock Frontier safety framework, version 3.1.
\newblock
  \url{https://deepmind.google/discover/blog/strengthening-our-frontier-safety-framework/},
  2026.
\newblock Effective 2026-04-17. Accessed 2026-08-18.

\bibitem[Hardt and Sun(2024)]{hardt2023nearest}
Moritz Hardt and Yu~Sun.
\newblock Test-time training on nearest neighbors for large language models.
\newblock In \emph{International Conference on Learning Representations
  (ICLR)}, 2024.

\bibitem[He et~al.(2026)He, Zhou, Wang, Xu, Liu, and Miao]{he2026harness}
Chaoyue He, Xin Zhou, Di~Wang, Hong Xu, Wei Liu, and Chunyan Miao.
\newblock Harness engineering for language agents: The harness layer as
  control, agency, and runtime.
\newblock Preprints.org, 2026.

\bibitem[Hernandez and Brown(2020)]{hernandez2019measuring}
Danny Hernandez and Tom~B Brown.
\newblock Measuring the algorithmic efficiency of neural networks, 2020.

\bibitem[Hong et~al.(2023)Hong, Zhuge, Chen, Zheng, Cheng, Zhang, Wang, Wang,
  Yau, Lin, et~al.]{hong2023metagpt}
Sirui Hong, Mingchen Zhuge, Jiaqi Chen, Xiawu Zheng, Yuheng Cheng, Ceyao Zhang,
  Jinlin Wang, Zili Wang, Steven Ka~Shing Yau, Zijuan Lin, et~al.
\newblock Metagpt: Meta programming for a multi-agent collaborative framework,
  2023.

\bibitem[Huang et~al.(2023)Huang, Zhang, Luck, Bu, Qing, and
  Cui]{huang2023agentcoder}
Dong Huang, Jie~M. Zhang, Michael Luck, Qingwen Bu, Yuhao Qing, and Heming Cui.
\newblock Agentcoder: Multi-agent-based code generation with iterative testing
  and optimisation, 2023.

\bibitem[H{\"u}botter et~al.(2025)H{\"u}botter, Bongni, Hakimi, and
  Krause]{hubotter2024efficiently}
Jonas H{\"u}botter, Sascha Bongni, Ido Hakimi, and Andreas Krause.
\newblock Efficiently learning at test-time: Active fine-tuning of {LLMs}.
\newblock In \emph{International Conference on Learning Representations
  (ICLR)}, 2025.

\bibitem[Jiang et~al.(2023)Jiang, Ren, and Lin]{jiang2023llmblender}
Dongfu Jiang, Xiang Ren, and Bill~Yuchen Lin.
\newblock {LLM}-blender: Ensembling large language models with pairwise ranking
  and generative fusion, 2023.

\bibitem[Jimenez et~al.(2024)Jimenez, Yang, Wettig, Yao, Pei, Press, and
  Narasimhan]{jimenez2023swe}
Carlos~E Jimenez, John Yang, Alexander Wettig, Shunyu Yao, Kexin Pei, Ofir
  Press, and Karthik Narasimhan.
\newblock Swe-bench: Can language models resolve real-world github issues?
\newblock In \emph{The Twelfth International Conference on Learning
  Representations}, 2024.

\bibitem[Kambhampati(2024)]{kambhampati2024can}
Subbarao Kambhampati.
\newblock Can large language models reason and plan?, 2024.

\bibitem[Khan(2026)]{khan2026clinersi}
Ara Khan.
\newblock Recursive self-improvement for coding agents.
\newblock Cline Blog,
  \url{https://cline.bot/blog/recursive-self-improvement-for-coding-agents},
  2026.
\newblock Accessed 2026-08-03.

\bibitem[Lightman et~al.(2023)Lightman, Kosaraju, Burda, Edwards, Baker, Lee,
  Leike, Schulman, Sutskever, and Cobbe]{lightman2023verify}
Hunter Lightman, Vineet Kosaraju, Yura Burda, Harri Edwards, Bowen Baker, Teddy
  Lee, Jan Leike, John Schulman, Ilya Sutskever, and Karl Cobbe.
\newblock Let's verify step by step, 2023.

\bibitem[Madaan et~al.(2023)Madaan, Tandon, Gupta, Hallinan, Gao, Wiegreffe,
  Alon, Dziri, Prabhumoye, Yang, et~al.]{madaan2023self}
Aman Madaan, Niket Tandon, Prakhar Gupta, Skyler Hallinan, Luyu Gao, Sarah
  Wiegreffe, Uri Alon, Nouha Dziri, Shrimai Prabhumoye, Yiming Yang, et~al.
\newblock Self-refine: Iterative refinement with self-feedback, 2023.

\bibitem[Mason et~al.(1999)Mason, Baxter, Bartlett, and
  Frean]{mason2000boosting}
Llew Mason, Jonathan Baxter, Peter Bartlett, and Marcus Frean.
\newblock Boosting algorithms as gradient descent.
\newblock In \emph{Advances in Neural Information Processing Systems
  (NeurIPS)}, volume~12. MIT Press, 1999.

\bibitem[Mu{\~n}oz and Yuan(2025)]{munoz2025rttc}
J.~Pablo Mu{\~n}oz and Jinjie Yuan.
\newblock {RTTC}: Reward-guided collaborative test-time compute, 2025.

\bibitem[Nagappan and Ball(2005)]{nagappan2005churn}
Nachiappan Nagappan and Thomas Ball.
\newblock Use of relative code churn measures to predict system defect density.
\newblock In \emph{Proceedings of the 27th International Conference on Software
  Engineering (ICSE)}, pages 284--292, 2005.

\bibitem[{OpenAI}(2025)]{openai2025preparedness}
{OpenAI}.
\newblock Preparedness framework, version 2.
\newblock
  \url{https://cdn.openai.com/pdf/18a02b5d-6b67-4cec-ab64-68cdfbddebcd/preparedness-framework-v2.pdf},
  2025.
\newblock Last updated 2025-04-15. Accessed 2026-08-18.

\bibitem[Pansuriya et~al.(2026)Pansuriya, Ghorbani, Singh, and
  AlOmar]{pansuriya2026predicting}
Kartik~Ghanshyambhai Pansuriya, Ehsan Ghorbani, Deepak Singh, and Eman~Abdullah
  AlOmar.
\newblock Predicting acceptance and review effort in human and agent pull
  requests, 2026.

\bibitem[Papamarkou et~al.(2026)Papamarkou, Smirnov, Mazanov, Vazhentsev,
  Nakov, Baldwin, and Shelmanov]{papamarkou2026bayesian}
Theodore Papamarkou, Vladislav Smirnov, Viktor Mazanov, Artem Vazhentsev,
  Preslav Nakov, Timothy Baldwin, and Artem Shelmanov.
\newblock Bayesian control for coding agents, 2026.

\bibitem[Qian et~al.(2024)Qian, Liu, Liu, Chen, Dang, Li, Yang, Chen, Su, Cong,
  Xu, Li, Liu, and Sun]{qian2023chatdev}
Chen Qian, Wei Liu, Hongzhang Liu, Nuo Chen, Yufan Dang, Jiahao Li, Cheng Yang,
  Weize Chen, Yusheng Su, Xin Cong, Juyuan Xu, Dahai Li, Zhiyuan Liu, and
  Maosong Sun.
\newblock Chatdev: Communicative agents for software development.
\newblock In \emph{ACL 2024}, 2024.

\bibitem[Romera-Paredes et~al.(2024)Romera-Paredes, Barekatain, Novikov, Balog,
  Kumar, Dupont, Ruiz, Ellenberg, et~al.]{romeraparedes2024funsearch}
Bernardino Romera-Paredes, Mohammadamin Barekatain, Alexander Novikov, Matej
  Balog, M.~Pawan Kumar, Emilien Dupont, Francisco J.~R. Ruiz, Jordan~S.
  Ellenberg, et~al.
\newblock Mathematical discoveries from program search with large language
  models.
\newblock \emph{Nature}, 625:\penalty0 468--475, 2024.
\newblock \doi{10.1038/s41586-023-06924-6}.

\bibitem[Schapire(2013)]{schapire2013explaining}
Robert~E Schapire.
\newblock Explaining adaboost.
\newblock \emph{Empirical Inference}, pages 37--52, 2013.

\bibitem[Schapire and Singer(1999)]{schapire1999improved}
Robert~E Schapire and Yoram Singer.
\newblock Improved boosting algorithms using confidence-rated predictions.
\newblock \emph{Machine Learning}, 37\penalty0 (3):\penalty0 297--336, 1999.
\newblock \doi{10.1023/A:1007614523901}.

\bibitem[Schapire et~al.(1998)Schapire, Freund, Bartlett, and
  Lee]{schapire1998boosting}
Robert~E Schapire, Yoav Freund, Peter Bartlett, and Wee~Sun Lee.
\newblock Boosting the margin: A new explanation for the effectiveness of
  voting methods.
\newblock \emph{Annals of Statistics}, 26\penalty0 (5):\penalty0 1651--1686,
  1998.

\bibitem[Schmidhuber(2003)]{schmidhuber2003goedel}
J{\"u}rgen Schmidhuber.
\newblock {G\"odel} machines: Self-referential universal problem solvers making
  provably optimal self-improvements, 2003.

\bibitem[Shinn et~al.(2023)Shinn, Cassano, Berman, Gopinath, Narasimhan, and
  Yao]{shinn2023reflexion}
Noah Shinn, Federico Cassano, Edward Berman, Ashwin Gopinath, Karthik
  Narasimhan, and Shunyu Yao.
\newblock Reflexion: Language agents with verbal reinforcement learning, 2023.

\bibitem[Skalse et~al.(2022)Skalse, Howe, Krasheninnikov, and
  Krueger]{skalse2022defining}
Joar Skalse, Nikolaus H.~R. Howe, Dmitrii Krasheninnikov, and David Krueger.
\newblock Defining and characterizing reward hacking.
\newblock In \emph{Advances in Neural Information Processing Systems
  (NeurIPS)}, 2022.

\bibitem[Snell et~al.(2024)Snell, Lee, Xu, and Kumar]{snell2024scaling}
Charlie Snell, Jaehoon Lee, Kelvin Xu, and Aviral Kumar.
\newblock Scaling {LLM} test-time compute optimally can be more effective than
  scaling model parameters, 2024.

\bibitem[Sun et~al.(2020)Sun, Wang, Liu, Miller, Efros, and
  Hardt]{sun2019testtime}
Yu~Sun, Xiaolong Wang, Zhuang Liu, John Miller, Alexei~A. Efros, and Moritz
  Hardt.
\newblock Test-time training with self-supervision for generalization under
  distribution shifts.
\newblock In \emph{International Conference on Machine Learning (ICML)}, 2020.

\bibitem[Sun et~al.(2025)Sun, Li, Dalal, Xu, Vikram, Zhang, Dubois, Chen, Wang,
  Koyejo, Hashimoto, and Guestrin]{sun2024learning}
Yu~Sun, Xinhao Li, Karan Dalal, Jiarui Xu, Arjun Vikram, Genghan Zhang, Yann
  Dubois, Xinlei Chen, Xiaolong Wang, Sanmi Koyejo, Tatsunori Hashimoto, and
  Carlos Guestrin.
\newblock Learning to (learn at test time): {RNNs} with expressive hidden
  states.
\newblock In \emph{International Conference on Machine Learning (ICML)}, 2025.

\bibitem[Tsochantaridis et~al.(2005)Tsochantaridis, Joachims, Hofmann, and
  Altun]{tsochantaridis2005large}
Ioannis Tsochantaridis, Thorsten Joachims, Thomas Hofmann, and Yasemin Altun.
\newblock Large margin methods for structured and interdependent output
  variables.
\newblock \emph{Journal of Machine Learning Research}, 6:\penalty0 1453--1484,
  2005.

\bibitem[VanderWeele and Ding(2017)]{vanderweele2017evalue}
Tyler~J. VanderWeele and Peng Ding.
\newblock Sensitivity analysis in observational research: Introducing the
  e-value.
\newblock \emph{Annals of Internal Medicine}, 167\penalty0 (4):\penalty0
  268--274, 2017.
\newblock \doi{10.7326/M16-2607}.

\bibitem[Wang et~al.(2023)Wang, Xie, Jiang, Mandlekar, Xiao, Zhu, Fan, and
  Anandkumar]{wang2023voyager}
Guanzhi Wang, Yuqi Xie, Yunfan Jiang, Ajay Mandlekar, Chaowei Xiao, Yuke Zhu,
  Linxi Fan, and Anima Anandkumar.
\newblock Voyager: An open-ended embodied agent with large language models,
  2023.

\bibitem[Wang et~al.(2024)Wang, Wang, Athiwaratkun, Zhang, and
  Zou]{wang2024mixtureofagents}
Junlin Wang, Jue Wang, Ben Athiwaratkun, Ce~Zhang, and James Zou.
\newblock Mixture-of-agents enhances large language model capabilities, 2024.

\bibitem[Wang et~al.(2022)Wang, Wei, Schuurmans, Le, Chi, Narang, Chowdhery,
  and Zhou]{wang2023selfconsistency}
Xuezhi Wang, Jason Wei, Dale Schuurmans, Quoc Le, Ed~Chi, Sharan Narang,
  Aakanksha Chowdhery, and Denny Zhou.
\newblock Self-consistency improves chain of thought reasoning in language
  models, 2022.

\bibitem[Welleck et~al.(2023)Welleck, Lu, West, Brahman, Shen, Khashabi, and
  Choi]{welleck2023generating}
Sean Welleck, Ximing Lu, Peter West, Faeze Brahman, Tianxiao Shen, Daniel
  Khashabi, and Yejin Choi.
\newblock Generating sequences by learning to self-correct, 2023.

\bibitem[Wu et~al.(2024)Wu, Sun, Li, Welleck, and Yang]{wu2024inference}
Yangzhen Wu, Zhiqing Sun, Shanda Li, Sean Welleck, and Yiming Yang.
\newblock Inference scaling laws: An empirical analysis of compute-optimal
  inference for problem-solving with language models, 2024.

\bibitem[Xue and Yang(2026)]{xue2026rethinking}
Hui Xue and Fan Yang.
\newblock Rethinking self-evolving agents: Do we still need prescribed
  optimization pipelines?, 2026.

\bibitem[Yang et~al.(2024)Yang, Jimenez, Wettig, Lieret, Yao, Narasimhan, and
  Press]{yang2024sweagent}
John Yang, Carlos~E Jimenez, Alexander Wettig, Kilian Lieret, Shunyu Yao,
  Karthik Narasimhan, and Ofir Press.
\newblock {SWE-agent}: Agent-computer interfaces enable automated software
  engineering.
\newblock In \emph{Advances in Neural Information Processing Systems
  (NeurIPS)}, 2024.

\bibitem[Zelikman et~al.(2023)Zelikman, Lorch, Mackey, and
  Kalai]{zelikman2023stop}
Eric Zelikman, Eliana Lorch, Lester Mackey, and Adam~Tauman Kalai.
\newblock Self-taught optimizer ({STOP}): Recursively self-improving code
  generation, 2023.

\bibitem[Zhang et~al.(2025{\natexlab{a}})Zhang, Hu, Lu, Lange, and
  Clune]{zhang2025darwin}
Jenny Zhang, Shengran Hu, Cong Lu, Robert Lange, and Jeff Clune.
\newblock Darwin {G\"odel} machine: Open-ended evolution of self-improving
  agents, 2025{\natexlab{a}}.

\bibitem[Zhang et~al.(2025{\natexlab{b}})Zhang, Lyu, Sun, Wang, Zhang, Hua, Wu,
  Guo, Wang, Muennighoff, King, Liu, and Ma]{zhang2025testtime}
Qiyuan Zhang, Fuyuan Lyu, Zexu Sun, Lei Wang, Weixu Zhang, Wenyue Hua, Haolun
  Wu, Zhihan Guo, Yufei Wang, Niklas Muennighoff, Irwin King, Xue Liu, and Chen
  Ma.
\newblock A survey on test-time scaling in large language models: What, how,
  where, and how well?, 2025{\natexlab{b}}.

\bibitem[Zhu et~al.(2009)Zhu, Zou, Rosset, and Hastie]{zhu2009multi}
Ji~Zhu, Hui Zou, Saharon Rosset, and Trevor Hastie.
\newblock Multi-class adaboost.
\newblock \emph{Statistics and Its Interface}, 2\penalty0 (3):\penalty0
  349--360, 2009.

\bibitem[Zweiger et~al.(2025)Zweiger, Pari, Guo, Aky{\"u}rek, Kim, and
  Agrawal]{zweiger2025seal}
Adam Zweiger, Jyothish Pari, Han Guo, Ekin Aky{\"u}rek, Yoon Kim, and Pulkit
  Agrawal.
\newblock Self-adapting language models.
\newblock In \emph{Advances in Neural Information Processing Systems
  (NeurIPS)}, 2025.

\end{thebibliography}

\clearpage

\appendix

\section{The Boosting/Coding Mapping, and the Protocol}
\label{app:mapping}

Table~\ref{tab:mapping} gives the term-by-term mapping that \S\ref{sec:framework} states in prose; Algorithm~\ref{alg:boost} below is the Protocol whose exact form \S\ref{sec:framework} reads as a build specification rather than a description of any deployed loop.

\begin{table}[h]
\centering
\small
\begin{tabular}{lll}
\toprule
\textbf{Boosting concept} & \textbf{Agentic coding analog} & \textbf{Concrete example} \\
\midrule
Input $X$ & Specification & Prompts, GitHub tickets \\
Target $Y$ & Optimal Code & The optimal ``best'' code for $X$ \\
Current State $F_{t-1}(X)$ & Current Draft & The code generated by previous agents \\
Residual $(Y - F_{t-1}(X))$ & Code Edit & A git diff or patch \\
Weak learner $h_t$ & Agent Patch Generator & LLM call proposing an edit \\
Loss Function $L(Y, \hat{Y})$ & Structured Distance & Token edit distance, AST distance \\
Distribution $D_t$ & Spec Weighting & Focus on failing specifications \\
\bottomrule
\end{tabular}
\caption{Mapping between gradient boosting and agentic coding.}
\label{tab:mapping}
\end{table}

\subsection{Algorithm: Agentic Boosting Protocol}
\label{app:algorithm}

\begin{algorithm}[h]
\caption{Agentic Boosting Protocol for Specifications}
\label{alg:boost}
\begin{algorithmic}[1]
\REQUIRE Spec-Code pairs $\{(x_i, y_i)\}_{i=1}^N$, number of rounds $T$
\STATE Initialize $w_i \leftarrow 1/N$ for all $i$
\FOR{$t = 1$ to $T$}
    \STATE Select weak coding agent $h_t$ minimizing $\epsilon_t = \sum_i w_i \mathbf{1}[h_t(x_i) \neq y_i]$
    \IF{$\epsilon_t \geq 1/2$}
        \STATE Stop: no weak learner available
    \ENDIF
    \STATE Set $\alpha_t \leftarrow \frac{1}{2}\ln\frac{1 - \epsilon_t}{\epsilon_t}$
    \STATE Update $w_i \leftarrow w_i \cdot \exp(-\alpha_t \, y_i \, h_t(x_i))$ for all $i$
    \STATE Normalize $w_i \leftarrow w_i / \sum_j w_j$
\ENDFOR
\RETURN artifact $H(x) = (R_T \circ \dots \circ R_1)(z_0)$ (sequential patch composition), with classification confidence given by the additive margin $f(x) = \sum_t \alpha_t h_t(x)$
\end{algorithmic}
\end{algorithm}

\section{Full Proofs of the Theoretical Results}
\label{app:proofs}

\emph{Note on formalization scope and motivation:} the algebraic steps in this section have been machine-checked in Lean~4 (Mathlib) as a quality-control layer against AI-hallucinated derivations, since this manuscript was substantially drafted with AI assistance. The result-to-declaration map is in Appendix~\ref{app:lean}; what that index cannot do is certify the modelling assumptions, which Remark~\ref{rem:lean-honesty} states plainly.

\subsection{Statements of the Transferred Classical Results}
\label{app:classical-statements}

Summarized in \S\ref{sec:theory} and stated formally here, next to their proofs; the main text relies on none of them. What they require is not a positive edge---\S\ref{sec:framework} withdraws that reading---but a \emph{retained, reweighted ensemble}: a distribution $D_t$ maintained over specifications, and all $T$ hypotheses kept and combined by weighted vote rather than overwritten. No deployed refinement loop does either. Three transfer verbatim under this reading and none is new: the training-error product bound (Thm.~\ref{thm:training-error}) and its $e^{-2\gamma^2T}$ corollary under a uniform edge (Cor.~\ref{cor:convergence}); equivalence to forward stagewise additive modeling under exponential loss (Thm.~\ref{thm:fsam}); and \citet{schapire1998boosting}'s margin bound, making generalization depend on weak-learner complexity rather than round count (Thm.~\ref{thm:generalization})---though for an LLM patch generator the VC-dimension is unknown, examples need not be i.i.d., and a self-expanding scaffold breaks the fixed-$\Hyp$ premise outright.

\begin{definition}[Strong Coding System]
\label{def:strong-agent}
A \emph{strong coding system} is, at the \emph{artifact level}, the sequential composition $F_T(x)=(R_T\circ\cdots\circ R_1)(F_0(x))$ of patches (order-dependent, non-linear); at the \emph{score level}, the additive margin $f(x)=\sum_{t=1}^T\alpha_t h_t(x)$, $\alpha_t\ge0$. It is this margin, not the code bytes, that inherits the AdaBoost machinery---which is exactly what a composing loop does not build.
\end{definition}

\begin{definition}[Agentic Boosting Protocol]
\label{def:protocol}
The \emph{Agentic Boosting Protocol} (Algorithm~\ref{alg:boost}) maintains a distribution $D_t$ over $\{(x_i,y_i)\}$; at round $t$ it selects $h_t$ minimizing $\epsilon_t=\Pr_{D_t}[h_t(x)\ne y]$, sets $\alpha_t=\tfrac12\ln\tfrac{1-\epsilon_t}{\epsilon_t}$, and updates $D_{t+1}(i)\propto D_t(i)\exp(-\alpha_t y_i h_t(x_i))$.
\end{definition}

\begin{theorem}[Training Error of Agentic Boosting]
\label{thm:training-error}
After $T$ rounds of the Agentic Boosting Protocol, the training error of the strong coding system $H$ is bounded by $\frac{1}{N}\sum_i\mathbf{1}[H(x_i)\ne y_i]\le\prod_{t=1}^T Z_t$, where $Z_t=2\sqrt{\epsilon_t(1-\epsilon_t)}$.
\end{theorem}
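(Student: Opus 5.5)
The plan follows the standard Freund--Schapire argument, applied to the Agentic Boosting Protocol of Def.~\ref{def:protocol} with $H(x)=\mathrm{sign}(f(x))$ read off the additive margin $f(x)=\sum_t\alpha_t h_t(x)$ of Def.~\ref{def:strong-agent}. The sequential patch composition plays no role: the bound concerns only the score level. We write $Z_t:=\sum_i D_t(i)\exp(-\alpha_t y_i h_t(x_i))$ for the normalizer in the update, so that $D_{t+1}(i)=D_t(i)\exp(-\alpha_t y_ih_t(x_i))/Z_t$.

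\textbf{Step 1 (unroll the weights).} Starting from $D_1(i)=1/N$, induction on $t$ gives
\[
D_{T+1}(i)=\frac{\exp\bigl(-y_i f(x_i)\bigr)}{N\prod_{t=1}^T Z_t}.
\]
Summing over $i$ and using $\sum_i D_{T+1}(i)=1$ yields
\[
\frac1N\sum_i e^{-y_if(x_i)}=\prod_{t=1}^T Z_t .
\]

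\textbf{Step 2 (zero-one below exponential).} If $H(x_i)\ne y_i$, then $y_if(x_i)\le 0$, so $\mathbf 1[H(x_i)\ne y_i]\le e^{-y_if(x_i)}$. Averaging and applying Step~1 gives training error $\le\prod_t Z_t$. The tie case $f(x_i)=0$ is covered by the non-strict inequality, whichever sign convention is used for $\mathrm{sign}(0)$.

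\textbf{Step 3 (evaluate $Z_t$).} Since $h_t,y_i\in\{\pm1\}$, split $Z_t$ into correct and incorrect mass:
\[
Z_t=(1-\epsilon_t)e^{-\alpha_t}+\epsilon_t e^{\alpha_t}.
\]
Substituting $\alpha_t=\tfrac12\ln\frac{1-\epsilon_t}{\epsilon_t}$ gives $e^{\alpha_t}=\sqrt{(1-\epsilon_t)/\epsilon_t}$, so each term equals $\sqrt{\epsilon_t(1-\epsilon_t)}$ and $Z_t=2\sqrt{\epsilon_t(1-\epsilon_t)}$.

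The main obstacle is bookkeeping rather than difficulty: $\alpha_t$ must be well defined, which requires $0<\epsilon_t<1/2$. The case $\epsilon_t\ge 1/2$ is handled by the algorithm's stop, and only completed rounds are counted. The case $\epsilon_t=0$ has to be treated as a degenerate round. There the error bound is trivially $0$ once such an $h_t$ is included, or equivalently one takes the limit $\alpha_t\to\infty$ with $Z_t\to0$. We would state that convention explicitly so the product formula matches. In this paper's instantiation every label is $y_i=+1$ (Def.~\ref{def:weak-agent}), but the argument is label-agnostic and needs no change.
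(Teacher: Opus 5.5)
Your proposal is correct and is essentially the paper's proof: the paper also bounds the zero-one loss by the average exponential loss, carries the closed form $D_T(i)=\exp(-y_i\sum_{t<T}\alpha_t h_t(x_i))/(N\prod_{s<T}Z_s)$ through an induction on $T$, and evaluates each factor as $(1-\epsilon_T)e^{-\alpha_T}+\epsilon_T e^{\alpha_T}=2\sqrt{\epsilon_T(1-\epsilon_T)}$ by substituting the optimal $\alpha_T$. Your version differs only in presentation: you define $Z_t$ as the normalizer and evaluate it separately, which is a cleaner ordering of the same steps. You also treat the degenerate round $\epsilon_t=0$ explicitly, a case the paper leaves implicit; note that well-definedness of $\alpha_t$ needs only $0<\epsilon_t<1$, and the bound $\tfrac12$ comes from the stopping rule rather than from the formula.
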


\begin{corollary}[Exponential Decay of Training Error]
\label{cor:convergence}
If each weak coding agent has edge $\gamma_t\ge\gamma>0$ for all rounds $t$, then $\mathrm{err}_{\mathrm{train}}(H)\le\exp(-2\gamma^2 T)$.
\end{corollary}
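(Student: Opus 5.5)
Corollary~\ref{cor:convergence} follows directly from Theorem~\ref{thm:training-error}. I will bound each factor $Z_t$ separately and then multiply. The plan has three steps.

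\textbf{Step 1: rewrite each factor in terms of the edge.} By the edge hypothesis, $\epsilon_t=\tfrac12-\gamma_t$ with $\gamma_t\ge\gamma>0$. Substituting gives
\[
Z_t=2\sqrt{\epsilon_t(1-\epsilon_t)}=2\sqrt{(\tfrac12-\gamma_t)(\tfrac12+\gamma_t)}=\sqrt{1-4\gamma_t^2}.
\]
Here I need $\gamma_t\le\tfrac12$, which holds because $\epsilon_t\ge0$.

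\textbf{Step 2: bound each factor by an exponential.} From $1+u\le e^{u}$ with $u=-4\gamma_t^2$ we get $1-4\gamma_t^2\le e^{-4\gamma_t^2}$. Both sides are nonnegative and the square root is monotone, so $Z_t\le e^{-2\gamma_t^2}$. Since $\gamma\mapsto e^{-2\gamma^2}$ is decreasing on $[0,\infty)$, this gives $Z_t\le e^{-2\gamma^2}$.

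\textbf{Step 3: combine.} All factors are nonnegative, so multiplying the bounds over $t=1,\dots,T$ gives $\prod_t Z_t\le e^{-2\gamma^2T}$. Theorem~\ref{thm:training-error} then yields $\mathrm{err}_{\mathrm{train}}(H)\le e^{-2\gamma^2T}$.

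There is no real conceptual obstacle here. The only points needing care are the side conditions in Step~2: nonnegativity of $1-4\gamma_t^2$ so the square root is defined, and monotonicity of the square root and of the product under termwise bounds of nonnegative quantities. These are exactly the details a Lean formalization forces one to state explicitly.

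One edge case should be handled separately. If $\epsilon_t=0$ for some $t$, then $\alpha_t$ is infinite in Algorithm~\ref{alg:boost}. I would treat this as training error already $0$, for which the bound holds trivially, or else rely on Theorem~\ref{thm:training-error} being stated for $\epsilon_t\in(0,\tfrac12)$.
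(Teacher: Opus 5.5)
Your proof is correct and follows the paper's argument: bound each $Z_t=2\sqrt{\epsilon_t(1-\epsilon_t)}$ by $e^{-2\gamma^2}$ using $1-u\le e^{-u}$, then multiply and invoke Theorem~\ref{thm:training-error}. The only difference is where monotonicity enters. The paper first replaces $\epsilon_t$ by $\tfrac12-\gamma$ inside $\epsilon(1-\epsilon)$, which is increasing on $[0,\tfrac12]$, and then applies the exponential inequality once. You instead compute $Z_t=\sqrt{1-4\gamma_t^2}$ exactly, apply the inequality at $\gamma_t$, and finish with monotonicity of $e^{-2\gamma^2}$; the two orderings are equivalent.
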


\begin{theorem}[Generalization Error of Agentic Boosting]
\label{thm:generalization}
Let $\Hyp$ be the hypothesis class of weak coding agents with VC-dimension $d$, and let the margin be measured on the \emph{normalized} combination $f/\sum_t\alpha_t$, so $yf(x)\in[-1,1]$ and the threshold is $\theta\in(0,1]$. Running the Protocol for $T$ rounds on $N$ i.i.d.\ samples, with probability at least $1-\delta$: $\Pr_{\D}[H(x)\ne y]\le\Pr_{\mathrm{train}}[yf(x)\le\theta]+O\!\left(\sqrt{\tfrac{1}{N}\left(\tfrac{d\ln^2(N/d)}{\theta^2}+\ln\tfrac1\delta\right)}\right)$, where the second term is \emph{independent of $T$} \citep{schapire1998boosting}.
\end{theorem}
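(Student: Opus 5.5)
The plan is to prove Theorem~\ref{thm:generalization} by reducing it to the margin theorem of \citet{schapire1998boosting}. The only thing the Protocol contributes is that the normalized score $g:=f/\sum_t\alpha_t$ is a convex combination of members of $\Hyp$. Indeed $\alpha_t\ge0$ at the optimal confidences (Def.~\ref{def:protocol}, since $\epsilon_t<\tfrac12$ whenever the loop continues), so $g\in\mathrm{co}(\Hyp)$. Nothing else about the run is used: not $T$, not the schedule $D_t$, not the composition $R_T\circ\cdots\circ R_1$. This is exactly why the deviation term cannot depend on $T$. I would prove a statement uniform over all of $\mathrm{co}(\Hyp)$ and then instantiate it at $g$. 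The sample $\{(x_i,y_i)\}_{i=1}^N$ is assumed i.i.d.\ from $\D$, as the theorem demands. With all labels equal to $+1$ (Def.~\ref{def:weak-agent}) the argument simplifies, but I would keep generic $y_i$.

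\textbf{Step 1: sparsification.} Given $g=\sum_t a_t h_t$ with $a_t=\alpha_t/\sum_s\alpha_s$, draw $n$ hypotheses i.i.d.\ from the distribution $(a_t)$ and average them to get $\hat g_n\in\Hyp^{(n)}$, the unweighted $n$-averages. By Hoeffding, for any fixed $(x,y)$ we have $\Pr[\,|y\hat g_n(x)-yg(x)|\ge\theta/2\,]\le 2e^{-n\theta^2/8}$. This yields two transfers. On the true distribution, $\Pr_\D[yg\le0]\le\Pr_\D[y\hat g_n\le\theta/2]+2e^{-n\theta^2/8}$. On the sample, $\Pr_{\mathrm{train}}[y\hat g_n\le\theta/2]\le\Pr_{\mathrm{train}}[yg\le\theta]+2e^{-n\theta^2/8}$. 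Both hold in expectation over the sampling of $\hat g_n$.

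\textbf{Step 2: uniform convergence over the sparse class.} For fixed $n$ and a threshold $\theta$ taken from the finite grid $\{2/n,4/n,\dots\}$ (using that $y\hat g_n$ takes values in multiples of $2/n$), the events $\{y\hat g_n(x)\le\theta/2\}$ are determined by the $n$-tuple of labelings $(h_1(x),\dots,h_n(x))$. By Sauer's lemma this class has growth function at most $(eN/d)^{dn}$ on $2N$ points. The standard double-sample/symmetrization bound then gives $\Pr_\D[\cdot]\le\Pr_{\mathrm{train}}[\cdot]+\sqrt{\tfrac{1}{N}\big(dn\ln(eN/d)+\ln(n/\delta_n)\big)}$ with probability $1-\delta_n$, uniformly over $\Hyp^{(n)}$ and the grid. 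Next I take a union bound over $n\ge1$ with $\delta_n=\delta/n(n+1)$, so that the statement holds for all $n$ at once. Chaining this with Step~1 gives, for every $g\in\mathrm{co}(\Hyp)$ and every $\theta$,
\[
\Pr_\D[yg\le0]\le\Pr_{\mathrm{train}}[yg\le\theta]+4e^{-n\theta^2/8}+\sqrt{\tfrac{1}{N}\big(dn\ln(eN/d)+\ln\tfrac{n^2}{\delta}\big)}.
\]

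\textbf{Step 3: optimize $n$.} Choosing $n\approx(8/\theta^2)\ln(N/d)$ makes $e^{-n\theta^2/8}\le d/N$, which is absorbed into the square-root term. Substituting turns $dn\ln(N/d)$ into $d\ln^2(N/d)/\theta^2$, which gives the stated $O(\cdot)$ expression; $H=\mathrm{sign}(f)$ errs only if $yg\le0$.

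\textbf{Main obstacle.} I expect the main difficulty to be Step~2's bookkeeping. The same $\hat g_n$ must serve both the population and the sample transfer, and $\theta$ must be made uniform through the discretized grid, without the slack $\theta\to\theta/2$ or the union bound over $n$ leaking a factor that grows with $T$. I would follow the original paper's ordering of the quantifiers exactly, and I would state explicitly that the i.i.d.\ and finite-VC premises are assumed rather than verified for LLM patch generators.
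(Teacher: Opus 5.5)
Your argument is correct, and its first move is exactly the paper's. You normalize by $\sum_t\alpha_t$ so that the score is a convex combination of members of $\Hyp$, note that $\mathrm{sign}$ is unchanged, and appeal to the margin theorem of \citet{schapire1998boosting}. The difference is where each of you stops.

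The paper stops there. Its proof sketch cites the margin bound as a black box, and the Lean declaration \texttt{thm3\_generalization\_bound} takes it as an explicit hypothesis rather than deriving it. You instead reprove the bound by the original random-sparsification argument:
\begin{enumerate}
\item a Hoeffding transfer at slack $\theta/2$ on both the population and the sample;
\item Sauer's lemma on the class of unweighted $n$-averages;
\item a union bound over the finitely many thresholds and over $n$;
\item the choice $n\approx 8\theta^{-2}\ln(N/d)$.
\end{enumerate}

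The paper's route buys brevity and a clean separation between the algebra it machine-checks and the statistics it assumes. Yours buys four things the citation leaves implicit:
\begin{itemize}
\item Why the deviation term cannot depend on $T$: the approximating average has size $n$ fixed by $\theta$ and $N/d$ alone.
\item Why data-dependent selection of the $h_t$ is harmless: the bound is uniform over $\mathrm{co}(\Hyp)$.
\item That the bound holds simultaneously for all $\theta$.
\item Where the nonnegativity $\alpha_t\ge0$ required by the convex-combination step comes from: you correctly tie it to the stopping rule.
\end{itemize}

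Three bookkeeping slips are harmless but worth fixing.
\begin{itemize}
\item On $2N$ points, Sauer gives $(2eN/d)^{dn}$, not $(eN/d)^{dn}$.
\item The values of $y\hat g_n$ are $-1+2k/n$, which are not multiples of $2/n$ when $n$ is odd. What the union bound actually needs is only that there are $n{+}1$ of them.
\item In Step~3 you should say explicitly that the leftover $2\ln n=O(\ln\ln(N/d)+\ln(1/\theta))$ from the union bound over $n$ is absorbed into $d\ln^2(N/d)/\theta^2$.
\end{itemize}
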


For a real LLM patch generator, $d$ is typically unknown, examples need not be i.i.d., and a self-expanding scaffold can violate the fixed-$\Hyp$ assumption (\S\ref{app:limitations}).

\begin{theorem}[Boosting as Greedy Coordinate Descent]
\label{thm:fsam}
The Agentic Boosting Protocol is equivalent to forward stagewise additive modeling with exponential loss $L(y,f(x))=e^{-yf(x)}$: at each round $t$ it greedily selects the weak agent and weight that most reduce $J(\alpha,h)=\sum_i\exp(-y_i(f_{t-1}(x_i)+\alpha h(x_i)))$. This is \citet{friedman2000additive}'s statistical view of boosting, read over specifications.
\end{theorem}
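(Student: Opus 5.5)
The plan is the Friedman--Hastie--Tibshirani argument: write down the exponential-loss objective of a forward-stagewise step, pull out the part that depends only on the past, and show that the remaining minimization over $(\alpha,h)$ is exactly the Protocol's selection rule, confidence, and reweighting. Let $f_{t-1}=\sum_{s<t}\alpha_s h_s$. Because $y_i$ and $h(x_i)$ lie in $\{-1,+1\}$, the summand factors:
\[
J(\alpha,h)=\sum_i \exp\bigl(-y_i f_{t-1}(x_i)\bigr)\exp\bigl(-\alpha y_i h(x_i)\bigr).
\]
Set $w_i^{(t)}:=\exp(-y_i f_{t-1}(x_i))$. The first step is an induction on $t$ showing that $w^{(t)}$, once normalized, is exactly $D_t$ from Def.~\ref{def:protocol}. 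The base case is $f_0=0$, which gives $w_i^{(1)}=1$, the uniform $D_1$. For the inductive step, $w_i^{(t+1)}=w_i^{(t)}\exp(-\alpha_t y_i h_t(x_i))$, and this is precisely the multiplicative update of Algorithm~\ref{alg:boost}. Since the normalizer $\sum_j w_j^{(t)}$ is a positive constant independent of $(\alpha,h)$, minimizing $J$ is the same as minimizing $\sum_i D_t(i)\exp(-\alpha y_i h(x_i))$.

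The next step is to split that sum according to whether $h$ is correct on point $i$:
\[
J(\alpha,h)\propto (1-\epsilon(h))e^{-\alpha}+\epsilon(h)e^{\alpha},
\]
where $\epsilon(h)=\Pr_{D_t}[h(x)\ne y]$. Equivalently, $J\propto e^{-\alpha}+(e^{\alpha}-e^{-\alpha})\epsilon(h)$.
\begin{itemize}
\item \emph{Selection rule.} For any fixed $\alpha>0$ the factor $e^{\alpha}-e^{-\alpha}$ is positive, so the inner minimization over $h$ is minimization of $\epsilon(h)$. That is exactly the Protocol's choice of $h_t$.
\item \emph{Confidence.} With $h_t$ fixed, the map $\alpha\mapsto(1-\epsilon_t)e^{-\alpha}+\epsilon_t e^{\alpha}$ is strictly convex. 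Setting its derivative to zero gives $e^{2\alpha}=(1-\epsilon_t)/\epsilon_t$, so $\alpha_t=\tfrac12\ln\tfrac{1-\epsilon_t}{\epsilon_t}$. This agrees with the Protocol, and the minimum value is $2\sqrt{\epsilon_t(1-\epsilon_t)}=Z_t$, linking back to Thm.~\ref{thm:training-error}.
\end{itemize}

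The main obstacle is the order of the joint minimization over $(\alpha,h)$, together with the edge cases. The claim that ``minimizing $\epsilon$ is optimal'' needs the optimal $\alpha$ to be positive, which holds exactly when $\epsilon_t<1/2$. That is the Protocol's stopping condition, so within any round the Protocol actually runs, the argument goes through. If some $h$ had $\epsilon(h)>1/2$, the pair $(-\alpha,-h)$ would do as well, so I will restrict to $\alpha\ge0$, matching Def.~\ref{def:strong-agent}.

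To handle the order of minimization, I will argue directly that $\min_{\alpha\ge0}J(\alpha,h)=2\sqrt{\epsilon(h)(1-\epsilon(h))}$ whenever $\epsilon(h)\le1/2$. This quantity is increasing in $\epsilon$ on $[0,1/2]$, so the jointly optimal $h$ is the one with minimal $\epsilon$, and the two-stage optimum coincides with the joint optimum. The degenerate case $\epsilon_t=0$ (where $\alpha_t=\infty$) I will treat separately as a limit or exclude it, as the Protocol implicitly does.
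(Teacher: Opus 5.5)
Your proposal is correct and is essentially the paper's proof: factor out $w_i^{(t)}=\exp(-y_if_{t-1}(x_i))$, split $J$ by correctness into $\bigl(\sum_i w_i^{(t)}\bigr)\bigl[(1-\epsilon)e^{-\alpha}+\epsilon e^{\alpha}\bigr]$, solve for $\alpha^*=\tfrac12\ln\tfrac{1-\epsilon}{\epsilon}$, and note that for $\alpha>0$ minimizing over $h$ means minimizing $\epsilon$. Your additions make explicit steps the paper takes for granted: the induction identifying normalized $w^{(t)}$ with $D_t$, the restriction $\alpha\ge0$, and the profile argument via $2\sqrt{\epsilon(1-\epsilon)}$ showing that coordinatewise and joint minimization agree (to close that argument, note that agents with $\epsilon(h)\ge\tfrac12$ give normalized $\min_{\alpha\ge0}J=1>Z_t$, so no appeal to negation is needed).
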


\subsection{Proofs}
\label{app:proof-bodies}

Grouped by family: first the four transferred classical results, which we rely on nowhere; then the refinement game and the dichotomy, which carry the paper; then the frozen-weight ceiling the dichotomy instantiates.

\paragraph{Transferred classical results.}

\begin{proof}[Proof of Theorem~\ref{thm:training-error}]
Let $f(x_i) = \sum_{t=1}^{T} \alpha_t h_t(x_i)$ be the unnormalized weighted sum. The training error can be bounded via the exponential loss:
\[
\frac{1}{N}\sum_{i=1}^N \mathbf{1}[H(x_i) \neq y_i] \leq \frac{1}{N}\sum_{i=1}^N \exp(-y_i f(x_i)).
\]
We prove this by induction on $T$. For $T = 0$, $f \equiv 0$, so the bound gives $1$, which is trivially satisfied.

\textbf{Inductive step.} Assume the bound holds after $T - 1$ rounds with weights $D_T(i) = \frac{\exp(-y_i \sum_{t=1}^{T-1} \alpha_t h_t(x_i))}{N \prod_{s=1}^{T-1} Z_s}$. After round $T$, we have:
\begin{align*}
\frac{1}{N}\sum_{i=1}^N \exp\!\left(-y_i \sum_{t=1}^{T} \alpha_t h_t(x_i)\right)
&= \sum_{i=1}^N D_T(i) \cdot \prod_{t=1}^{T-1} Z_t \cdot \exp(-y_i \alpha_T h_T(x_i)) \\
&= \prod_{t=1}^{T-1} Z_t \sum_{i=1}^N D_T(i) \exp(-y_i \alpha_T h_T(x_i)).
\end{align*}
We split the sum into correctly and incorrectly classified samples:
\begin{align*}
\sum_{i=1}^N D_T(i) \exp(-y_i \alpha_T h_T(x_i))
&= \sum_{i: h_T(x_i) = y_i} D_T(i) \, e^{-\alpha_T} + \sum_{i: h_T(x_i) \neq y_i} D_T(i) \, e^{\alpha_T} \\
&= (1 - \epsilon_T) e^{-\alpha_T} + \epsilon_T \, e^{\alpha_T}.
\end{align*}
Substituting $\alpha_T = \frac{1}{2}\ln\frac{1-\epsilon_T}{\epsilon_T}$:
\begin{align*}
(1 - \epsilon_T) e^{-\alpha_T} + \epsilon_T \, e^{\alpha_T}
&= (1 - \epsilon_T) \sqrt{\frac{\epsilon_T}{1 - \epsilon_T}} + \epsilon_T \sqrt{\frac{1 - \epsilon_T}{\epsilon_T}} \\
&= \sqrt{\epsilon_T(1-\epsilon_T)} + \sqrt{\epsilon_T(1-\epsilon_T)} \\
&= 2\sqrt{\epsilon_T(1 - \epsilon_T)} = Z_T.
\end{align*}
Combining:
\[
\frac{1}{N}\sum_{i=1}^N \exp(-y_i f(x_i)) = \prod_{t=1}^{T-1} Z_t \cdot Z_T = \prod_{t=1}^T Z_t.
\]
Since $\mathbf{1}[H(x_i) \neq y_i] \leq \exp(-y_i f(x_i))$ (because $\mathrm{sign}(f(x_i)) \neq y_i \implies y_i f(x_i) \leq 0 \implies \exp(-y_i f(x_i)) \geq 1$), the training error is bounded by $\prod_{t=1}^T Z_t$.
\end{proof}

\begin{proof}[Proof of Corollary~\ref{cor:convergence}]
From Theorem~\ref{thm:training-error}, $\mathrm{err}_{\mathrm{train}} \leq \prod_{t=1}^T Z_t$ where $Z_t = 2\sqrt{\epsilon_t(1-\epsilon_t)}$. Since $\epsilon_t \leq \frac{1}{2} - \gamma$, we have:
\begin{align*}
Z_t &= 2\sqrt{\epsilon_t(1 - \epsilon_t)} \\
&\leq 2\sqrt{\left(\tfrac{1}{2} - \gamma\right)\left(\tfrac{1}{2} + \gamma\right)} \\
&= 2\sqrt{\tfrac{1}{4} - \gamma^2} \\
&= \sqrt{1 - 4\gamma^2}.
\end{align*}
Using the inequality $\sqrt{1 - u} \leq e^{-u/2}$ for $u \in [0,1)$:
\[
\prod_{t=1}^T Z_t \leq \left(\sqrt{1 - 4\gamma^2}\right)^T \leq \left(e^{-2\gamma^2}\right)^T = e^{-2\gamma^2 T}.
\]
As $T \to \infty$, the training error converges to zero exponentially fast.
\end{proof}

\begin{remark}[The edge condition is a capability threshold, not a weak requirement]
\label{rem:capability-threshold}
The hypothesis $\gamma_t \geq \gamma > 0$ is substantially more demanding in code space than the analogous condition in ordinary binary classification, because the space is overwhelmingly hostile: almost all edits to a working program break it, so an error below $\tfrac{1}{2}$ already presupposes a considerable baseline of program understanding. That baseline is what \citet{kambhampati2024can} argues single-model reasoning does not reliably supply, which is one reason we treat a positive edge as a hypothesis to be measured rather than granted. That is a statement about the \emph{edit space}, not about the agents we measure; read as a claim about them it would predict the wrong thing.

Deployed patches are not destructive: on the diagnostic arm at most \statPBRegressMax\ of rounds lower quality, and \statPBHaikuFlat/\statPBSonnetFlat\ of trajectories never change it at all (Appendix~\ref{app:edge-decomposition}). The measured shortfall is idempotence, not damage, which is why a negative pooled $\gamma$ here cannot be read as these agents being anti-correlated with correctness. Corollary~\ref{cor:convergence} therefore guarantees exponential convergence only \emph{once} the base agents cross this capability threshold. Agents with weighted error $\epsilon_t \geq \tfrac{1}{2}$ (non-positive edge) receive weight $\alpha_t = \tfrac{1}{2}\ln\tfrac{1-\epsilon_t}{\epsilon_t} \leq 0$ and are gated out by the stopping rule of Algorithm~\ref{alg:boost}; the protocol thus performs a form of capability warm-up, admitting an agent to the ensemble only after it becomes a genuine weak learner.
\end{remark}

\begin{proof}[Proof sketch of Theorem~\ref{thm:generalization}]
Normalizing by $\sum_t\alpha_t$ makes $H$ a convex combination of $T$ weak learners from $\mathcal{H}$, which is the form the margin bound requires; $H=\mathrm{sign}(f)$ is unchanged by the rescaling. By the margin bound of \citet{schapire1998boosting}, for any $\theta \in (0,1]$ and with probability $1 - \delta$:
\[
\Pr_{\mathcal{D}}[y \, f(x) \leq 0] \leq \Pr_{\mathrm{train}}[y \, f(x) \leq \theta] + O\!\left(\sqrt{\frac{1}{N}\left(\frac{d \ln^2(N/d)}{\theta^2} + \ln\frac{1}{\delta}\right)}\right).
\]
The right-hand side does not depend on $T$; it is controlled by the VC-dimension $d$ of the weak-learner class and the margin threshold $\theta$, and it degrades as $\theta^{-2}$, so a bound quoted with a weaker $\theta$-dependence would overstate it. As boosting increases margins on training data, the term $\Pr_{\mathrm{train}}[y\,f(x)\leq\theta]$ decreases, giving better generalization. Balancing the two terms in $\theta$ would give an optimal stopping time of order $\frac{1}{\gamma^2}\ln\frac{N}{d}$; we do not derive it, and Cor.~\ref{cor:convergence} alone does not supply it---that corollary bounds the \emph{zero-one} training error, whereas the balance needs the $\theta$-margin training bound, a different inequality \citep{schapire1998boosting}. Nothing in this paper uses the stopping time.
\end{proof}

\begin{proof}[Proof of Theorem~\ref{thm:fsam}]
Fix round $t$ and let $w_i^{(t)} = \exp(-y_i f_{t-1}(x_i))$ be the unnormalized weights. The objective is:
\[
J(\alpha, h) = \sum_{i=1}^N w_i^{(t)} \exp(-y_i \alpha h(x_i)).
\]
Splitting by correctness:
\[
J(\alpha, h) = e^{-\alpha} \sum_{i: h(x_i) = y_i} w_i^{(t)} + e^{\alpha} \sum_{i: h(x_i) \neq y_i} w_i^{(t)}.
\]
In terms of the weighted error $\epsilon = \frac{\sum_{i: h \neq y} w_i}{\sum_i w_i}$:
\[
J(\alpha, h) = \left(\sum_i w_i\right) \left[ (1 - \epsilon) e^{-\alpha} + \epsilon \, e^{\alpha} \right].
\]
The factor $\sum_i w_i$ does not depend on $\alpha$ or $h$. Minimizing over $\alpha$ for fixed $h$ (setting $\frac{\partial J}{\partial \alpha} = 0$):
\[
-(1-\epsilon) e^{-\alpha} + \epsilon \, e^{\alpha} = 0 \implies \alpha^* = \frac{1}{2}\ln\frac{1 - \epsilon}{\epsilon},
\]
which is exactly the AdaBoost weight update rule. Minimizing over $h$ for fixed $\alpha > 0$ amounts to minimizing $\epsilon$, i.e., choosing the agent with the lowest weighted error. This establishes the equivalence between the Agentic Boosting Protocol and greedy coordinate descent on the exponential loss.
\end{proof}

\paragraph{The refinement game and the dichotomy.} These are the results the paper relies on, and they mention AdaBoost nowhere.

\begin{proof}[Proof of Theorem~\ref{thm:refinement}]
By telescoping:
\[
\mathbb{E}[V(z_T)] - V(z_0) = \sum_{t=1}^T \left(\mathbb{E}[V(z_t)] - \mathbb{E}[V(z_{t-1})]\right) \geq \sum_{t=1}^T \eta_t.
\]
Since $V \leq 1$, we have $\mathbb{E}[V(z_T)] \leq 1$, so $\sum_{t=1}^T \eta_t \leq 1 - V(z_0)$ for all $T$, which implies $\sum_{t=1}^\infty \eta_t \leq 1 - V(z_0)$.

\textbf{Connection to boosting.} The quality function $V$ plays the role of the margin $y f(x)$, the quantity boosting drives upward. The refinement property is the analog of the weak learning condition (Definition~\ref{def:weak-agent}): each agent must improve quality by at least $\eta_t$, just as each weak learner must have edge at least $\gamma_t$. The saturation bound is the analog of Corollary~\ref{cor:convergence} only in its \emph{consequence}---the number of useful rounds is finite---and not in its form: Cor.~\ref{cor:convergence} is a rate, whereas $\sum_t\eta_t\le1-V(x_0)$ bounds the total improvement without asserting how it is distributed over rounds. No rate is claimed here (\S\ref{sec:intro}).
\end{proof}

The high-probability form of the refinement game trades the improvement property's expectation for a per-round tail bound. The main text relies on it nowhere.

\begin{corollary}[High-Probability Refinement]
\label{cor:probabilistic-refinement}
In the setting of Thm.~\ref{thm:refinement}, suppose instead $\Pr[V(z_t)-V(z_{t-1})\ge\eta_t]\ge1-p_t$ for each round. Then $\Pr\!\left[V(z_T)\ge V(z_0)+\sum_{t=1}^T\eta_t\right]\ge1-\sum_{t=1}^T p_t$.
\end{corollary}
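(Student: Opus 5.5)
We derive Corollary~\ref{cor:probabilistic-refinement} with a union bound, writing the $T$-round improvement as a telescoping sum in the same way as the proof of Theorem~\ref{thm:refinement}. For each round $t$, let $E_t$ denote the good event $\{V(z_t)-V(z_{t-1})\ge\eta_t\}$. The hypothesis is $\Pr[E_t^c]\le p_t$. No assumption is made about independence across rounds, so the argument must avoid using it; a union bound needs none.

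The first step is deterministic. On the intersection $\bigcap_{t=1}^T E_t$ we sum the per-round inequalities, and the left side telescopes:
\[
V(z_T)-V(z_0)=\sum_{t=1}^T\bigl(V(z_t)-V(z_{t-1})\bigr)\ge\sum_{t=1}^T\eta_t .
\]
This gives the event inclusion $\bigcap_t E_t\subseteq\{V(z_T)\ge V(z_0)+\sum_t\eta_t\}$. If $z_0$ is random and not fixed, the same pointwise argument still applies, because the telescoping takes place inside each realization.

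The second step is probabilistic. By monotonicity of probability and then Boole's inequality applied to the complements,
\[
\Pr\Bigl[V(z_T)\ge V(z_0)+\textstyle\sum_t\eta_t\Bigr]\ge\Pr\Bigl[\textstyle\bigcap_t E_t\Bigr]=1-\Pr\Bigl[\textstyle\bigcup_t E_t^c\Bigr]\ge1-\textstyle\sum_{t=1}^T p_t .
\]
The bound is vacuous whenever $\sum_t p_t\ge1$, and it remains true in that case.

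No step is a real obstacle. The one point to check is that every $z_t$ lives on a single probability space, namely the joint law of the actions and the target. The events $E_t$ are then comparable and the union bound is legitimate. We should also note a contrast with Theorem~\ref{thm:refinement}: nothing in this argument uses $V\le1$. The saturation consequence follows only if we combine the result with the ceiling, which forces $\sum_t\eta_t\le1-V(z_0)$ on the good event whenever that event has positive probability.
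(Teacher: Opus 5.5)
Your proof is correct and follows essentially the same argument as the paper: define the per-round events $E_t$, note that the telescoping sum yields the conclusion on $\bigcap_t E_t$, and lower-bound the probability of that intersection by $1-\sum_t p_t$ via the union bound. Your side remarks are accurate extras the paper's proof does not need: no independence across rounds is used, the argument holds pointwise for a random $z_0$, and the ceiling $V\le 1$ plays no role.
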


\begin{proof}[Proof of Corollary~\ref{cor:probabilistic-refinement}]
Let $E_t$ be the event that $V(z_t) - V(z_{t-1}) \geq \eta_t$. By assumption, $\Pr[E_t] \geq 1 - p_t$. Using the union bound, the probability that all events $E_1, \dots, E_T$ occur simultaneously is at least $1 - \sum_{t=1}^T p_t$. When all $E_t$ occur, the telescoping sum yields $V(z_T) - V(z_0) = \sum_{t=1}^T (V(z_t) - V(z_{t-1})) \geq \sum_{t=1}^T \eta_t$. This relaxes the strict monotone requirement to a high-probability guarantee, more accurately reflecting stochastic agentic code edits.
\end{proof}

\begin{proof}[Proof of Proposition~\ref{prop:dichotomy}]
Part (i) generalizes the saturation bound of Theorem~\ref{thm:refinement} from the ceiling $1$ to an arbitrary $B$: by monotonicity, $\mathbb{E}[V(z_T)] \leq B$, and combined with $\mathbb{E}[V(z_T)] \geq \mathbb{E}[V(z_0)] + \sum_{t=1}^T \eta_t$ we obtain $\sum_{t=1}^T \eta_t \leq B - \mathbb{E}[V(z_0)]$ for every $T$. For part (ii), the telescoping lower bound of Theorem~\ref{thm:refinement} uses only per-round improvement and gives $\mathbb{E}[V(z_T)] \geq \mathbb{E}[V(z_0)] + \sum_{t=1}^T \eta_t \geq \mathbb{E}[V(z_0)] + Tc$. By the Archimedean property, for any $B$ we may choose $T > (B - \mathbb{E}[V(z_0)])/c$, giving $\mathbb{E}[V(z_T)] > B$.
\end{proof}

Part (i) is the \emph{stationary} regime: when the class of reachable patches is fixed, a bounded quality metric mathematically enforces a plateau. Part (ii) is the \emph{non-stationary} regime: sustaining a positive edge indefinitely requires the reachable class itself to keep growing, which is precisely what an unbounded ``intelligence explosion'' would demand.

\paragraph{The frozen-weight ceiling.}

\begin{proof}[Proof of Corollary~\ref{cor:frozen-ceiling}]
Part (i) applies Proposition~\ref{prop:dichotomy}(i) with the fixed bound $B = V^*(W)$, whose existence is assumption (M2); monotone convergence of the bounded, non-decreasing sequence $\mathbb{E}[V(x_t)]$ gives the limit $L \leq V^*(W)$. Part (ii) is the contrapositive of Proposition~\ref{prop:dichotomy}(ii): if a fixed ceiling $V^*(W)$ bounds $V$, then no uniform edge $\eta_t \geq c > 0$ can be sustained, so $\mathbb{E}[V(x_T)]$ cannot exceed every bound. Part~(i)'s \emph{saturation bound}---$\sum_{t<T}\eta_t \le V^*(W) - \mathbb{E}[V(x_0)]$ for every $T$---is machine-checked in Lean as \texttt{cor\_frozen\_weight\_ceiling} (a direct instantiation of \texttt{prop8\_saturation\_forces\_decay} with $B := V^*(W)$); the step from there to the limit $L$ is the monotone-convergence argument above and is not formalized, since the Lean statement does not assume $\eta_t\ge0$. Part (ii) is backed by \texttt{prop8\_sustained\_edge\_diverges}.
\end{proof}

What the corollary buys---that frozen weights secure the eventual ceiling but not stationary behaviour on the way to it, since (M1) lets $\Hyp_t(C)$ expand while $W$ is fixed---is the paper's governance claim, and \S\ref{app:rsi} develops it with the mechanisms that move a system between the two regimes.

\begin{remark}[Lean verifies the algebra, not the modeling assumptions]
\label{rem:lean-honesty}
Theorem~\ref{thm:refinement} and Proposition~\ref{prop:dichotomy} are elementary results: part~(i) is a bounded-monotone-sequence argument; part~(ii) is the Archimedean property. Lean verifies the formal algebra of these proofs---it does \emph{not} certify the modeling assumptions (the per-round improvement property, (M1), (M2)) that connect the abstract game to real agentic systems. Those assumptions are stated as explicit hypotheses and remain empirical claims.
\end{remark}

\section{Orchestration, Voting, and the Diversity a Vote Needs}
\label{app:orchestration}

The dismissal of ``elaborate multi-agent orchestration'' in the main text and \S\ref{app:guidance} (item~3) is narrower than it may read: it says orchestration cannot raise any individual worker's ceiling $V^*(W_i)$---a stronger model still wins by clearing tasks no arrangement of weaker models can reach. It is not an argument that hierarchical or orchestrator-worker \emph{topologies} fall outside the framework; the multi-agent code-generation systems surveyed in \S\ref{app:related} already route work across specialized agents, and the paper's own $\mathcal{S}=(W,C)$ framing (\S\ref{app:rsi}) already lists orchestration as part of the mutable scaffold $C$. We make that instance explicit.

\paragraph{Setup.} Replace single-agent sequential refinement with $k$ frozen workers under scaffolds $C_1,\dots,C_k$ (distinct prompts, tools, or specializations, possibly sharing weights $W$), coordinated by an orchestrator that, each round, selects among their outputs. Let $\mathcal{H}_t(C_i)$ be worker $i$'s reachable class (Cor.~\ref{cor:frozen-ceiling}'s framing), and define the orchestrator's reachable class under best-of-$k$ selection as $\mathcal{H}_t^{\mathrm{orch}} := \bigcup_{i=1}^k \mathcal{H}_t(C_i)$.

\subsection{Hard Selection Reaches the Best Worker, Exactly}

\begin{proof}[Proof of Prop.~\ref{prop:orchestration} (stated in \S\ref{sec:theory})]
For any instance $x$, the orchestrator's best-of-$k$ patch attains value $\max_i V(h_i(x))$ over each worker's best reachable patch, so $V^*_{\mathrm{orch}} = \sup_x \max_i V(h_i(x))$. Sup and finite max commute unconditionally: $\sup_x \max_i V(h_i(x)) = \max_i \sup_x V(h_i(x)) = \max_i V^*(W,C_i)$, with no side condition on the $\mathcal{H}_t(C_i)$'s. (One might expect a strict inequality whenever no single worker's class contains the union, on the intuition that ``the winning worker varies with $x$'' should cost something; it does not---two disjoint singleton classes $\mathcal{H}_t(C_1)=\{a\}$, $\mathcal{H}_t(C_2)=\{b\}$ with $V(a)=V(b)$ already give a counterexample, since neither singleton contains $\{a,b\}$ yet the ceilings tie exactly. Equality is the only case; there is no strict regime for hard selection.) Taking the union of $k$ scaffold-indexed classes is itself an instance of assumption~(M1): adding workers or routing among them is a scaffold rewrite (orchestration is already named as part of $C$, \S\ref{app:rsi}) that can expand $\mathcal{H}_t(C)$ without touching any $W_i$. Orchestrator-worker architectures do not escape Cor.~\ref{cor:frozen-ceiling}; best-of-$k$ selection instantiates its escape clause exactly, never more. This equality is machine-checked in Lean as \texttt{prop15\_tight\_ceiling\_orchestration}; a separate, weaker fact (\texttt{prop15\_orchestration\_ceiling}, \texttt{prop15\_orchestration\_dominates}) covers the case where each $V^*(W,C_i)$ is only an upper-bound hypothesis rather than a tight supremum, giving $V^*_{\mathrm{orch}} \ge \max_i V^*(W,C_i)$ in that weaker setting.
\end{proof}

\paragraph{Relation to the main-text guidance.} This sharpens, rather than merely complements, ``reach for capability before orchestration'' (\S\ref{app:guidance}, item~3): that guidance says orchestration cannot raise any $V^*(W,C_i)$ itself, and Prop.~\ref{prop:orchestration} now shows best-of-$k$ orchestration cannot exceed the best individual worker \emph{either}---the equality is exact, not merely an upper bound. Routing among a fixed set of frozen workers is bounded by, never exempt from, the frozen-weight ceiling; the only lever a scaffold rewrite of this kind provides is selecting which worker's already-existing ceiling to realize, an instance of (M1)'s class-expansion mechanism, not a mechanism special to multi-agent systems.

\subsection{Weighted Voting, and the Diversity It Requires}

\paragraph{Weighted aggregation: a partial answer.} Best-of-$k$ hard selection reduces cleanly to a union of reachable classes, and Prop.~\ref{prop:orchestration} shows that union buys \emph{exactly} the best individual worker's ceiling---no more. The natural place to look for genuine synergy is \emph{weighted or soft aggregation}: an orchestrator that votes or blends across workers rather than selecting one is structurally closer to AdaBoost's margin $f=\sum_t\alpha_t h_t$ (\S\ref{sec:framework}). Unlike sequential patch refinement, this mapping is not merely boosting-\emph{inspired}---it is boosting-\emph{exact}: workers responding to the \emph{same} instance stand beside each other to be voted on rather than composing (\S\ref{app:decomposition}'s objection to treating decomposition as an implicit vote does not apply here, since nothing is generated on the fly). Re-reading the index $t$ in Algorithm~\ref{alg:boost} as ``worker'' rather than ``round'' costs nothing---Thm.~\ref{thm:training-error} and Cor.~\ref{cor:convergence} transfer verbatim.

\begin{proposition}[Weighted Aggregation Beats the Weak-Learning Baseline]
\label{prop:soft-aggregation}
Let $k$ frozen workers be combined via the AdaBoost weighted vote rather than best-of-$k$ hard selection, incorporated in sequence $h_0,\dots,h_{k-1}$, each maintaining weighted error $\mathrm{err}_t \le \tfrac12-\gamma$ against the reweighted distribution current when it is incorporated, for some fixed $\gamma \in (0,\tfrac12)$. Then there exists $k_0$ such that for all $k \ge k_0$, the ensemble's zero-one training error is strictly below $\tfrac12-\gamma$---the weak-learning threshold any single worker is only guaranteed to satisfy, not guaranteed to beat.
\end{proposition}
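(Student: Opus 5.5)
We reduce Prop.~\ref{prop:soft-aggregation} to Cor.~\ref{cor:convergence}, reading the round index $t$ of Algorithm~\ref{alg:boost} as the worker index, as the preceding paragraph licenses. The workers $h_0,\dots,h_{k-1}$ are incorporated in sequence. Each has weighted error $\mathrm{err}_t\le\tfrac12-\gamma$ against the distribution $D_t$ current when it enters, with $\alpha_t=\tfrac12\ln\tfrac{1-\mathrm{err}_t}{\mathrm{err}_t}$. This is exactly the setting of Thm.~\ref{thm:training-error} with $T=k$. So the first step is to invoke that theorem: the zero-one training error of $H=\mathrm{sign}(\sum_t\alpha_t h_t)$ is at most $\prod_{t<k}Z_t$, with $Z_t=2\sqrt{\mathrm{err}_t(1-\mathrm{err}_t)}$.

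The second step bounds each factor. The map $\epsilon\mapsto 2\sqrt{\epsilon(1-\epsilon)}$ is increasing on $[0,\tfrac12]$, so $\mathrm{err}_t\le\tfrac12-\gamma$ gives $Z_t\le\sqrt{1-4\gamma^2}$. Using $\sqrt{1-u}\le e^{-u/2}$, as in the proof of Cor.~\ref{cor:convergence}, the error is at most $e^{-2\gamma^2k}$. One degenerate case must be handled: if some $\mathrm{err}_t=0$, then $\alpha_t=\infty$. We either adopt the convention that the ensemble then has zero training error on that distribution's support, and hence everywhere since $D_t$ has full support, or we note that $Z_t=0$ makes the product bound trivially zero. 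Either way the bound $e^{-2\gamma^2 k}$ survives.

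The final step chooses $k_0$. We need $e^{-2\gamma^2k}<\tfrac12-\gamma$. Since $\gamma\in(0,\tfrac12)$, the right side is strictly positive, so taking logarithms gives the explicit choice
\[
k_0=\left\lfloor \frac{\ln\bigl(1/(\tfrac12-\gamma)\bigr)}{2\gamma^2}\right\rfloor+1 .
\]
Monotonicity of $e^{-2\gamma^2k}$ in $k$ then gives the strict inequality for every $k\ge k_0$. This is an Archimedean step, the same shape as the proof of Prop.~\ref{prop:dichotomy}(ii), and it is the piece a Lean formalization would check.

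The main obstacle is not the algebra but matching hypotheses. The edge must be measured against the \emph{reweighted} distribution current at incorporation, not the uniform one. That is precisely what Thm.~\ref{thm:training-error}'s induction consumes, and it is the hypothesis that fails for a correlated real pool (cf.\ Prop.~\ref{prop:pool-dichotomy}(i): such a requirement can only hold for finitely many workers unless the pool interpolates). I would state plainly that the proposition is conditional on this sustained edge. I would also make sure the strictness is not lost: the comparison is against $\tfrac12-\gamma$, which is strictly positive, so no boundary case $\gamma=\tfrac12$ arises.
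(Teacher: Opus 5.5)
Your proposal is correct and follows the paper's own argument: chain Thm.~\ref{thm:training-error} and Cor.~\ref{cor:convergence} with $T=k$ to bound the zero-one error by $e^{-2\gamma^2 k}$, then pick $k_0$ so this falls below the fixed positive constant $\tfrac12-\gamma$. Your explicit $k_0$ and your treatment of $\mathrm{err}_t=0$ go slightly beyond the paper's bare existence argument, which simply assumes $\epsilon_t>0$; of your two options for that case, only the first is licensed, because the proof of Thm.~\ref{thm:training-error} needs a finite $\alpha_t$.
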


\begin{proof}[Proof sketch]
The zero-one training error is at most $\exp(-2\gamma^2 k)$ (Thm.~\ref{thm:training-error}, Cor.~\ref{cor:convergence}), which $\to 0$ as $k\to\infty$ while $\tfrac12-\gamma$ is a fixed positive constant; take $k_0$ large enough that $\exp(-2\gamma^2 k_0) < \tfrac12-\gamma$, which exists since $\exp(-2\gamma^2 k)$ is eventually smaller than any fixed positive threshold. Machine-checked in Lean as \texttt{cor\_soft\_aggregation\_beats\_weak\_baseline}, built on the pure-analysis fact \texttt{exists\_ensemble\_size\_beating\_weak\_threshold}.
\end{proof}

\paragraph{Width and depth together.} Prop.~\ref{prop:orchestration} and Prop.~\ref{prop:soft-aggregation} both concern a \emph{single} decision: $k$ workers, one selection or one vote. The refinement game (Thm.~\ref{thm:refinement}) concerns the opposite axis: one agent, $T$ sequential rounds. Neither addresses their composition---$k$ workers consulted on \emph{every} one of $T$ rounds---which is what an orchestrated loop actually runs. For hard selection the composition goes through cleanly.

\begin{proposition}[Multi-Round Best-of-$k$ Orchestration]
\label{prop:width-refinement}
Run the refinement game with $k$ frozen workers consulted every round, the orchestrator retaining whichever output has the highest quality, so that $V(R^{(i)}_t(x,a)) \le V(R^{\mathrm{orch}}_t(x,a))$ for every worker $i$ (such an $R^{\mathrm{orch}}$ always exists). If worker $i$ satisfies Thm.~\ref{thm:refinement}'s improvement property with edge $\eta_t^{(i)}$, then
\[
\mathbb{E}[V(z_T)] \;\ge\; V(z_0) + \sum_{t=1}^{T}\max_i \eta_t^{(i)} \;\ge\; V(z_0) + \sum_{t=1}^{T}\eta_t^{(j)} \quad\text{for every worker } j,
\]
and, since $V\le1$, the total improvement still saturates: $\sum_{t=1}^{T}\max_i \eta_t^{(i)} \le 1-V(z_0)$. The main text states the same result compactly (Prop.~\ref{prop:orchestration}).
\end{proposition}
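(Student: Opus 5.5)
The approach is to reduce the statement to Thm.~\ref{thm:refinement} applied to a single composite agent, the orchestrator. Write $z_t = R^{\mathrm{orch}}_t(z_{t-1}, a_t^*)$ for the orchestrated draft. The first step is to transfer each worker's improvement property to the orchestrator, round by round. Fix $t$ and a current draft $z$. Pointwise domination $V(R^{(i)}_t(z,a)) \le V(R^{\mathrm{orch}}_t(z,a))$ holds for every $i$ and every action. Integrating it under the same expectation over $y\sim\D(\cdot\mid x)$, and over the inner action distribution, gives
\[
\mathbb{E}[V(R^{\mathrm{orch}}_t(z,a_t^*))] \ge \mathbb{E}[V(R^{(i)}_t(z,a_t^*))] \ge V(z) + \eta_t^{(i)}
\]
for each $i$. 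Since the left side does not depend on $i$, taking the maximum over the finitely many workers yields the orchestrator's improvement property with edge $\max_i \eta_t^{(i)}$.

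This step is where I expect the only real obstacle. The comparison needs monotonicity of expectation with respect to pointwise order of the two integrands, taken under a \emph{common} measure. That in turn requires the orchestrator's best action and the workers' best actions to be evaluated under the same action/outcome distribution, or else $a_t^*$ may differ between the two sides. I would handle this by stating explicitly, as the parenthetical in Prop.~\ref{prop:orchestration} warns, the added hypothesis that the inner action-expectation is monotone, i.e.\ pointwise domination of integrands implies domination of expectations at the chosen action. Under that hypothesis the rest is bookkeeping. As an alternative, one could define $a_t^*$ for the orchestrator as the joint action and note that $\mathbb{E}[\max_i X_i] \ge \max_i \mathbb{E}[X_i]$. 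The existence claim for $R^{\mathrm{orch}}$ is immediate: choose an argmax over the finite set of worker outputs, breaking ties by index.

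With the per-round property in hand, invoke Thm.~\ref{thm:refinement} with $\eta_t := \max_i \eta_t^{(i)} > 0$. Telescoping then gives $\mathbb{E}[V(z_T)] \ge V(z_0) + \sum_{t\le T}\max_i\eta_t^{(i)}$. The second inequality follows termwise from $\max_i \eta_t^{(i)} \ge \eta_t^{(j)}$. Finally, the saturation clause is Thm.~\ref{thm:refinement}'s bound $\mathbb{E}[V(z_T)] \le 1$, or equivalently Prop.~\ref{prop:dichotomy}(i) with $B=1$, applied to the composite agent. Rearranging gives $\sum_{t\le T}\max_i \eta_t^{(i)} \le 1 - V(z_0)$ for every $T$.
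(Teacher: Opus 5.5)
Your proposal is correct and is essentially the paper's argument. Pointwise domination, plus the extra inner-monotonicity hypothesis that the paper's formalization also flags, lets the orchestrated round inherit the edge $\max_i\eta_t^{(i)}$; Thm.~\ref{thm:refinement}'s telescoping, the termwise bound $\eta_t^{(j)}\le\max_i\eta_t^{(i)}$, and $V\le 1$ then finish all three claims. The only difference is cosmetic: the paper fixes a maximizing worker $i^\star$ and uses its chain alone, where you maximize over all $k$ chains, and these amount to the same step.
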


\begin{proof}[Proof sketch]
Fix a round $t$ and let $i^\star$ attain $\max_i \eta_t^{(i)}$. Hard selection dominates worker $i^\star$ pointwise, so the orchestrated round inherits worker $i^\star$'s improvement and the orchestrated per-round edge is $\max_i \eta_t^{(i)}$; Thm.~\ref{thm:refinement}'s telescoping applies verbatim to that edge, and its saturation argument to the same sum. The middle inequality is $\eta_t^{(j)}\le\max_i \eta_t^{(i)}$ summed over rounds. Machine-checked in Lean as \texttt{thm\_width\_refinement\_game}, \texttt{cor\_width\_dominates\_best\_worker}, and \texttt{cor\_width\_saturation}, with \texttt{exists\_width\_orchestrator} discharging the non-vacuity of the domination hypothesis. One hypothesis beyond Thm.~\ref{thm:refinement}'s list is needed and is flagged as such in the formalization: monotonicity of the \emph{inner} action-expectation, not only the outer one, since the selection step compares two integrands under the same expectation.
\end{proof}

\paragraph{Width buys rate, not budget.} The two inequalities pull in opposite directions. The first says orchestration strictly helps per round: the cumulative guarantee tracks the \emph{best} worker available at each round and dominates every individual worker's depth-only guarantee---not merely their average, and without knowing in advance which worker will win. The second says this cannot compound: the total improvement budget $1-V(z_0)$ is fixed by the bounded metric, so consulting $k$ workers every round spends that budget faster rather than enlarging it. Depth$\times$width composition does not escape the frozen-weight ceiling (Cor.~\ref{cor:frozen-ceiling}); it reaches it sooner. This is the multi-round counterpart of Prop.~\ref{prop:orchestration}'s single-decision equality, and it extends ``reach for capability before orchestration'' (\S\ref{app:guidance}, item~3) into the time dimension: adding rounds to an orchestrated loop is subject to the same saturation as adding rounds to a single agent.

\paragraph{Quantifying the diversity a vote needs.} Prop.~\ref{prop:soft-aggregation}'s edge hypothesis is qualitative and, more awkwardly, is stated against the \emph{reweighted} distribution $D_t$, so it cannot be checked from primitive facts about the workers. Prop.~\ref{prop:diversity} (stated in \S\ref{sec:theory}) gives a deterministic route instead, carrying diversity in a single integer. One consequence it leaves implicit is the form a designer would actually use: if any $m$ bounds every $m_i$ and $m{<}k/2$, the vote has zero training error (\texttt{thm\_bounded\_overlap\_zero\_error}), so a pool can be certified from an upper bound on overlap without computing $m^\star$ exactly.

\begin{proof}[Proof sketch of Prop.~\ref{prop:diversity} (stated in \S\ref{sec:theory})]
Splitting the margin at point $i$ over the workers that get $i$ right and those that get it wrong gives the identity $y_i f(x_i) = k - 2m_i$. The zero-one loss is an average of nonnegative indicators, so it vanishes exactly when every indicator does, i.e.\ when $y_i f(x_i)>0$ for every $i$, i.e.\ when $k-2m_i>0$---the claimed equivalence (\texttt{thm\_overlap\_iff\_zero\_error}, with \texttt{cor\_tight\_overlap\_iff} for the $m^\star$ form), from which the sufficient direction above follows in one line. For sharpness, a single point at $2m_i{=}k$ has margin exactly $0$---a tie, which the zero-one loss counts as an error, as it must, since a tied vote carries no information---so $2m<k$ cannot be weakened to $2m\le k$. If \emph{every} point sits at $m_i{=}k/2$, every margin vanishes and the error is $1$; that uniform hypothesis is what \texttt{thm\_overlap\_boundary\_sharp} assumes, and the even split witnesses that the case is non-vacuous (\texttt{exists\_boundary\_overlap\_family}). The tight overlap reaching $k/2$ on its own gives only the first of the two, since points below the max keep positive margins. The unweighted vote is the existing margin $f=\sum_t\alpha_t h_t$ at $\alpha_t\equiv1$, so no separate notion of voting is introduced.
\end{proof}

\paragraph{Overlap is a real constraint, and counting alone buys nothing.} First, at the tight overlap the condition is \emph{exact}, not merely sufficient (Prop.~\ref{prop:diversity}), so it cannot be traded for a weaker one: it is a real constraint on orchestration design rather than a formality. Workers drawn from genuinely different model families, scaffolds, or prompts plausibly satisfy it, whereas the same model resampled does not---its failures concentrate on the same hard points, driving $m$ toward $k$. That second half is measurable on Part~B rather than merely plausible, and we measure it below. Second, dropping the condition and counting alone buys \emph{nothing}: double-counting failures and applying Markov gives only that the vote's zero-one training error is at most $2\bar{\epsilon}$, twice the average per-worker error under the uniform distribution (\texttt{cor\_overlap\_markov\_bound}), and for $\bar{\epsilon}<\tfrac12$ the bound $2\bar{\epsilon}$ is \emph{weaker} than $\bar{\epsilon}$---weaker than using a single worker. Nor could a counting-only bound do better: the adversarial configuration in which every worker fails on the same $\bar{\epsilon}N$ points genuinely has ensemble error $\bar{\epsilon}$. It is the overlap hypothesis, not individual accuracy, that does the work---the precise sense in which \emph{diversity} is what the vote needs.

\paragraph{Measuring the overlap: same-family workers fail together.} Prop.~\ref{prop:diversity} is checkable from primitive facts about the workers, which means it can be checked against data rather than asserted. Part~B supplies \statPBOverlapK\ workers on a shared task pool: the \statPBOverlapArms\ arms (two capability tiers $\times$ three feedback modes) $\times$ \statPBSeeds\ seeds, all evaluated on the same \statPBK\ SWE-bench instances. Reading a worker's $\pm1$ output as whether it solves a task---running-best quality reaching $\tau$, the same criterion used throughout \S\ref{sec:partB}---gives $m_i$ directly. The result is the extreme of the pessimistic case: the tight overlap is $m^\star{=}\statPBOverlapMStar{=}k$, not merely above the $k/2{=}\statPBOverlapHalfK$ threshold but at its ceiling, and it is $m^\star{=}k$ at \emph{every} round, not only the last. Some task is failed by all \statPBOverlapK\ workers; \statPBOverlapViolate\ of tasks have $2m_i\ge k$, which by Prop.~\ref{prop:diversity} \emph{is} the unweighted vote's zero-one error (95\%~CI \statPBOverlapVoteErrCI, cluster bootstrap over tasks as in \S\ref{sec:partB}). Fig.~\ref{fig:llm-overlap} shows the distribution: a mode of easy tasks no worker fails, a second mode straddling $k/2$, and a spike at $m_i{=}k$.

\paragraph{One decisive statistic, and several weaker ones.} Decisive: $m^\star{=}k$. No interval is needed for a statistic sitting at its maximum, and it is the overlap condition, not any error comparison, that Prop.~\ref{prop:diversity} makes exact. Weaker: everything involving $\bar{\epsilon}$. The mean per-worker error is $\bar{\epsilon}{=}\statPBOverlapEpsBar$ (CI \statPBOverlapEpsBarCI), so the Markov bound $2\bar{\epsilon}{=}\statPBOverlapMarkov$ is satisfied but nearly vacuous---indeed weaker than $\bar{\epsilon}$ itself, exactly as predicted for $\bar{\epsilon}<\tfrac12$. The vote's error \statPBOverlapVoteErr\ does sit above $\bar{\epsilon}$ at every round, but we rest nothing on that: the paired gap is \statPBOverlapGap\ with 95\%~CI \statPBOverlapGapCI\ (\statPBOverlapGapPosPct\ of bootstrap replicates positive), an interval including zero, and its sign does not survive check~(iii) below. That interval is the \emph{final} round's, and the qualification is narrower than it may read: the paired gap excludes zero at every round through round~\statPBOverlapGapLastExclRound\ (\statPBOverlapGapLastExcl, 95\%~CI \statPBOverlapGapLastExclCI), and only at the last round does it straddle zero, as the eligible population shrinks and every error falls. We still rest nothing on it, because check~(iii) removes the excess at every round at once: what it is evidence of is tier pooling, and a round-by-round pattern of a pooling artifact is still an artifact. Fig.~\ref{fig:llm-overlap} therefore draws the marginal intervals, which overlap throughout, and leaves the paired comparison to this paragraph.

\emph{One limit, then three robustness checks.} The limit is the pool: these \statPBOverlapK\ workers are one vendor, two tiers, and three prompt variants, so this measures only the pessimistic half of the claim above (that a resampled same-family worker inherits the same hard points) and cannot test the optimistic half, that genuinely different model families satisfy the condition; that remains the open empirical question.

\emph{(i)~Granularity.} At one worker per arm (seed majority within arm, $k{=}\statPBOverlapArms$) the picture is unchanged, $m^\star{=}\statPBOverlapArmMStar{=}k$ with \statPBOverlapArmViolate\ tasks violating, so the result is not an artifact of counting seeds as workers. \emph{(ii)~Threshold.} At SWE-bench's own bar ($\tau{=}\statPBTauResolved$, every test passing; \S\ref{sec:partB}) the statistic is identical, $m^\star{=}\statPBOverlapMStarResolved{=}k$ with \statPBOverlapViolateResolved\ violating. That is expected rather than reassuring, since raising $\tau$ can only turn a solved task into a failed one and $m_i$ is monotone in it, but it means the headline does not rest on the looser threshold.

\emph{(iii)~Tier pooling}, which is where the two halves of the result come apart. Restricting to a single tier leaves the overlap at its ceiling on both sides ($m^\star{=}\statPBOverlapSonnetMStar{=}k$ over Sonnet's \statPBOverlapTierK\ workers, $m^\star{=}\statPBOverlapHaikuMStar{=}k$ over Haiku's), so the headline is not an artifact of mixing a strong tier with a weak one. The error comparison does not survive: within a tier the vote matches the average worker almost exactly (Sonnet \statPBOverlapSonnetVoteErr\ against $\bar{\epsilon}{=}\statPBOverlapSonnetEpsBar$, \statPBOverlapSonnetViolate\ violating, gap \statPBOverlapSonnetGap, CI \statPBOverlapSonnetGapCI; Haiku \statPBOverlapHaikuVoteErr\ against \statPBOverlapHaikuEpsBar, \statPBOverlapHaikuViolate\ violating, gap \statPBOverlapHaikuGap, CI \statPBOverlapHaikuGapCI). Mixing tiers inflates the majority-failure count while $\bar{\epsilon}$ averages the two, so the pooled gap's positive sign is a pooling effect rather than a fact about voting. Finally, $\tau$-thresholded quality is a structural proxy for the functional correctness Prop.~\ref{prop:diversity}'s $\pm1$ label denotes (\S\ref{sec:limitations}); Part~C cannot substitute, since its sessions share no task instances across workers and $m_i$ is undefined there.

Fig.~\ref{fig:llm-overlap} in \S\ref{sec:theory} plots the distribution and, in its right panel, the per-round errors with a cluster-bootstrap interval on every bar: the vote's error, $\bar\epsilon$, and the Markov bound \texttt{cor\_overlap\_markov\_bound}, whose interval is a deterministic $2\times$ of $\bar\epsilon$'s. Those are marginal intervals and they overlap at every round, which is the weaker of the two comparisons available; the paired test is the one reported above, and check~(iii) is what attributes the excess to tier pooling.

\paragraph{A separate route to the reweighted edge hypothesis.} Prop.~\ref{prop:soft-aggregation}'s hypothesis can also be discharged directly, at a stated cost. If AdaBoost's reweighting concentrates mass by at most a factor $\kappa$ versus uniform ($D_t(i)\le\kappa/N$ for all $i$) and worker $t$'s error \emph{under the uniform distribution} is at most $(\tfrac12-\gamma)/\kappa$, then $\mathrm{err}_t\le\tfrac12-\gamma$ as required (\texttt{lem\_bounded\_reweighting\_edge}). The concentration cap $\kappa$ need not be assumed: since $D_t(i)=\exp(-y_if_t(x_i))/\sum_j\exp(-y_jf_t(x_j))$ is a softmax of the margins, any bound $M$ on the margins yields $\kappa\le e^{2M}$ (\texttt{D\_le\_of\_margin\_bound}), and with $\pm1$ outputs $M=\sum_{s<t}|\alpha_s|$ always works, giving
\[
\kappa \;\le\; \exp\Bigl(2\sum_{s<t}|\alpha_s|\Bigr)
\]
as a function of confidences the algorithm already logs (\texttt{cor\_rho\_from\_confidences}), and hence a version of the edge conclusion with no $\kappa$ hypothesis at all (\texttt{cor\_reweighting\_edge\_from\_confidences}). What this buys is precise and limited: $\kappa$ moves from a run-level regularity \emph{assumption} to a \emph{computable} quantity. It does not become small. The bound grows exponentially in the number of rounds, so the uniform-error requirement it implies, $(\tfrac12-\gamma)e^{-2\sum_{s<t}|\alpha_s|}$, tightens exponentially and is close to vacuous after many rounds. Whether $\kappa$ is small on a given run is an empirical question this does not answer.

\subsection{Reusing a Pool: A Horizon, or an Interpolating Cone}

\paragraph{Reuse cannot supply the edge.} One structural question about multi-round weighted aggregation admits a sharp negative answer, and the reweighting recursion is what makes it askable: because $D_t$ is defined in closed form, $D_{t+1}(i)=D_t(i)e^{-\alpha_t y_ih_t(x_i)}/Z_t$ is a derived identity rather than a separate definition (\texttt{D\_succ\_eq}).

\begin{proposition}[Immediate Reuse Zeroes the Edge]
\label{prop:self-reuse}
Run one round with worker $h_t$ at the optimal confidence $\alpha_t=\tfrac12\log\frac{1-\epsilon_t}{\epsilon_t}$, where $\epsilon_t=\mathrm{err}_t\in(0,1)$. Then worker $h_t$'s weighted error against the distribution its own incorporation produced is exactly $\tfrac12$:
\[
\sum_{i:\,y_ih_t(x_i)=-1} D_{t+1}(i) \;=\; \tfrac12 .
\]
Its edge on the next round is therefore exactly zero.
\end{proposition}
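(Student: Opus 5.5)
The plan is a direct computation with the closed-form reweighting identity $D_{t+1}(i)=D_t(i)e^{-\alpha_t y_ih_t(x_i)}/Z_t$ (\texttt{D\_succ\_eq}), together with the normalizer computed in the proof of Theorem~\ref{thm:training-error}. First I split the points by the sign of $y_ih_t(x_i)$. On the misclassified set, where $y_ih_t(x_i)=-1$, each weight is multiplied by $e^{\alpha_t}$. On the correctly classified set each weight is multiplied by $e^{-\alpha_t}$. Summing $D_t$ over these two sets gives $\epsilon_t$ and $1-\epsilon_t$ respectively, by the definition of $\epsilon_t=\mathrm{err}_t$. Hence
\[
\sum_{i:\,y_ih_t(x_i)=-1} D_{t+1}(i)=\frac{\epsilon_t e^{\alpha_t}}{Z_t},\qquad Z_t=(1-\epsilon_t)e^{-\alpha_t}+\epsilon_t e^{\alpha_t}.
\]

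Second, I substitute the optimal confidence. Since $\epsilon_t\in(0,1)$, $\alpha_t$ is finite and $e^{\alpha_t}=\sqrt{(1-\epsilon_t)/\epsilon_t}$. This gives $\epsilon_te^{\alpha_t}=\sqrt{\epsilon_t(1-\epsilon_t)}=(1-\epsilon_t)e^{-\alpha_t}$, which is exactly the computation already carried out in the proof of Theorem~\ref{thm:training-error}. The two halves of $Z_t$ are therefore equal, so $Z_t=2\sqrt{\epsilon_t(1-\epsilon_t)}$, and the ratio is $\tfrac12$. The edge statement follows from the definition of edge as $\tfrac12$ minus the weighted error.

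The only step that needs care is bookkeeping, not ideas. I have to make sure $Z_t$ is the normalizer of the unnormalized update, so that $\sum_i D_{t+1}(i)=1$. I also have to make sure $\epsilon_t\in(0,1)$ is used to keep $\alpha_t$ finite and $Z_t>0$. At the endpoints $\epsilon_t\in\{0,1\}$ the confidence is infinite and the claim is meaningless, which is why the hypothesis excludes them.

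In a formal version I expect the main friction to be rewriting $e^{\frac12\log r}$ as $\sqrt r$ and clearing the square roots. I would avoid square roots entirely by cross-multiplying. The claim $\epsilon_te^{\alpha_t}=\tfrac12 Z_t$ is equivalent to $\epsilon_te^{\alpha_t}=(1-\epsilon_t)e^{-\alpha_t}$, that is, $e^{2\alpha_t}=(1-\epsilon_t)/\epsilon_t$, and this holds by the definition of $\alpha_t$ and $\exp\circ\log=\mathrm{id}$ on positive reals.
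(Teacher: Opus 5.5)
Your proposal is correct and matches the paper's proof essentially step for step. Both sum the reweighting recursion over the misclassified set to get $\epsilon_te^{\alpha_t}/Z_t$, then use the balance identity $e^{2\alpha_t}=(1-\epsilon_t)/\epsilon_t$ to make the two halves of $Z_t$ equal, which gives the ratio $\tfrac12$; your square-root-free formal plan is exactly the paper's route (\texttt{optimal\_alpha\_balances}).
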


\begin{proof}[Proof sketch]
Summing the recursion over the points $h_t$ gets wrong, where the tilt is the constant $e^{\alpha_t}$, gives $\epsilon_t e^{\alpha_t}/Z_t$ (\texttt{reused\_worker\_err\_eq}). At the optimal $\alpha_t$ the two halves of $Z_t$ are equal, $(1-\epsilon_t)e^{-\alpha_t}=\epsilon_te^{\alpha_t}$, since $e^{2\alpha_t}=(1-\epsilon_t)/\epsilon_t$ (\texttt{optimal\_alpha\_balances}); hence $Z_t=2\epsilon_te^{\alpha_t}$ and the ratio is $\tfrac12$. Machine-checked as \texttt{thm\_self\_reuse\_zero\_edge}; \texttt{exists\_zero\_edge\_reuse\_instance} discharges non-vacuity, since round zero starts uniform (\texttt{D\_zero\_eq\_uniform}) and a worker right on one of two points has $\epsilon_0=\tfrac12$. No square roots are needed: the balance identity does all the work.
\end{proof}

\paragraph{The naive multi-round scheme is ruled out.} This makes precise, and strengthens, a claim the discussion below states informally. Correlated workers fail the edge hypothesis via AdaBoost's own stopping rule; Prop.~\ref{prop:self-reuse} is the extremal instance of that rule---a worker is maximally correlated with itself, and the reweighting it induces is exactly calibrated to neutralize it. So the obvious way to extend Prop.~\ref{prop:soft-aggregation} to many rounds---keep consulting the worker you just consulted---cannot sustain the edge hypothesis for even one further round. Prop.~\ref{prop:soft-aggregation} is untouched: its workers $h_0,\dots,h_{k-1}$ are each incorporated once, against the distribution current at that time, so no worker is reused there. And this settles only \emph{self}-reuse; whether some \emph{other} member of a reused pool still has an edge is a different question, settled below (Prop.~\ref{prop:pool-horizon}). The alignment with Part~B's high failure overlap is suggestive rather than confirmatory: Part~B runs sequential self-refinement, not AdaBoost reweighting, so the result explains why reuse is structurally unpromising without predicting that measurement.

\paragraph{The cross-worker case: a capacity ceiling, not a correlation obstruction.} Prop.~\ref{prop:self-reuse} leaves open whether some \emph{other} member of a reused pool retains an edge. Two facts settle the shape of the answer. First, there is no per-round obstruction: against any round's distribution---including the one a reuse produces---a binary worker with error at most $1/N$ exists, hence edge at least $\tfrac12-1/N$ (\texttt{exists\_edge\_worker\_vs\_any\_round}), so Prop.~\ref{prop:self-reuse} does not generalize into a one-round impossibility. Second, what does bind is the pool's \emph{capacity}: collecting the confidences a schedule assigns to each pool member shows the ensemble margin never leaves the pool's nonnegative cone, a set fixed by the pool and independent of both the number of rounds and the schedule (\texttt{lem\_pool\_span\_collapse}). The next two propositions are the two halves of Prop.~\ref{prop:pool-dichotomy}, stated here with the exact hypotheses the main-text summary compresses: this one covers $\epsilon_0>0$, and Prop.~\ref{prop:interpolating-pool} below covers $\epsilon_0{=}0$.

\begin{proposition}[Reused-Pool Horizon]
\label{prop:pool-horizon}
Run Algorithm~\ref{alg:boost} with a fixed pool of $k$ workers under any schedule, at the optimal confidences, every scheduled worker having edge $\gamma>0$ against the distribution current when it is used. If no element of the pool's nonnegative cone achieves zero-one training error below $\epsilon_0>0$, then
\[
\epsilon_0 \;\le\; e^{-2\gamma^2 T}, \qquad\text{hence}\qquad T \;\le\; \frac{\log(1/\epsilon_0)}{2\gamma^2}.
\]
The bound is uniform over the schedule and the confidences, and depends on $k$ only through $\epsilon_0$. Consequently the edge hypothesis cannot hold at \emph{every} round: a pool whose cone cannot fit the training data provably fails it by a finite, computable round.
\end{proposition}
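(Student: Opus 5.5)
The argument squeezes the training error of the run's own ensemble between two bounds. Let $f_T=\sum_{t=1}^T\alpha_t h_t$ be the margin after $T$ rounds, where each $h_t$ is drawn from the fixed pool $\{g_1,\dots,g_k\}$ by an arbitrary schedule.

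The first step is the collection identity behind \texttt{lem\_pool\_span\_collapse}. We group the confidences by pool member and write $f_T=\sum_{j=1}^k\beta_j g_j$ with $\beta_j:=\sum_{t:\,h_t=g_j}\alpha_t$. At the optimal confidences with $\epsilon_t\le\tfrac12-\gamma<\tfrac12$ we have $\alpha_t=\tfrac12\log\frac{1-\epsilon_t}{\epsilon_t}>0$. Hence every $\beta_j\ge0$, and $f_T$ lies in the pool's nonnegative cone. This conclusion holds for every schedule and every round count, which is why the final bound is uniform in both.

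The second step is the lower bound. The hypothesis says that no element of the cone achieves zero-one training error below $\epsilon_0$. Applied to $H=\mathrm{sign}(f_T)$, this gives $\epsilon_0\le\frac1N\sum_i\mathbf 1[H(x_i)\ne y_i]$. Ties $f_T(x_i)=0$ count as errors, as in Prop.~\ref{prop:diversity}, and they are also covered by the exponential bound used next, since $e^{0}=1$.

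The third step is the upper bound. We invoke Thm.~\ref{thm:training-error} and Cor.~\ref{cor:convergence} directly. Their proofs use only the reweighting recursion and the per-round edge $\epsilon_t\le\tfrac12-\gamma$ against the distribution current at round $t$. They never use that distinct rounds use distinct workers, so they apply verbatim to any schedule. This gives a training error of at most $\prod_t Z_t\le e^{-2\gamma^2T}$. Chaining the two bounds yields $\epsilon_0\le e^{-2\gamma^2T}$. Taking logarithms, which is valid because $\epsilon_0>0$, gives $T\le\log(1/\epsilon_0)/2\gamma^2$.

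The "consequently" clause is the contrapositive. For any $T$ exceeding this bound, the edge hypothesis cannot hold at every one of the first $T$ rounds. The bound depends on $k$ only through $\epsilon_0$, because the pool enters solely via the cone's best error.

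I expect the main obstacle to be bookkeeping rather than analysis. First, the confidences must be nonnegative so that $f_T$ genuinely lands in the cone and not merely in the span; this is what the optimal-$\alpha$, positive-edge hypothesis buys. Second, the tie convention must be consistent between the definition of $\epsilon_0$ and the zero-one error bounded by Thm.~\ref{thm:training-error}. Third, the edge case $\epsilon_t=0$ makes $\alpha_t$ infinite. I would first try to exclude it by noting that a zero-error round already gives training error $0<\epsilon_0$ for the cone element $g_j$ itself, which is a contradiction. That makes such rounds vacuous rather than problematic.
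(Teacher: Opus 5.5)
Your proposal is correct and follows the paper's proof essentially step for step. You group the confidences per pool member to land the run's margin in the nonnegative cone, which is nonnegative because a positive edge at the optimal confidence forces $\alpha_t\ge0$. You then sandwich the run's zero-one loss between $\epsilon_0$ below and $e^{-2\gamma^2T}$ above via Thm.~\ref{thm:training-error} and Cor.~\ref{cor:convergence}. Your disposal of the $\epsilon_t=0$ case is a sound addition that the paper leaves to its hypotheses: AdaBoost's $D_t$ has full support, so a zero-error worker is a cone element with zero training error, contradicting $\epsilon_0>0$.
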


\begin{proof}[Proof sketch]
Grouping the rounds by which pool member they use rewrites the margin $\sum_{t<T}\alpha_t h_{\sigma(t)}$ as $\sum_j A_j H_j$, with $A_j$ the total confidence sent to member $j$ (\texttt{lem\_pool\_span\_collapse}); each $A_j\ge0$ because an edge at the optimal confidence forces $\alpha_t\ge0$ (\texttt{alpha\_opt\_nonneg}), which is AdaBoost's stopping rule read forwards. So the run's zero-one loss \emph{is} the loss of a cone element (\texttt{zero\_one\_loss\_eq\_spanLoss}), which $\epsilon_0$ bounds below, while Thm.~\ref{thm:training-error} and Cor.~\ref{cor:convergence} bound it above by $e^{-2\gamma^2T}$. Since a fixed pool supplies one $\epsilon_0$ for every $T$, a large enough $T$ contradicts the horizon (\texttt{cor\_no\_perpetual\_pool\_edge}). Machine-checked as \texttt{thm\_pool\_reuse\_horizon}; \texttt{exists\_positive\_span\_floor} and \texttt{exists\_pool\_reuse\_horizon\_instance} discharge non-vacuity, the latter exhibiting a pool where a positive cone floor and a surviving edge round hold \emph{jointly}, so the statement is not vacuously true.
\end{proof}

\paragraph{The obstruction is capacity, not correlation.} The mechanism is not that reused workers correlate. It is that a fixed pool has a fixed reachable class, which is Prop.~\ref{prop:dichotomy}'s stationarity dichotomy and its frozen-weight ceiling (Cor.~\ref{cor:frozen-ceiling}) transplanted into the weighted-vote setting. The qualitative version of this is classical. AdaBoost's minimax duality already gives that a pool is $\gamma$-weak-learnable exactly when its convex hull separates every training point with margin at least $2\gamma$ \citep{schapire2013explaining}. What is contributed is the \emph{effective} negative half, machine-checked, with an explicit horizon uniform in schedule and confidences; the duality's converse direction, weak learnability $\Rightarrow$ interpolation with margin, is not formalized. Next, this result itself says nothing when $\epsilon_0=0$: a pool whose cone interpolates the training set is untouched by the horizon argument, and such a pool \emph{does} sustain the edge, which is now machine-checked separately, and without duality, as Prop.~\ref{prop:interpolating-pool}. The two propositions partition the cases by the sign of $\epsilon_0$, so between them the negative and positive content are both accounted for. Finally, the proof itself is not reuse-specific, since any sequence of workers factors as a pool of size $T$ used once each. What does the work is the \emph{fixity} of $\epsilon_0$ across $T$, not the repetition of workers; with fresh workers each round no single $\epsilon_0$ survives.

\paragraph{The interpolating case, and why it is the easy half.} Prop.~\ref{prop:pool-horizon} is silent when $\epsilon_0=0$. That gap closes in the positive direction: a pool whose nonnegative cone fits the training data sustains the edge hypothesis against \emph{every} round's distribution, at an edge fixed once and for all by the fitting combination's margin.

\begin{proposition}[Interpolating Pools Sustain the Edge]
\label{prop:interpolating-pool}
Let $H_1,\dots,H_k$ be binary workers, and suppose some nonnegative combination $A$ interpolates the sample: $y_i\sum_j A_jH_j(x_i)>0$ for every $i$, equivalently $\mathrm{spanLoss}(A)=0$. Put $\theta=\min_i y_i\sum_j A_jH_j(x_i)>0$ and $S=\sum_j A_j>0$. Then for \emph{every} distribution $D$ on the sample there is a pool member $j$ with
\[
\sum_{i:\,y_iH_j(x_i)=-1} D(i) \;\le\; \tfrac12-\gamma, \qquad \gamma \;=\; \frac{\theta}{2S} \;>\; 0 .
\]
The order of quantifiers is the content: $\gamma$ is fixed \emph{before} $D$ is chosen, so the edge holds at every round of any run, whatever the schedule and confidences did beforehand.
\end{proposition}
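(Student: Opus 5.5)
The argument averages the interpolating combination's margin against $D$ and then pigeonholes over the pool. Fix an arbitrary distribution $D$ on the sample. Because $y_i\sum_j A_jH_j(x_i)\ge\theta$ at every point and $D$ is a probability distribution, taking $D$-expectations gives
\[
\sum_j A_j\Bigl(\sum_i D(i)\,y_iH_j(x_i)\Bigr) \;=\; \sum_i D(i)\,y_i\sum_j A_jH_j(x_i) \;\ge\; \theta .
\]
Swapping the two finite sums is pure bookkeeping, and it is the same regrouping used in \texttt{lem\_pool\_span\_collapse}.

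Next, write $c_j:=\sum_i D(i)\,y_iH_j(x_i)$ for member $j$'s correlation with the labels under $D$. The display says $\sum_j A_jc_j\ge\theta$ with $A_j\ge0$ and $\sum_jA_j=S>0$. Dividing by $S$ gives $\sum_j (A_j/S)\,c_j\ge\theta/S$, and the left side is a convex combination of the $c_j$. Hence some $j$ with $A_j>0$ satisfies $c_j\ge\theta/S$. I would argue this by contradiction: if every such $c_j$ were $<\theta/S$, then $\sum_j A_jc_j<\theta$. This pigeonhole step is where nonnegativity of $A$ is essential, since a negative coefficient would let a bad worker masquerade as a good one.

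Finally, convert correlation into error. Since $H_j(x_i)\in\{\pm1\}$, each term $y_iH_j(x_i)$ equals $1-2\,\mathbf 1[y_iH_j(x_i)=-1]$. Using $\sum_iD(i)=1$, this gives $c_j=1-2\,\mathrm{err}_D(H_j)$. So $c_j\ge\theta/S$ is exactly $\mathrm{err}_D(H_j)\le\tfrac12-\theta/(2S)$, which is the claimed bound with $\gamma=\theta/2S$.

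Positivity of $\gamma$ follows from $\theta>0$, which holds because the minimum is taken over finitely many strictly positive margins, and from $S>0$. The quantity $S$ is positive because $A\ge0$ and $A\neq0$; if $A$ were zero, every margin would be $0$, contradicting interpolation. Neither $\theta$ nor $S$ mentions $D$, and this is exactly the quantifier order the statement stresses.

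I expect no real obstacle; the step needing the most care is the pigeonhole over the cone. In the formal setting, the point to watch is handling indices with $A_j=0$ and making the convex-combination argument explicit as a strict-inequality sum bound. The cleanest route is probably to prove $\sum_jA_j(c_j-\theta/S)\ge0$ and then extract a $j$ whose summand is nonnegative with $A_j>0$.
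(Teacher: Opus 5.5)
Your proposal is correct and follows the paper's proof essentially step for step. You average the pointwise margin floor $\theta$ against $D$, exchange the finite sums to get $\theta\le\sum_jA_jc_j$, pigeonhole over the nonnegative coefficients to find some $c_j\ge\theta/S$, and convert via $c_j=1-2\,\mathrm{err}_D(H_j)$ for binary workers; your arguments for $\theta>0$ (finite minimum of strict margins) and $S>0$ (zero coefficients would zero every margin) are the same as the paper's.
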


\begin{proof}[Proof sketch]
Zero cone loss means a \emph{strict} margin at every point, and finitely many positive numbers have a positive minimum $\theta$ (\texttt{lem\_margin\_floor\_of\_spanLoss\_zero}); $S>0$ because vanishing coefficients would make every margin zero (\texttt{lem\_span\_coeff\_sum\_pos}). Averaging the pointwise bound $\theta\le y_i\sum_jA_jH_j(x_i)$ against $D$ and exchanging the two sums gives $\theta\le\sum_jA_je_j$, where $e_j=\sum_iD(i)\,y_iH_j(x_i)$ is worker $j$'s $D$-weighted margin. A nonnegative combination cannot exceed its coefficient sum times its largest term, so $\theta\le S\max_je_j$ and some $e_j\ge\theta/S$. For a binary worker $e_j=1-2\epsilon_j$ (\texttt{lem\_weighted\_margin\_eq}), whence $\epsilon_j\le\tfrac12-\theta/2S$.
\end{proof}

\paragraph{The positive half, proved without duality.} No duality is used. AdaBoost's minimax duality is needed for the \emph{converse} (weak learnability $\Rightarrow$ interpolation with margin), which remains unformalized; the direction proved here is averaging plus a pigeonhole.

First, Prop.~\ref{prop:interpolating-pool} and Prop.~\ref{prop:pool-horizon} are complementary halves of a dichotomy on $\epsilon_0$ rather than overlapping claims: the horizon's floor hypothesis together with $\epsilon_0>0$ is outright incompatible with the existence of an interpolating cone element (\texttt{cor\_pool\_horizon\_floor\_excludes\_interpolation}), so the restriction to $\epsilon_0>0$ is the boundary between the two cases, not an artifact of the proof. Second, and more consequential, this is a \emph{per-round} statement: at every round some member has the edge. It does not exhibit a schedule and confidence sequence realizing a full run, which would require defining the two by simultaneous recursion ($\alpha_t$ is pinned by the optimal-confidence rule to a quantity that reads $D_t$, which in turn reads the schedule and confidences before $t$) and would additionally need to exclude a \emph{perfect} pool member, since Cor.~\ref{cor:convergence} assumes $\epsilon_t>0$ and an interpolating pool is precisely the regime where a member may misclassify nothing. So the multi-round weighted-vote run is still not formalized; what closes is the $\epsilon_0=0$ case.

\subsection{Are the Two Conditions Equivalent?}

\paragraph{Are the two conditions equivalent?} Prop.~\ref{prop:soft-aggregation}'s reweighted edge hypothesis and Prop.~\ref{prop:diversity}'s overlap condition $m^\star{<}k/2$ have stood alongside each other with no relation established between them. They are in fact independent.

\begin{proposition}[The Two Conditions Are Independent]
\label{prop:conditions-independent}
Neither the reweighted edge hypothesis nor the bounded-overlap condition implies the other. Both directions are witnessed on four points with three workers. (i)~Three workers with reweighted errors $\tfrac14,\tfrac16,\tfrac25$---each at most $\tfrac12-\tfrac1{10}$, so the edge hypothesis holds with $\gamma=\tfrac1{10}$ at every round---in which one point is failed by two of the three, so $2m^\star=4\ge k$ and the overlap condition fails. (ii)~Three workers with $m^\star=1<k/2$, hence a correct unweighted vote, in which one worker has error $\tfrac34$, so no $\gamma>0$ makes the edge hypothesis true.
\end{proposition}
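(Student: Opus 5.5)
Both directions are proved by explicit construction on $N=4$ points with all labels $+1$ (as in Def.~\ref{def:weak-agent}) and $k=3$ workers $h_0,h_1,h_2$, incorporated in order under Algorithm~\ref{alg:boost}. For the reweighted errors I will use the balance identity behind Prop.~\ref{prop:self-reuse}. At the optimal confidence, one reweighting step sends total mass exactly $\tfrac12$ to the points the current worker got wrong and $\tfrac12$ to those it got right, and preserves proportions within each group. With this rule every $D_t$ can be computed by hand, with no exponentials.

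\emph{Part (i): edge holds, overlap fails.} Start from the uniform $D_0$ and let $h_0$ fail only point~1, so $\mathrm{err}_0=\tfrac14$. By the balance identity, $D_1=(\tfrac12,\tfrac16,\tfrac16,\tfrac16)$. Let $h_1$ fail only point~2, so $\mathrm{err}_1=\tfrac16$. The balance identity then puts mass $\tfrac12$ on point~2 and rescales the correct mass $\tfrac56$ down to $\tfrac12$, giving $D_2=(\tfrac3{10},\tfrac12,\tfrac1{10},\tfrac1{10})$. Let $h_2$ fail points~1 and~3, so $\mathrm{err}_2=\tfrac3{10}+\tfrac1{10}=\tfrac25$. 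Each error is at most $\tfrac25=\tfrac12-\tfrac1{10}$, so the edge hypothesis of Prop.~\ref{prop:soft-aggregation} holds with $\gamma=\tfrac1{10}$ at every round. Point~1 is failed by $h_0$ and $h_2$, so $m_1=2$. Hence $2m^\star=4\ge3=k$, and by Prop.~\ref{prop:diversity} the unweighted vote errs on point~1; the overlap condition fails.

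\emph{Part (ii): overlap holds, edge fails.} Let $h_0$ fail points~1,~2,~3, let $h_1$ fail only point~4, and let $h_2$ fail nothing. Each point is failed by exactly one worker or none, so $m^\star=1$ and $2m^\star<3=k$. Prop.~\ref{prop:diversity} then gives a correct unweighted vote, via the identity $y_if(x_i)=k-2m_i\ge1$. But $h_0$ is incorporated first, against the uniform $D_0$, where its error is $\tfrac34>\tfrac12$. So $\tfrac34\le\tfrac12-\gamma$ has no solution with $\gamma>0$, and the edge hypothesis fails at round~$0$.

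\emph{Main obstacle.} The delicate step is part~(ii). The edge hypothesis is stated against reweighted distributions, and those are undefined after a perfect worker, since $\epsilon_t=0$ makes $\alpha_t$ infinite. Putting the bad worker first, where the distribution is uniform by definition, means the hypothesis is refuted before any degenerate confidence is ever computed. If a fully non-degenerate instance is preferred, the fallback is to let $h_2$ fail a single point as well, keeping every $m_i\le1$; but with four points and three workers, each point failed at most once, and $h_0$ already failing three points, only point~4 remains, and $h_1$ uses it. So I would instead keep the ordering argument and state it explicitly.

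The only other check is the arithmetic of $D_2$ in part~(i), which follows directly from the balance identity.
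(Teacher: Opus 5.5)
Your proposal is correct and matches the paper's proof essentially verbatim. Part~(i) uses the same failure sets (the paper's $\{0\},\{1\},\{0,2\}$ in zero-based indexing) and the same exponential-free reweighting at the optimal confidence, giving the identical distributions $(\tfrac12,\tfrac16,\tfrac16,\tfrac16)$ and $(\tfrac3{10},\tfrac12,\tfrac1{10},\tfrac1{10})$. Part~(ii) is the same witness: a worker failing three of four points at error $\tfrac34$ against the uniform round-zero distribution, with no point failed twice. Your observation that this witness forces a perfect worker, so that putting the bad worker first keeps the refutation clear of an undefined confidence, complements the paper's note that $\tfrac34\in(0,1)$.
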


\begin{proof}[Proof sketch]
In~(i) the errors are the genuinely reweighted ones, not uniform-distribution stand-ins. At the optimal confidence the reweighting recursion loses its exponentials---a failing point's weight is divided by $2\epsilon_t$ and a passing point's by $2(1-\epsilon_t)$ (\texttt{D\_succ\_fail}, \texttt{D\_succ\_pass}, both from \texttt{D\_succ\_eq} and \texttt{optimal\_alpha\_balances})---so rational errors keep the entire distribution rational and the three rounds are settled by arithmetic. With failure sets $\{0\},\{1\},\{0,2\}$ the successive distributions are $(\tfrac14,\tfrac14,\tfrac14,\tfrac14)$, $(\tfrac12,\tfrac16,\tfrac16,\tfrac16)$, $(\tfrac3{10},\tfrac12,\tfrac1{10},\tfrac1{10})$, and point~$0$ is failed by workers $0$ and $2$ (\texttt{thm\_edge\_not\_imply\_overlap}). In~(ii), one worker failing three of the four points has error $\tfrac34$ against the uniform round-zero distribution while no point is failed by more than one worker (\texttt{thm\_overlap\_not\_imply\_edge}); note $\tfrac34$ lies strictly inside $(0,1)$, so what fails is the edge bound itself rather than a degeneracy at $\epsilon\in\{0,1\}$. Packaged as \texttt{thm\_edge\_overlap\_independent}.
\end{proof}

\subsection{The Probabilistic (Condorcet) Route}

\paragraph{The probabilistic route.} Diversity also has a probabilistic formulation---the Condorcet/Hoeffding bound, where independent worker errors yield exponential decay for the majority vote. It can be had without leaving Prop.~\ref{prop:diversity}'s combinatorial setting.

\begin{proposition}[Condorcet Bound for Independent Errors]
\label{prop:condorcet}
Let $2m$ workers fail independently, worker $i$ with probability $q_i\le\tfrac12-\gamma$ for some $\gamma\in(0,\tfrac12)$. Then the unweighted majority vote errs with probability at most
\[
(1-4\gamma^2)^m \;\le\; e^{-2\gamma^2\cdot 2m}.
\]
\end{proposition}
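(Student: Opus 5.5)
The plan is to run the standard Chernoff argument on the number of failing workers, exploiting the fact that $2m$ is even. A majority vote over $2m$ workers errs exactly when the failure count $F=\sum_{i=1}^{2m}X_i$ satisfies $2F\ge 2m$, i.e.\ $F\ge m$. Here $X_i\in\{0,1\}$ are independent with $\Pr[X_i=1]=q_i\le\tfrac12-\gamma$, and a tie counts as an error, exactly as in Prop.~\ref{prop:diversity}, where the margin $k-2m_i$ vanishes. So the target is $\Pr[F\ge m]$.

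\textbf{Step 1 (Markov on an exponential).} For $\lambda>0$, Markov's inequality gives
\[
\Pr[F\ge m]\le e^{-\lambda m}\,\mathbb{E}[e^{\lambda F}].
\]
Independence then factors the expectation into $\prod_i(1-q_i+q_ie^{\lambda})$. Each factor is increasing in $q_i$, so we may replace every $q_i$ by $q=\tfrac12-\gamma$. Pairing the $2m$ factors with $e^{-\lambda m}$ turns the bound into
\[
\bigl[e^{-\lambda}(1-q+qe^{\lambda})^2\bigr]^m=\bigl[(1-q)e^{-\lambda/2}+qe^{\lambda/2}\bigr]^{2m}.
\]

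\textbf{Step 2 (optimize $\lambda$).} This is the same expression as the $Z_t$ computation in the proof of Thm.~\ref{thm:training-error}, with $\alpha=\lambda/2$. Choosing $e^{\lambda}=(1-q)/q$, which is valid because $q<\tfrac12$ makes $\lambda>0$, balances the two terms. The bracket then becomes $2\sqrt{q(1-q)}$, so the bound is $(4q(1-q))^m=(1-4\gamma^2)^m$, since $q(1-q)=\tfrac14-\gamma^2$.

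\textbf{Step 3 (exponential form).} Apply $1-u\le e^{-u}$ with $u=4\gamma^2$. This gives $(1-4\gamma^2)^m\le e^{-4\gamma^2m}=e^{-2\gamma^2\cdot 2m}$, matching the $\sqrt{1-u}\le e^{-u/2}$ step in Cor.~\ref{cor:convergence}.

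The main obstacle is the monotonicity-in-$q_i$ reduction in Step 1, because the workers are heterogeneous. The clean route is to bound each factor separately at the fixed $\lambda$ chosen in Step 2, rather than optimizing $\lambda$ per worker. Each factor is affine in $q_i$ with positive slope $e^\lambda-1$, so the substitution is legitimate. The only remaining care is confirming that the even count $2m$ makes ``majority errs'' exactly the event $F\ge m$ with ties included. That inclusion is what lets the bound be stated without an off-by-one.
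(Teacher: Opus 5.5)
Your proof is correct and is the paper's argument in probabilistic language. The paper's tilt $r=(1-\gamma')/\gamma'$ with $\gamma'=\tfrac12-\gamma$ is your $e^{\lambda}=(1-q)/q$, its observation that every counted failure pattern has $r^{|\mathcal{F}|}\ge r^m$ is your Markov step, and its bound $\prod_i(q_ir+1-q_i)\le(2(1-\gamma'))^{2m}$ is your monotonicity-in-$q_i$ substitution. The only difference is presentation: the paper writes the error as a sum over failure patterns and factors it with the product-expansion identity, so no moment-generating function or logarithm appears, which makes the argument straightforward to machine-check.
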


\begin{proof}[Proof sketch]
Write the weight of a failure pattern $\mathcal{F}$ as the product $\prod_{i\in\mathcal{F}}q_i\prod_{i\notin\mathcal{F}}(1-q_i)$ (\texttt{wt}), and the vote's error as the total weight of the patterns with $2|\mathcal{F}|\ge 2m$ (\texttt{voteErr}). The algebraic identity $\prod_i(f_i+g_i)=\sum_{\mathcal{F}}\prod_{i\in\mathcal{F}}f_i\prod_{i\notin\mathcal{F}}g_i$ does all the work: at $f_i=q_i,\,g_i=1-q_i$ it shows the weights sum to $1$ (\texttt{lem\_wt\_sum\_one}), and at $f_i=q_ir$ it factorizes the tilted sum (\texttt{lem\_wt\_tilt\_sum}). Taking $r=(1-\gamma')/\gamma'$ with $\gamma'=\tfrac12-\gamma$, every counted pattern has $r^{|\mathcal{F}|}\ge r^m$, so the error is at most $r^{-m}\prod_i(q_ir+1-q_i)\le r^{-m}\bigl(2(1-\gamma')\bigr)^{2m}=\bigl(4\gamma'(1-\gamma')\bigr)^m=(1-4\gamma^2)^m$, and $1-4\gamma^2\le e^{-4\gamma^2}$ finishes. Machine-checked as \texttt{thm\_condorcet\_majority\_bound}, with \texttt{exists\_condorcet\_instance} for non-vacuity. No moment-generating function is evaluated and no logarithm appears: $r$ is written down directly.
\end{proof}

\paragraph{The price of assuming independence.} The constant $e^{-2\gamma^2k}$ is exactly Cor.~\ref{cor:convergence}'s, so Prop.~\ref{prop:condorcet} composes with Prop.~\ref{prop:soft-aggregation} with no further analysis---the same $k_0$ works. Independence was initially \emph{encoded} in the product form of the pattern weight rather than derived. It is now derived. Prop.~\ref{prop:condorcet-prob} places the workers on a genuine product of Bernoulli measures over $\{0,1\}^{2m}$, obtains the independence of the per-worker failure indicators from Mathlib's product-measure machinery, and identifies the measure of the majority-failure event with the very quantity Prop.~\ref{prop:condorcet} bounds. So the stronger reading of ``independent'' is earned, and by an identification rather than by re-deriving the bound from Hoeffding's inequality: the combinatorial proof was always a statement about this measure, and that is now a theorem instead of an assertion. What no proof can discharge is the decision to model worker failures by a product measure at all---that is the Condorcet jury setup itself. Nor is the assumption innocent empirically: independent failures across workers is precisely what Part~B's \statPBOverlapViolate\ majority-failure rate fails to exhibit (\S\ref{sec:partB}), and what Prop.~\ref{prop:diversity}'s overlap condition constrains without assuming.

\begin{proposition}[Condorcet Bound with Independence Derived]
\label{prop:condorcet-prob}
Let $\mu=\bigotimes_{i=1}^{2m}\mathrm{Bernoulli}(q_i)$ on $\{0,1\}^{2m}$, coordinate $i$ recording whether worker $i$ fails. The per-worker failure indicators are independent under $\mu$, and for $q_i\le\tfrac12-\gamma$ with $\gamma\in(0,\tfrac12)$,
\[
\mu\bigl(\{\omega:\ \#\{i:\omega_i=1\}\ge m\}\bigr) \;\le\; e^{-2\gamma^2(2m)} .
\]
The event's measure equals the $\mathrm{voteErr}$ of Prop.~\ref{prop:condorcet}, so the two propositions bound the same quantity.
\end{proposition}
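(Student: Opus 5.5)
The plan is to reduce everything to Prop.~\ref{prop:condorcet} by an identification, so that no new inequality has to be proved. First, set up the measure: $\mu$ is the finite product of the Bernoulli measures on $\{0,1\}$, and since $\{0,1\}^{2m}$ is finite with the discrete $\sigma$-algebra, $\mu$ is determined by its values on singletons. I will compute $\mu(\{\omega\})=\prod_{i:\omega_i=1}q_i\prod_{i:\omega_i=0}(1-q_i)$ directly from the definition of a product measure on cylinder sets (each singleton is a product of singletons). Identifying $\omega$ with its failure set $\mathcal{F}_\omega=\{i:\omega_i=1\}$ gives a bijection $\{0,1\}^{2m}\leftrightarrow 2^{[2m]}$, under which $\mu(\{\omega\})=\mathrm{wt}(\mathcal{F}_\omega)$, the pattern weight of Prop.~\ref{prop:condorcet}.

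Second, independence of the coordinate indicators $\omega\mapsto\omega_i$: for any subset $J$ of workers and any values $b_J$, the cylinder $\{\omega:\omega_J=b_J\}$ has $\mu$-measure $\prod_{i\in J}\Pr[\omega_i=b_i]$. This is the defining property of product measure applied to a rectangle whose unconstrained factors are all of $\{0,1\}$ (each of mass one). Equivalently, it can be obtained by summing the singleton formula over the free coordinates and using $q_i+(1-q_i)=1$, which is exactly the factorization identity used for \texttt{lem\_wt\_sum\_one}. In Lean I would invoke Mathlib's independence of coordinate projections under \texttt{Measure.pi} (\texttt{iIndepFun} for the projections) rather than redo this.

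Third, the event: by finite additivity, $\mu(\{\omega:\#\{i:\omega_i=1\}\ge m\})=\sum_{\omega:\,|\mathcal{F}_\omega|\ge m}\mu(\{\omega\})$. Reindexing through the bijection yields $\sum_{\mathcal{F}:\,2|\mathcal{F}|\ge 2m}\mathrm{wt}(\mathcal{F})=\mathrm{voteErr}$. This proves the ``same quantity'' claim, and the bound $e^{-2\gamma^2(2m)}$ then follows verbatim from Prop.~\ref{prop:condorcet}.

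I expect the main obstacle to be the bookkeeping of this reindexing, not any analysis. Two things have to line up: Mathlib represents the sample space as functions $\mathrm{Fin}(2m)\to\{0,1\}$ (or \texttt{Bool}), while the combinatorial bound sums over \texttt{Finset} subsets, and the singleton-mass computation for \texttt{Measure.pi} goes through \texttt{Measure.pi\_pi} on rectangles. I would first prove the pointwise lemma $\mu(\{\omega\})=\mathrm{wt}(\mathcal{F}_\omega)$, using \texttt{Finset.prod\_ite} to split the product by whether $\omega_i=1$. Then I would transport the sum along the equivalence between $\{0,1\}^{2m}$ and the powerset, using \texttt{Finset.sum\_equiv}. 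A final check is that the paper's counting convention ($\ge m$ failures, ties counted as errors) matches the one in $\mathrm{voteErr}$, which it does, since $2|\mathcal{F}|\ge 2m\iff|\mathcal{F}|\ge m$.
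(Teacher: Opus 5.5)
Your proposal is correct and follows essentially the same route as the paper. Independence comes from Mathlib's \texttt{iIndepFun\_pi} on the coordinate projections, the singleton mass is identified with the pattern weight $\mathrm{wt}$, and the event's measure is reindexed along the pattern/failure-set bijection to $\mathrm{voteErr}$, after which Prop.~\ref{prop:condorcet} transports verbatim; the only cosmetic difference is that the paper gets the singleton mass directly from \texttt{Measure.pi\_singleton} rather than via \texttt{Measure.pi\_pi} on rectangles.
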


\begin{proof}[Proof sketch]
Independence is not assumed: it is Mathlib's \texttt{iIndepFun\_pi} applied to the coordinate projections of a product measure (\texttt{lem\_coord\_iIndepFun}). The measure of a single failure pattern is the product a product measure actually has, which is exactly the pattern weight $\mathrm{wt}$ (\texttt{lem\_pattern\_measure}, via \texttt{Measure.pi\_singleton}); summing over the patterns in which at least half the workers fail, reindexed along the bijection between patterns and failure sets, identifies the event's measure with $\mathrm{voteErr}$ (\texttt{lem\_failure\_event\_eq\_voteErr}). Prop.~\ref{prop:condorcet} then transports verbatim. The bound is not re-proved, and Hoeffding's inequality is not used: the sample space is a finite product of finite spaces, so no centering step or measurability side condition arises.
\end{proof}

\subsection{Status: What Is Settled, and What Is Not}

\paragraph{What remains open.} The comparison above is to the \emph{weak-learning threshold} $\tfrac12-\gamma$, not to whichever single worker happens to be best in a given instantiation: a worker that is accidentally already excellent is not beaten by this result, nor need it be. Best-of-$k$ hard selection (Prop.~\ref{prop:orchestration}) already reaches that worker's ceiling exactly. The regime where weighted aggregation earns its keep is the reverse one: many individually-mediocre workers, none dominating the others, where hard selection is stuck at the least-bad of a bad set while weighted voting escapes that floor, given the diversity condition that each worker keeps beating the \emph{reweighted} distribution which up-weights points previous workers got wrong. Correlated or identical workers fail this hypothesis via AdaBoost's own stopping rule ($\epsilon_t\ge\tfrac12 \Rightarrow \alpha_t\le0$, Algorithm~\ref{alg:boost}), not via an assumption bolted on separately. Diversity is encoded in the edge hypothesis itself, not assumed on top of it.

Three questions about multi-round weighted voting are settled here, and not all of what settles them is ours. \emph{Reuse} is bounded: consulting the same worker twice zeroes its edge outright (Prop.~\ref{prop:self-reuse}), and a reused \emph{pool} either interpolates the sample (in which case some member has an edge against every distribution, Prop.~\ref{prop:interpolating-pool}) or does not, in which case the edge hypothesis fails by round $\log(1/\epsilon_0)/2\gamma^2$ (Prop.~\ref{prop:pool-horizon}). That dichotomy is Prop.~\ref{prop:pool-dichotomy} in the main text; its qualitative half follows from AdaBoost's minimax duality and is not ours, while the effective form with an explicit horizon is. \emph{Diversity} has a probabilistic route as well as a combinatorial one: the Condorcet/Hoeffding majority bound holds at the same constant $e^{-2\gamma^2k}$ as Cor.~\ref{cor:convergence} (Prop.~\ref{prop:condorcet}), with the independence of the per-worker failure indicators derived on a product of Bernoulli measures rather than encoded in the pattern weight (Prop.~\ref{prop:condorcet-prob}). \emph{The two conditions} are logically independent: neither $m^\star{<}k/2$ nor the reweighted edge hypothesis implies the other (Prop.~\ref{prop:conditions-independent}), and the concentration cap $\kappa$ the latter needs is derivable from logged confidences rather than assumed.

What remains outside the formalization is not a gap in a proof. The multi-round weighted-vote \emph{run}---a schedule and confidence sequence exhibited outright, as distinct from the per-round edge feeding it---is not formalized, and additionally requires excluding a perfect pool member since Cor.~\ref{cor:convergence} assumes $\epsilon_t>0$. And choosing a product measure over worker failures \emph{is} the Condorcet jury setup: it is a modelling decision no formalization can discharge, and precisely the assumption Part~B's \statPBOverlapViolate\ majority-failure rate shows real workers violate.

\section{Part A: Additional Figures}
\label{app:parta}

\paragraph{Part A's scope.} Labels throughout Part A are synthetic---a fixed linear separator applied to real SWE-bench TF-IDF features plus noise---so no claim about code, correctness, or LLM behavior is being tested here; the same curves would result from running the identical script on any other feature distribution. Part A's role is an implementation sanity check, confirming the training-error, margin, and generalization machinery is coded correctly before that machinery is applied to Part~B/C data, and it supplies the positive-edge reference trajectory against which Part~B's measured decay in $\tilde\gamma_t$ (\S\ref{sec:partB}) becomes interpretable as a property of real agents rather than of a broken measurement pipeline. Note the estimand shift: Part~A's stumps have a genuine solve-edge $\gamma_t>0$, whereas Part~B's $\tilde\gamma_t$ scores improvement over the previous draft and goes negative largely because the loop stops changing the artifact---so the two trajectories are comparable in shape, not in sign. The claim that boosting mechanics govern real code generation is carried entirely by Parts~B and~C, not by Part~A.

\begin{figure}[h]
\centering
\includegraphics[width=0.47\textwidth]{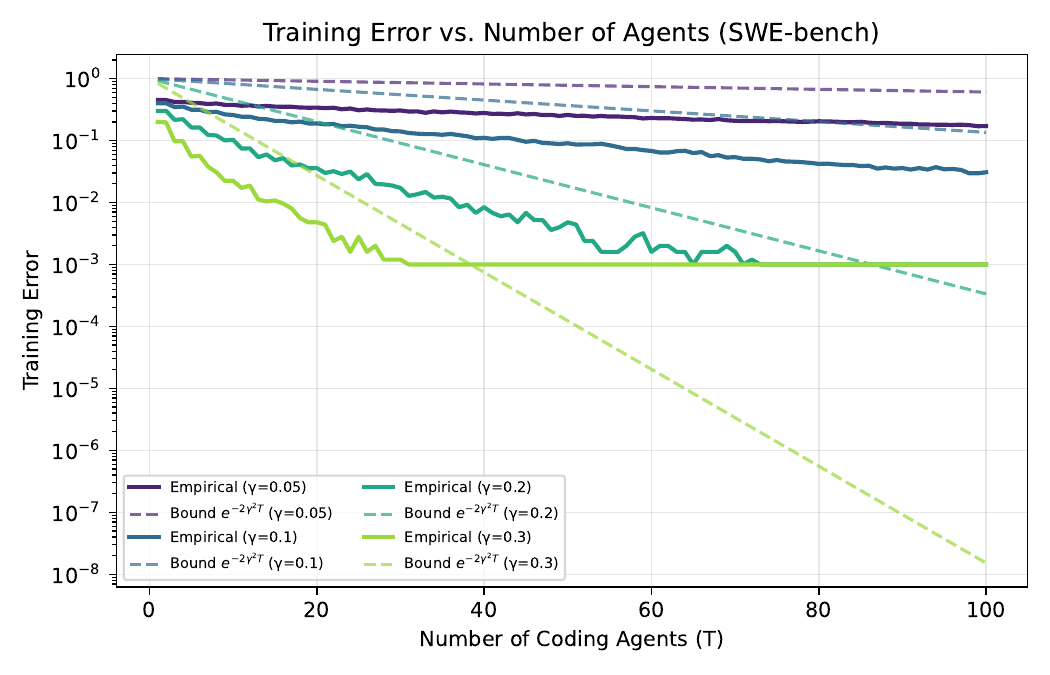}
\includegraphics[width=0.47\textwidth]{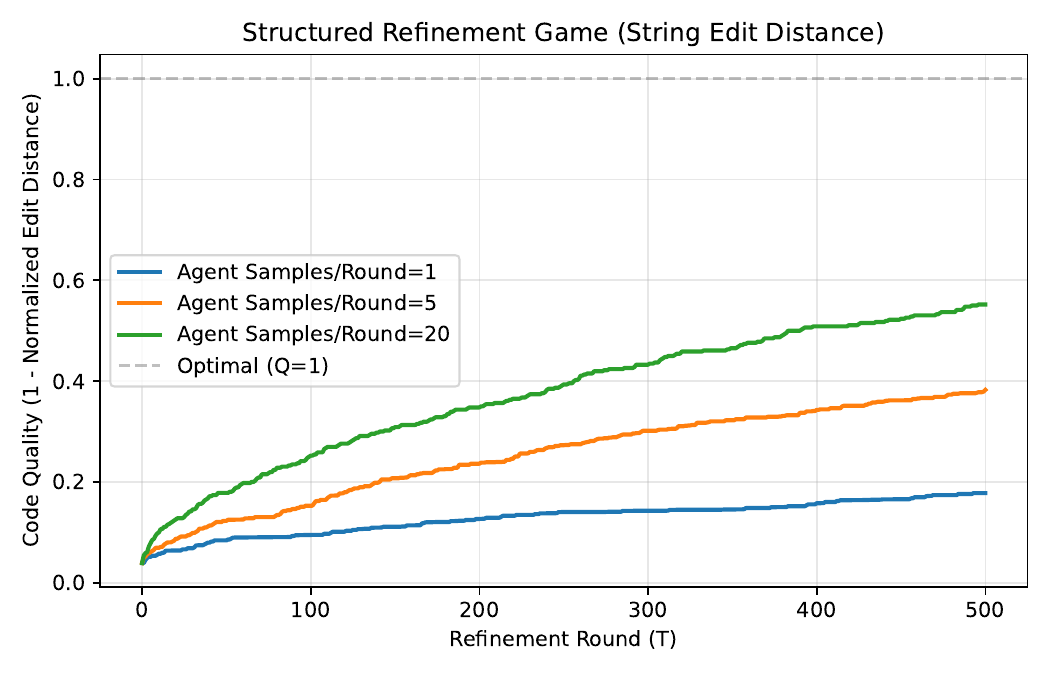}
\caption{\textbf{Left:} training error vs.\ number of weak coding agents for four target edges $\gamma$; solid = empirical (mean of 5 trials), dashed = bound $e^{-2\gamma^2T}$, confirming Cor.~\ref{cor:convergence}. \textbf{Right:} code quality under the refinement game (Levenshtein-distance optimization), confirming Thm.~\ref{thm:refinement}; quality increases monotonically with diminishing returns, remaining bounded below 1.0, with more capable agents converging higher.}
\label{fig:exp1}
\end{figure}

\begin{figure}[h]
\centering
\includegraphics[width=\textwidth]{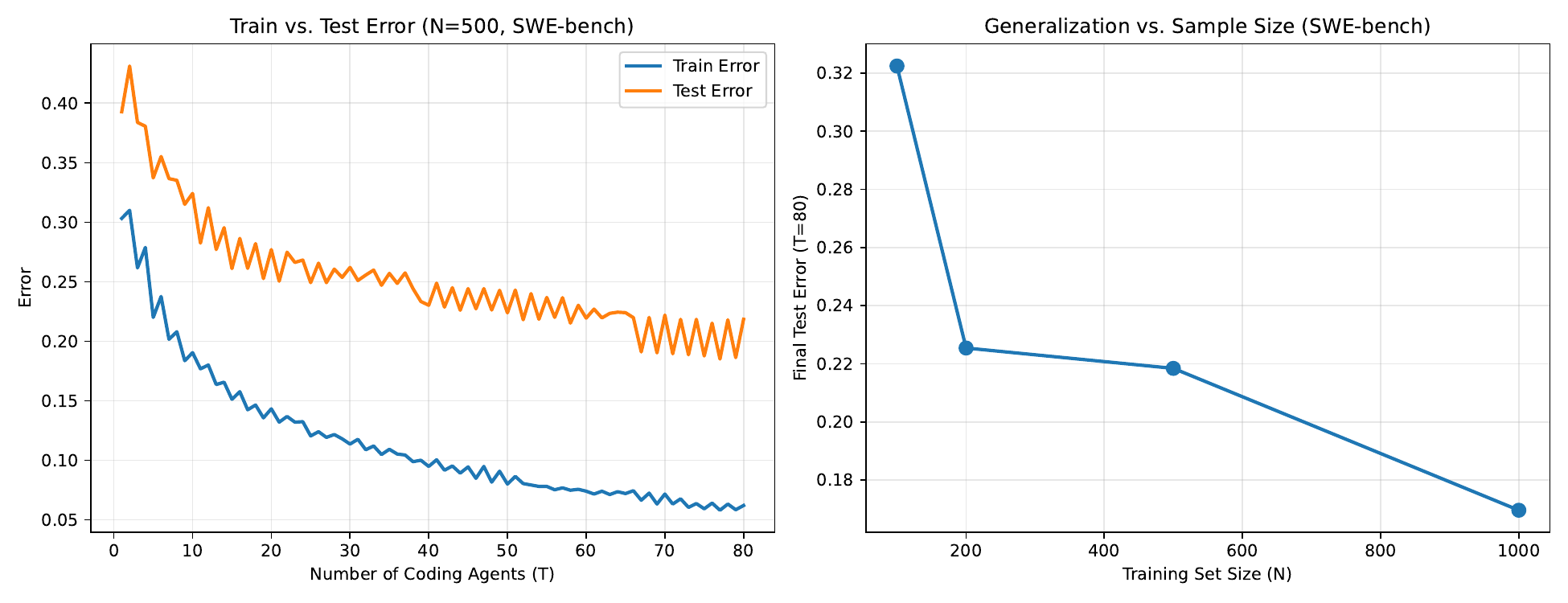}
\caption{Left: Train vs.\ test error for $N=500$ over 80 boosting rounds. Test error decreases then plateaus, consistent with the generalization bound. Right: Final test error decreases monotonically with training set size (averaged over 5 seeds on the full SWE-bench split with a disjoint held-out test set), consistent with the $O(\sqrt{d/N})$ term.}
\label{fig:exp2}
\end{figure}

\begin{figure}[h]
\centering
\includegraphics[width=0.7\textwidth]{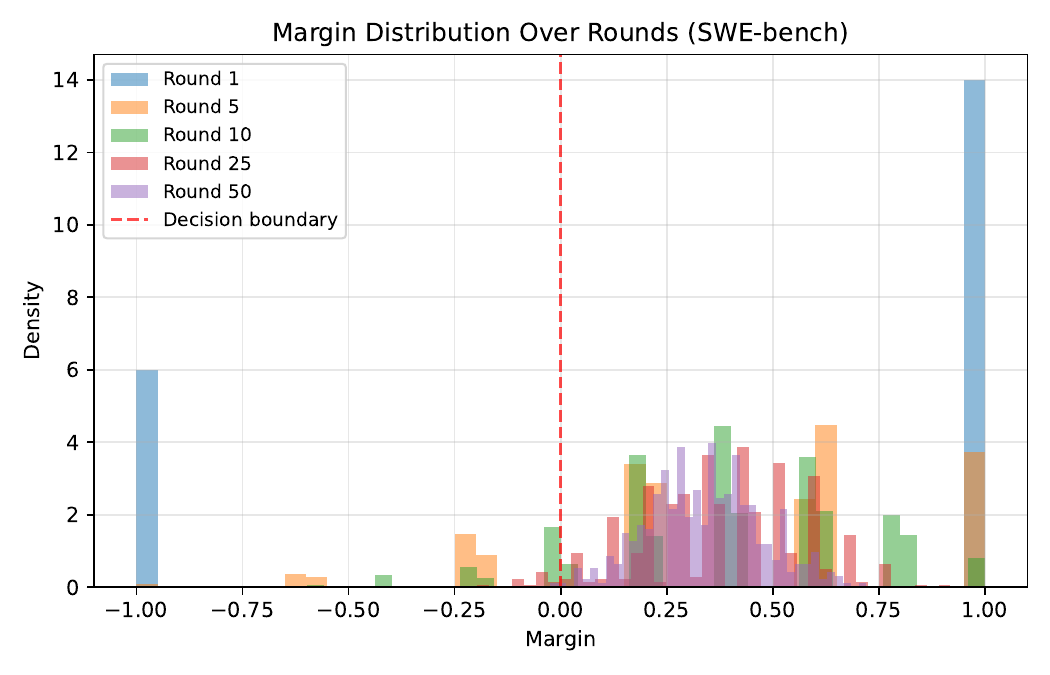}
\caption{Margin distribution over boosting rounds (edge-calibrated weak agents). Round~1 is a single learner with saturated $\pm1$ margins (roughly half misclassified); over successive rounds the negative-margin mass collapses and the distribution concentrates on the positive side of the boundary. This concentration of margin mass away from the decision boundary is the mechanism by which boosting achieves good generalization despite adding more weak learners.}
\label{fig:exp4}
\end{figure}

\section{Part B: Harness Verification, Task Pool, and Power Analysis}
\label{app:partb}

\paragraph{Harness verification and task pool (methods disclosure).} Our evaluator is not SWE-bench's own. A $T$-round refinement experiment needs per-round scoring inside a container that stays alive between rounds; the official harness is a \emph{batch} evaluator---\texttt{run\_evaluation.py} grades one patch per instance from a predictions file---so re-invoking it per round per task across ${\approx}6{,}600$ evaluations was not viable, and we built the loop ourselves, scoring \texttt{--junit-xml} reports directly. We stand by that decision and not by its execution: SWE-bench also ships per-repository test commands (\texttt{MAP\_REPO\_VERSION\_TO\_SPECS}) and per-repository log parsers (\texttt{MAP\_REPO\_TO\_PARSER}), both importable independently of its CLI and its container lifecycle, and we reimplemented that knowledge instead of importing it.

A gold-patch audit---apply the \emph{known-correct} SWE-bench patch and check it scores $1.0$---of the originally-sampled $K{=}50$ pool found \statPBAuditFailN\ tasks failing deterministically, none of them model failures, separable into \statPBAuditRootCauses\ root causes. Three are ours outright. \texttt{pytest} is not the test runner for Django or SymPy at all (9 tasks): their official commands are \texttt{./tests/runtests.py --verbosity 2} and \texttt{bin/test -C --verbose}. An older pinned \texttt{pytest} rejects the \texttt{--no-header} flag we passed universally (3), which the official specs apply only to \texttt{seaborn}, where the pinned version accepts it. And ANSI color codes broke a text-based pass/fail line parser (3), which \texttt{astropy}'s official command forestalls by setting \texttt{console\_output\_style=classic}. One is partly genuine and independently known: parametrize-ID drift between \texttt{pytest}'s auto-generated collection ids and SWE-bench's recorded ids (7). One is mundane: a network-dependent test that hangs inside a sandboxed evaluator with no internet access (1). We record the split because the count is a fact about \emph{our} evaluator rather than about the benchmark: read as a benchmark defect rate it would be simply wrong, and the distinction is the whole reason the breakdown is here rather than a bare failure count.

We fixed the two shared-root-cause bugs (replacing text-parsing of colorized \texttt{pytest -v} output with \texttt{--junit-xml} scoring, and fuzzy-matching parametrize ids via a \texttt{pytest --collect-only} pass) and verified the fix against the gold patch on every previously-flagged task (all now score $\mathrm{fraction\_passing}{=}1.0$); those two are the transferable part of the exercise. We then excluded Django and SymPy entirely, not because their environments are defective but because a \texttt{pytest}-only evaluator cannot run them, plus the one network-dependent task, and backfilled \statPBTaskPoolBackfilled\ gold-patch-verified replacement tasks to reach $K{=}\statPBK$; every active task's Docker image is now digest-pinned. All Part~B results use this harness-fixed, gold-verified $K{=}\statPBK$ pool, which is therefore \texttt{pytest}-native by construction. That is a property of our tooling that shaped the pool, and the reason a follow-up should add per-repository dispatch and restore both repositories rather than inherit this selection.

\emph{One divergence survives inside the retained pool.} SWE-bench runs \texttt{sphinx} tests through \texttt{tox --current-env -epy39 -v --} rather than bare \texttt{pytest}, and \texttt{sphinx-doc} is one of the \statPBTaskPoolRepos\ retained repositories. What bounds the risk is the audit itself: every retained task, \texttt{sphinx} included, scores $\mathrm{fraction\_passing}{=}1.0$ under the gold patch, so bare \texttt{pytest} is collecting and running those tests rather than silently reporting nothing. Agreement with the official grader on the retained pool is established only through that gold-patch check, not by running both harnesses side by side.

\begin{table}[h]
\centering\small
\caption{Part~B: $2\times3$ repeated-measures ANOVA (model $\times$ feedback mode) on final-round quality, $K{=}\statPBK$ tasks $\times$ \statPBSeeds\ seeds. Summarized in \S\ref{sec:partB}. The model effect's Cohen's $f$ carries 95\%~CI \statPBModelCohenFCI, consistent with the near-complete tier separation in solve rates rather than with a computation artifact.}
\label{tab:swebench-anova}
\begin{tabular}{lccccc}
\toprule
Effect & $F$ & df & $p$ & partial $\eta^2$ & Cohen's $f$ \\
\midrule
Model (Sonnet vs.\ Haiku) & \statPBModelF & \statPBModelDf & \statPBModelP & \statPBModelEta & \statPBModelCohenF\ (huge) \\
Feedback mode & \statPBModeF & \statPBModeDf & \statPBModeP & \statPBModeEta & \statPBModeCohenF\ (small) \\
Model $\times$ mode & \statPBInterF & \statPBInterDf & \statPBInterP & \statPBInterEta & \statPBInterCohenF\ (small) \\
\bottomrule
\end{tabular}
\end{table}

\paragraph{Per-arm solve rates.} The ranges quoted in \S\ref{sec:partB} resolve as follows at $\ge\tau{=}0.6$ with 95\%~Wilson intervals. Haiku: independent \statPBHaikuIndep, blind \statPBHaikuBlind, diagnostic \statPBHaikuDiag. Sonnet: independent \statPBSonnetIndep, blind \statPBSonnetBlind, diagnostic \statPBSonnetDiag. The intervals within a tier overlap heavily, which is the same fact the mode term's null records. At SWE-bench's own criterion ($\tau{=}\statPBTauResolved$) the weak tier's mode ordering inverts, which is why \S\ref{sec:partB} reads the mode contrast as noise: Haiku independent \statPBHaikuResolvedIndep\ against diagnostic \statPBHaikuResolvedDiag, while Sonnet's diagnostic arm gives \statPBSonnetResolvedDiag.

\paragraph{The dependent variable, and an asymmetry in it.} The dependent variable is final-round quality averaged over seeds. It is therefore threshold-free, which is why $\tau$ moves the solve-rate tallies of \S\ref{sec:partB} and nothing else: the ANOVA's DV is continuous quality, and $\tilde\gamma_t$ compares $q_{i,t}$ against $q_{i,t-1}$ directly rather than against any threshold. Because the harness back-fills the rounds after a task first passes (Appendix~\ref{app:edge-decomposition}), a solved row's final round is its running best, whereas an unsolved row contributes whatever its last attempt scored---which may sit below its own earlier best, on \statPBHaikuRegressRows\ and \statPBSonnetRegressRows\ rows. The stronger tier solves more rows, so the asymmetry runs in the same direction as the tier effect and cannot be dismissed on inspection. Re-running the identical ANOVA with every row contributing its running best moves the tier effect from partial $\eta^2{=}\statPBModelEta$ to \statPBModelEtaRunningBest, and leaves mode and interaction non-significant: the effect does not rest on the asymmetry.

\paragraph{Task pool.} The active $K{=}\statPBK$ pool spans \statPBTaskPoolRepos\ SWE-bench Lite repositories (\texttt{astropy}, \texttt{flask}, \texttt{matplotlib}, \texttt{pylint}, \texttt{pytest}, \texttt{requests}, \texttt{scikit-learn}, \texttt{seaborn}, \texttt{sphinx}, \texttt{xarray}); Django and SymPy are excluded entirely and one network-dependent task is excluded. Of the originally-sampled 50 tasks, \statPBTaskPoolRetained\ are retained unchanged and \statPBTaskPoolBackfilled\ are gold-patch-verified replacements backfilled to reach $K{=}\statPBK$. Exhaustive per-task, per-arm, per-seed quality trajectories for all $6$ arms $\times$ $\statPBK$ tasks $\times$ $\statPBSeeds$ seeds, together with the committed analysis script that reproduces every Part~B table and figure directly from that data, are in the supplementary repository.

\paragraph{Power analysis (full detail).} \emph{This null is underpowered, not evidence of zero effect.}

A case-resampling bootstrap (30{,}000 reps/$K$, preserving each task's full 6-arm profile) at the observed effect size finds only \statPBModePower\ power (mode) and \statPBInterPower\ power (interaction) at the $K{=}\statPBK$ we ran; reaching 80\% power requires $K{\approx}\statPBModeReqK$ (mode) and $K{\approx}\statPBInterReqK$ (interaction), roughly \statPBPowerMultiple\ the current sample. An independent, quantified reason compounds the power problem regardless of $K$: seed-to-seed noise within a fixed (task, model, mode) cell averages SD${\approx}\statPBSeedSD$, about \statPBSignalOverNoise\ the mean between-mode signal (mean $|\text{diagnostic}-\text{blind}|$ per task ${\approx}\statPBBetweenModeSignal$). The sampling-noise floor exceeds the effect being measured.

This noise is a deliberate, explained design choice, not an oversight: generation temperature $0.25$ is applied uniformly across all three arms, but it is \emph{structurally required} only for \texttt{independent}, whose prompt is identical every round (no prior patch, no critique); at temperature $0$ it would degenerate into $T$ identical greedy draws. \texttt{blind} and \texttt{diagnostic} have no such requirement, since their prompts already change round to round with the carried-forward patch and critique; a single shared temperature was chosen for simplicity over doubling the design into temperature-decoupled arms. A future rerun could decouple this, running temperature $0$ for \texttt{blind}/\texttt{diagnostic} and temperature ${>}0$ only for \texttt{independent}, as a variance-reduction change to the design, not a reanalysis of the data collected here.

\paragraph{Why report a null this underpowered at all?} First, the required-$K$ and noise-to-signal numbers above are themselves informative: they tell a future replication exactly how large a sample the effect needs, which a silently-dropped ablation would not. Second, the capability effect (Table~\ref{tab:swebench-anova}) is properly powered at this same $K$, so the ablation is not a uniformly underpowered design---only the mode and interaction terms are, and the significant capability result should not be read as implying the whole design was adequately powered. Third, the null is the empirical basis for the ``spend on the first attempt'' guidance (\S\ref{sec:discussion}): reporting it with its power caveat attached is more useful to a practitioner than either suppressing it or overclaiming it as a confirmed non-effect.

\paragraph{The point-estimate crossover (not significant; a hypothesis, not a finding).} Haiku's mean quality increases monotonically with more carried-forward context (independent \statPBHaikuIndepQ\ $<$ blind \statPBHaikuBlindQ\ $<$ diagnostic \statPBHaikuDiagQ), while Sonnet's \texttt{blind} arm is its \emph{worst} of the three (independent \statPBSonnetIndepQ, blind \statPBSonnetBlindQ, diagnostic \statPBSonnetDiagQ)---a crossover, not merely noise around a flat line (Fig.~\ref{fig:feedback-ablation}). Every formal test of this ordering is non-significant (linear contrast $F{=}$~\statPBContrastFRange, $p{=}$~\statPBContrastPRange\ across gating choices; model$\times$linear interaction $p{=}$~\statPBInteractionPRange), so we do not claim it, and we explicitly do \emph{not} attribute Sonnet's inversion to ``solved early, then regressed later''---we tested that mechanism directly against the per-round trajectories and it runs the wrong direction to explain the gap (Sonnet's \texttt{blind} gives back \emph{less} to late-round regression than \texttt{independent} does, $0.0091$ vs.\ $0.0105$).

\begin{figure}[h]
\centering
\includegraphics[width=0.6\textwidth]{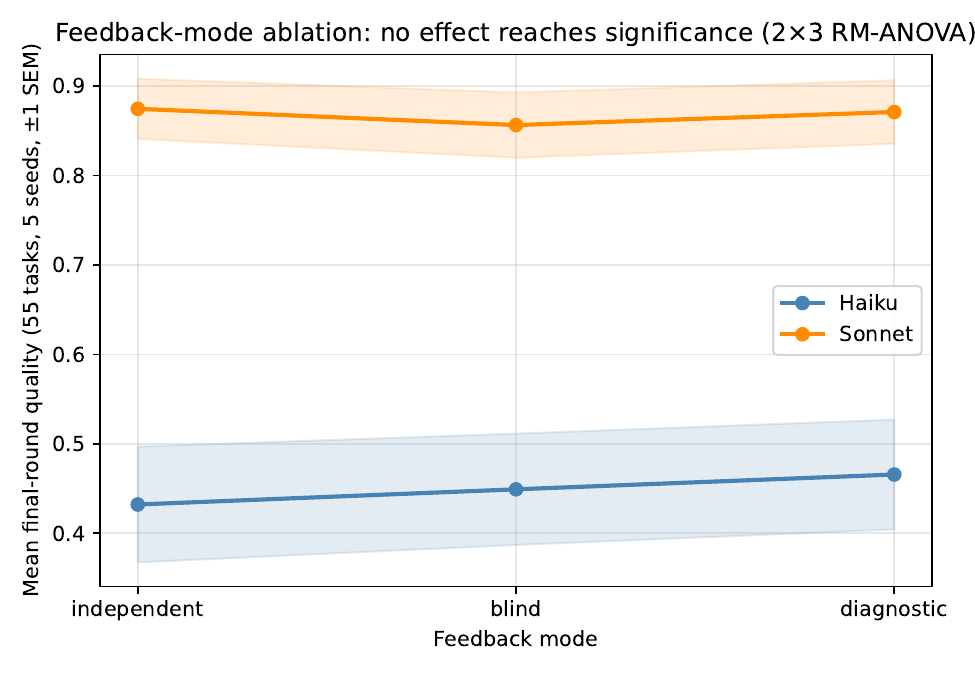}
\caption{Part~B feedback-mode ablation: mean final-round quality by mode (independent~$\to$~blind~$\to$~diagnostic), $K{=}\statPBK$ tasks $\times$ \statPBSeeds\ seeds, $\pm1$~SEM. Haiku increases monotonically; Sonnet's \texttt{blind} arm is its worst of the three---a non-significant crossover (see text), not a confirmed finding.}
\label{fig:feedback-ablation}
\end{figure}

\subsection{Decomposing the Measured Edge}
\label{app:edge-decomposition}

The main text reports the per-round edge on measured cells and the pooled series alongside it. This is the detail behind both, and the reason the pooled number cannot stand alone: $\gamma$'s \emph{sign} is the quantity the weak-learning condition turns on, and in this setting the sign is determined largely by something other than what the condition is about.

\paragraph{Rounds the experiment did not run.} The harness stops calling the model on the round a task first passes, then back-fills the remaining rounds with a hardcoded $q{=}1$ so the $T$-wide matrix stays rectangular. Those cells record no patch, no evaluation and no model call. They are \statPBHaikuPadShare\ of Haiku's matrix and \statPBSonnetPadShare\ of Sonnet's, and because $\gamma$ scores a tie at ${-}\frac12$, each one enters as a failure to improve. They account for \statPBHaikuPadTieRange\ (Haiku) and \statPBSonnetPadTieRange\ (Sonnet) of each round's tie mass. The pooled estimator is therefore driven downward in proportion to the solve rate, which inverts the comparison it looks like it supports: Sonnet solves far more tasks, back-fills far more cells, and posts a \emph{more} negative pooled edge at round~3 than Haiku (\statPBSonnetEdge\ against \statPBHaikuEdge, CIs \statPBSonnetEdgeCI\ and \statPBHaikuEdgeCI) while being the stronger refiner on every cell where a call was made. That pooled sign is therefore not reportable as a headline (\S\ref{sec:partB}): it is an artifact of the harness rather than a property of the loop.

\paragraph{Ties versus regressions.} The indicator uses a strict $>$, so ``not improved'' pools two unlike outcomes: quality identical to the previous round, and quality actually lower. Splitting them on the diagnostic arm gives per-round tie rates of \statPBHaikuTieSeries\ for Haiku and \statPBSonnetTieSeries\ for Sonnet, against regression rates of \statPBHaikuRegressSeries\ and \statPBSonnetRegressSeries. Net of the back-filling above, the genuine no-op rate is \statPBHaikuGenuineTieRange\ (Haiku) and \statPBSonnetGenuineTieRange\ (Sonnet), still the dominant outcome and the figure that actually supports ``refinement turns idempotent''. Whole trajectories tell the same story: \statPBHaikuFlat\ of Haiku's \statPBEdgeN\ (task, seed) rows and \statPBSonnetFlat\ of Sonnet's never change quality at any round, and only \statPBHaikuRegressRows\ and \statPBSonnetRegressRows\ rows respectively dip even once. A refiner that has fully converged---every round identical, never a regression---attains $\gamma={-}\frac12$ exactly, the floor of the estimator's range. So $\gamma<0$ here is evidence that the loop has stopped changing the artifact, which is what Prop.~\ref{prop:dichotomy}(i) predicts; it is \emph{not} evidence that patches are anti-correlated with correctness, which is what $\epsilon_t>\frac12$ means in AdaBoost and what ``the weak-learning condition fails'' would ordinarily convey. Both readings are consistent with the same number, and only the first is supported.

\paragraph{The edge among tasks that can still move.} A row already at $q{=}1$ cannot improve, so it enters $\gamma$'s denominator as a row that did not improve when it is really a row that could not. Conditioning each round on $q_{i,t-1}<1$ removes that: the eligible population falls \statPBHaikuHeadroomN\ (Haiku) and \statPBSonnetHeadroomN\ (Sonnet) as rows saturate, and the conditional edge is \statPBHaikuEdgeHeadroom\ with CIs \statPBHaikuEdgeHeadroomCI\ for Haiku against \statPBSonnetEdgeHeadroom\ with CIs \statPBSonnetEdgeHeadroomCI\ for Sonnet (same \statPBEdgeCIReps-resample task-cluster bootstrap and the same seed as the pooled intervals, so the two are computed over identical cluster draws; numerator and denominator are resampled together, so the denominator's own sampling variation is carried rather than held fixed).

Haiku's conditional intervals all sit strictly below zero, so its failure to improve is not an artifact of saturation. Sonnet's first two \emph{include} zero, so on tasks with room left we cannot reject a zero-or-slightly-positive edge in the early rounds, and only by round~3 is the negative sign unambiguous. That is a genuine qualification of the main claim. And the tier separation therefore survives conditioning, which is the more direct measurement of ``capability, not loop depth'' than the pooled series provides---a stronger model spends its rounds running out of headroom, a weaker one spends them failing to exploit the headroom it has.

\paragraph{A correction, not a clean unconditional edge.} It is the estimator over cells where a model call happened, so relative to the pooled series it is a correction and not merely a decomposition: excluding a back-filled round is declining to average over a non-event, not conditioning on an outcome. The paragraph above is why: counting the $q{=}1$ rows as rows the model failed to improve reads a non-event as a negative outcome, when the model was never asked. What it is still \emph{not} is a clean unconditional edge. Excluding those cells also excludes the rounds that would have followed a pass, and those rounds were never run, so no reweighting recovers them; the remaining population is additionally selected against rows whose earlier rounds went well, which is a real difference in population rather than a nuisance to adjust away.

\paragraph{The confidence-rated reading.} A round that leaves quality unchanged is closer to an \emph{abstention} than to an error, and \citet{schapire1999improved} give the standard treatment: with per-round weights $\omega_+$, $\omega_-$ and $\omega_0$ over measured cells, the optimally-weighted round contributes $Z_t = \omega_0+2\sqrt{\omega_+\omega_-}$ to the training-error product, and $Z_t<1$ whenever improvements outnumber regressions. On this arm the product over rounds~1--3 is \statPBHaikuAbstainZ\ (Haiku) and \statPBSonnetAbstainZ\ (Sonnet): the bound descends, slowly, which is what saturation looks like in boosting's own terms. The binary form $\exp(-2\sum_s\max(0,\tilde\gamma_s)^2)$ pins at $1$ instead, and that flatness is a property of a formulation with no representation for abstention rather than of the loop. Conditional on a round changing anything at all, the edge is strongly positive throughout (\statPBHaikuMoversEdge\ for Haiku, \statPBSonnetMoversEdge\ for Sonnet), reported for completeness only, since that denominator is both small and selected on the outcome being measured. The denominator shrinks with $t$ (to \statPBSonnetHeadroomNLast\ rows for Sonnet at round~3), so the late-round conditional intervals are correspondingly wide, and we read the round-3 point estimates as the least reliable of the six. Nothing here changes what the criterion requires: Thm.~\ref{thm:refinement} and Prop.~\ref{prop:dichotomy} need only a per-round improvement property, and $\bar\eta_t$ (\S\ref{sec:partB}) remains positive and shrinking under both tiers either way.

\begin{figure}[h]
\centering
\includegraphics[width=0.64\textwidth]{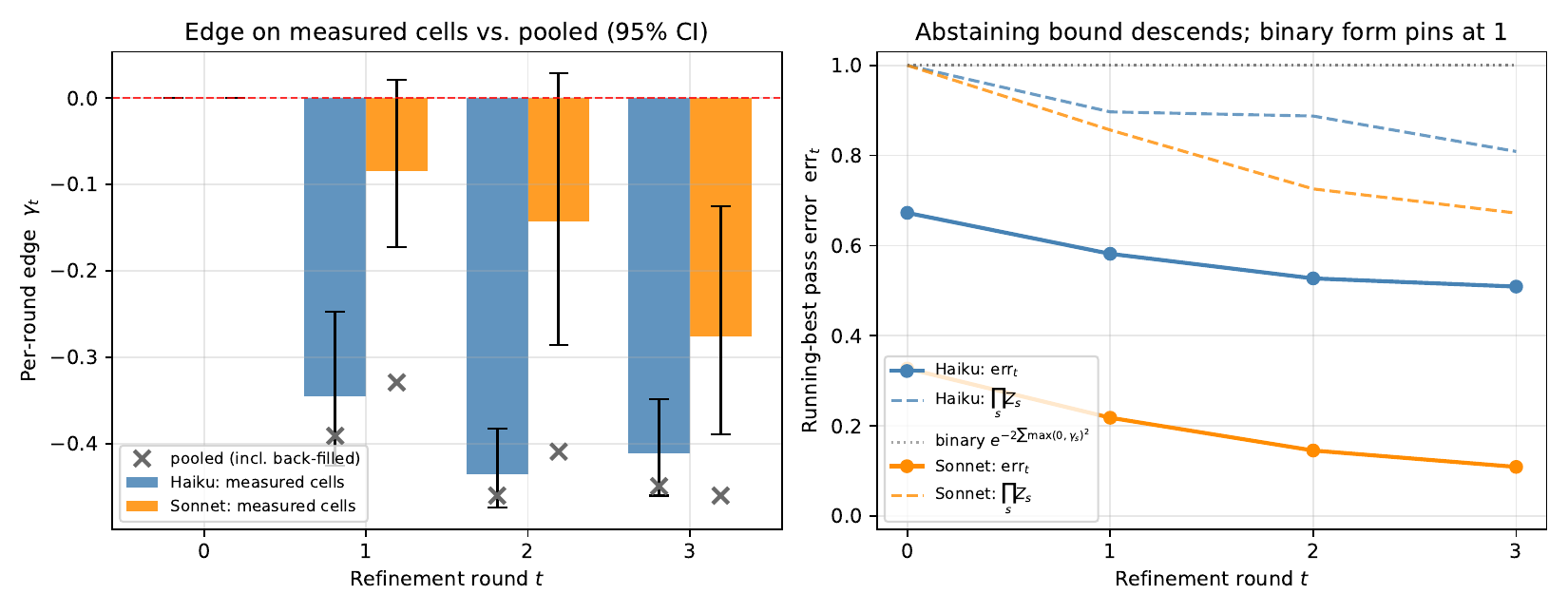}
\caption{Part~B, per-round edge on the \texttt{diagnostic} arm; cited from \S\ref{sec:partB}. \textbf{Left:} bars are $\tilde\gamma_t$ over cells where a model call happened, with 95\%~CIs (\statPBEdgeCIReps-resample cluster bootstrap over the \statPBK\ tasks); grey crosses are the pooled $\tilde\gamma_t$ over all cells, the gap being the back-filling described above. Round~0 has no predecessor, so no bar; Sonnet's first two intervals cross zero. \textbf{Right:} running-best error (solid) against the abstaining bound $\prod_s Z_s$ (dashed), which descends because an unchanged round is charged $Z_s<1$ rather than as an error; the binary form $\exp(-2\sum_s\max(0,\tilde\gamma_s)^2)$ (dotted) pins at $1.0$, an artifact of having no representation for abstention.}
\label{fig:llm-edge}
\end{figure}

\section{Part C: Additional Findings and Figures}
\label{app:partc}

\paragraph{Data availability.} Four anonymized session/commit datasets underlie Part~C, all in the supplementary repository and all written by the same sanitization pipeline (\path{scripts/anonymize_sessions.py}): \path{data/company_sessions.json} for the AI-tier sessions, \path{data/company_control_sessions.json} for the pre-AI human baseline of Finding~3, and \path{data/company_within_era_ai_sessions.json} with \path{data/company_within_era_nonai_sessions.json} for the two sides of the within-era contrast below. Real repository names, commit hashes, and author identities are replaced with opaque synthetic labels before release; timestamps are truncated to day granularity, developer labels are salted hashes, commit SHAs are counter-substituted, and only structural metrics (diff sizes, commit counts, model trailer, coarse dates) survive (full sanitization audit in the repository README).

The two label kinds are not symmetric, and the difference is easy to miss because the labels look alike. \emph{Developer} labels are $\mathrm{HMAC}(\text{salt}, \text{identity})$, so under one salt and one roster the same person carries the same \texttt{dev\_} label in every file, which is what makes the paired within-developer analyses below possible at all. \emph{Repository} labels are \emph{positional}: they are lettered per extraction run by HMAC rank over that run's repository list, so \texttt{repo\_A} denotes a \emph{different} repository in each file, and the label sets overlap without the repositories doing so. Merging datasets on \texttt{repo} would therefore fuse unrelated repositories into one random-effect level; \path{scripts/part_c_control_stats.py} namespaces the labels per file (\texttt{sim.namespace\_repos}) before any pooled model is fit, and third-party reanalysis must do the same.

Every Part~C number quoted in this paper is recomputed from those four files by \path{scripts/part_c_control_stats.py} into \path{data/part_c_control_stats.json}, which is the sole source for the macros this text expands; a companion \texttt{--check} gate fails the build if any quoted value drifts from the recomputed one.

\emph{Why the linkage compares instants, not strings.} The commit-to-session linkage compares parsed UTC instants, and the distinction is not cosmetic on this cohort: contributors span 11 distinct UTC offsets, so under a comparison of ISO-8601 timestamp \emph{strings} a commit written at \texttt{...T10:00:00+05:30} sorts after one written at \texttt{...T09:00:00-04:00} while in fact preceding it by four hours. Commits then fall outside their own session's window and go unlinked---124 control and 15 AI-tier commits, dropped from the per-session churn aggregates without any error being raised. Comparing instants takes unlinked commits to zero on both cohorts, and every Part~C figure is computed on that linkage.

\paragraph{Cohort composition, and the spread inside each tier label.} Finding~2 contrasts Claude Opus (stronger) against Claude Sonnet (weaker)---a different pair from Part~B's, where Sonnet is the \emph{stronger} of the two, so the two parts' ``capability contrast'' is not the same comparison. That pair is also not the whole cohort: \statPCTierExcluded\ sessions fall outside it, \statPCHaikuSessions\ Haiku and \statPCUnresolvedSessions\ with no resolvable trailer. And each label names a model \emph{family} rather than a checkpoint---the trailers record \statPCOpusVersions\ Opus and \statPCSonnetVersions\ Sonnet versions spanning \statPCVersionSpan, which the two-bin contrast pools---so within-family capability spread is uncontrolled, which is what the version-resolved re-test below addresses. The gap is also robust to the choice of test, giving Mann--Whitney $p{=}\statPCGapMWUP$ alongside the permutation $p$ quoted in \S\ref{sec:partC}.

\paragraph{Finding 4---Geometric churn-decay model captures the coarse shape (consistency check).} The geometric churn-decay (log-normal) model, fitted to consecutive commit pairs, yields $\rho{=}\statPCRho$ (95\%~bootstrap CI: \statPCRhoCI) and $\sigma{=}\statPCSigma$ (95\%~CI: \statPCSigmaCI). This CI excludes~$1.0$, giving some support that the decay rate is genuinely below unity rather than statistical noise around a random walk, consistent with Finding~1's rejected permutation null. A simulation calibrated to these parameters, together with a mixture parameter $\pi_{\mathrm{pure}}{\approx}\statPCPiPure$ accounting for pure-addition sessions (survivor ratio~$=1$), is best judged out of sample: a 70/30 train/test split gives held-out KS${=}\statPCKSHeldout$, $p{=}\statPCKSHeldoutP$ (train $n{=}\statPCTrainN$, test $n{=}\statPCTestN$). Unlike the smaller-sample fits, this is no longer a clean pass: across 8 train/test seeds the held-out $p$-value ranges \statPCKSHeldoutSeedRange. This is a downgrade: the expanded sample now typically shows a detectable discrepancy between held-out empirical and simulated survivor-ratio distributions (Figure~\ref{fig:emp3}), so the out-of-sample check should be read as marginal rather than confirmatory. The in-sample fit, always the more informative signal here, sharpens the same conclusion: the in-sample KS${=}\statPCKSInsample$ ($p{=}\statPCKSInsampleP$) formally \emph{rejects} exact distributional equality, so the model captures the coarse shape but is further from the true data-generating process once heterogeneity across \statPCRepos\ repositories and \statPCDevelopers\ developers is pooled into a single fit. That is exactly the kind of unmodeled repository/developer variation the multilevel model in Finding~2 quantifies directly ($\text{vc}_{\mathrm{repo}}{=}\statPCVarRepo$, $\text{vc}_{\mathrm{developer}}{=}\statPCVarDev$).

\paragraph{Finding 5---Heavy-tailed session length.} Session lengths are heavy-tailed: median~\statPCMedianLen\ commit (\statPCSingleCommitPct\ of sessions are a single commit), mean~\statPCMeanLen, maximum~\statPCMaxLen. Figure~\ref{fig:emp4} shows the empirical distribution with a log-normal fit, confirming that while most sessions are short, a substantial tail of long sessions drives mean complexity.

\paragraph{Finding 3, in full---pre-AI-adoption human baseline (descriptive, not a controlled comparison).} The headline numbers are in \S\ref{sec:partC}; this is the construction and the full caveat list. The cutoff is derived from data, not assumed: scanning every repository in the same organization for the earliest commit anywhere carrying \emph{any} AI-coding-tool co-author trailer (Claude, Copilot, Cursor, Codex, ChatGPT, and others) gives 2025-09-18; subtracting a 3-year safety margin sets the cutoff at \statPCCtrlUntil. The \statPCRepos\ repositories used for the AI-tier findings turn out to have essentially no history before that date---they are recently-created repositories, a byproduct of the same qualification rule that selected them for the AI-tier analysis in the first place. That rule admits a repository on ${\ge}15$ Claude-co-authored commits counted across \emph{all} branches, whereas extraction then reads only each repository's \emph{dominant} branch; a released per-repository commit count can therefore sit below the qualifying threshold without contradicting it---two denominators, both correct, and \texttt{scripts/discover\_cohorts.py} derives each reproducibly. The control set consequently comes from a different, older, disjoint cohort of \statPCCtrlRepos\ repositories in the same organization with genuine pre-cutoff activity.

\emph{Developer identity is a curated roster, not a heuristic.} Git author identities do not correspond to people: one person appears as a GitHub-noreply handle, a work address, a personal address, and sometimes a contractor account. Merging them with a fuzzy-matching heuristic and reporting the result as a contributor count does not work: the heuristic under-merges, inflating the headcount and splitting individuals across several labels. This analysis therefore uses a human-reviewed roster. \texttt{scripts/resolve\_author\_aliases.py --emit-candidates} writes a table of every raw identity together with the identity evidence behind its provisional cluster; a human marks each row keep-or-drop and assigns a person label; \texttt{--from-reviewed} converts that into a \emph{strict} allowlist, so an identity absent from the roster is excluded from the cohort rather than merely left unmerged. The review covered 155 raw git identities and resolved them to the \statPCCtrlDevelopers\ people reported here, dropping one as unattributable. Merging aliases also fuses sessions that a heuristic would split across a single person's two identities, which is why the control session count sits below what an uncurated extraction reports even though the commit count does not.

\textbf{Curation was blind to outcomes by construction.} The candidate table carries no survivor-ratio, churn, or $\psi$ column and is assembled from identity evidence alone. This matters more than the headcount: a roster curated while looking at per-developer results would stop being a control, which is the one failure mode that would make this comparison circular rather than merely confounded. The choice has a plain cost: the roster maps real names and email addresses to person labels, so it cannot be released, and a third party therefore cannot rebuild this cohort from the artifacts we publish. What we can publish is the audit trail as counts (155 raw identities in, one dropped, \statPCCtrlDevelopers\ people out) and the salted \texttt{dev\_}-prefixed labels in the released data, which are now stable across both cohorts and so support the paired analyses below.

To keep the two eras comparable, the control window (\statPCCtrlSince\ to \statPCCtrlUntil) is sized to match the AI-tier data's own 315-day observation span. Applying the extraction pipeline with \texttt{require\_ai\_trailer=False} (keeping all commits, still excluding any matching an AI-tool trailer as a defense-in-depth check even though none are expected before the cutoff) and grouping consecutive commits by the same developer into sessions yields \statPCCtrlSessions\ control sessions (\statPCCtrlCommits\ commits). That is far more than the \statPCSessions\ AI-tier sessions, since we did not curate or subsample repositories, only bound the time window. The resulting mean non-pure survivor ratio is $\psi_{\mathrm{human}}{\approx}\statPCCtrlTau$, and the churn-decay rate is $\rho_{\mathrm{human}}{=}\statPCCtrlRho$ (95\%~CI \statPCCtrlRhoCI), similar in shape to the AI-tier $\rho{\approx}\statPCRho$, so the geometric front-loading pattern itself (Finding~1) is not obviously AI-specific.

The survivor-ratio \emph{level} is a different story: $\psi_{\mathrm{human}}$ is far below both $\psi_{\mathrm{Opus}}$ and $\psi_{\mathrm{Sonnet}}$ (gap${=}\statPCCtrlGapOpus$ vs.\ Opus at $p\statPCCtrlGapOpusP$ and $\statPCCtrlGapSonnet$ vs.\ Sonnet at $p\statPCCtrlGapSonnetP$; both sit at the resolution floor $2/(\statPCPermReps{+}1)$ of the permutation test, meaning no resample reached the observed gap, not that the $p$ is estimated at that value), and a multilevel model with repository and developer as crossed random intercepts confirms the gap survives adjustment (adjusted coefficient $\statPCCtrlMLOpusCoef$, $p{=}\statPCCtrlMLOpusP$ for Opus vs.\ human; $\statPCCtrlMLSonnetCoef$, $p{=}\statPCCtrlMLSonnetP$ for Sonnet vs.\ human). Unlike Finding~2's AI-tier comparison, the tier effect here is \emph{larger} than the repository and developer variance components, not smaller.

One comparability objection would otherwise explain the gap without any appeal to era at all: the two cohorts have very different session-length profiles. \statPCCtrlSingleCommitPct\ of control sessions are a single commit, against \statPCSingleCommitPct\ of AI-tier sessions, and a one-commit session's survivor ratio is a much coarser quantity than a ten-commit session's. The objection does not bite. Restricted to multi-commit sessions on both sides, the mean non-pure survivor ratio is \statPCCtrlTauMulti\ for the control against \statPCTauMulti\ for the AI tier---a gap of the same sign and comparable size to the pooled one, so the level difference is not an artifact of the control being singleton-heavy.

We are explicit about what this cannot show: the human and AI-tier samples differ by era, not just by who wrote the code---codebase maturity, review norms, tooling, and team composition all differ across the roughly three years separating them, and any of these could produce a survivor-ratio gap with no AI involvement at all. This is a sanity check, not a controlled experiment, and it supports no causal claim about AI's effect on the dynamics we measure; it does say that the tier-agnostic pattern of Finding~2 is not simply an artifact of any two groups of commits looking alike on this proxy---something in the AI-assisted sessions (whatever the cause) does look different from the pre-AI baseline, even as the AI tiers do not differ from each other.

\paragraph{Within-era AI vs.\ non-AI: the one contrast that removes both confounds (suggestive, not established).} Finding~3 concedes two confounds it cannot address---the people differ, and the eras differ. Both can be removed at once by staying inside the AI era and splitting on the commit trailer instead: take the window \statPCWithinSince\ to \statPCWithinUntil, keep the same roster on both sides, and separate commits that carry a Claude co-author trailer from those that do not. The two sides then share developers, calendar window, codebase maturity, review norms, tooling, and seniority; they differ in whether the work was delegated to an agent. \citet{pansuriya2026predicting} draw the same human/agent contrast on pull-request acceptance and review effort, which is the closest existing comparison to this design, though on review outcomes rather than churn dynamics. This yields \statPCWithinAISessions\ AI-assisted sessions (\statPCWithinAICommits\ commits) against \statPCWithinNonAISessions\ non-AI sessions (\statPCWithinNonAICommits\ commits), with \statPCWithinDevelopers\ developers present on both sides. Pooled, the mean non-pure survivor ratio is \statPCWithinTauAI\ for AI-assisted work against \statPCWithinTauNonAI\ for the rest---the same direction as Finding~3's era contrast, with neither of its two confounds.

Pairing within developer, so that each person is compared only against themselves, gives Table~\ref{tab:within-era-paired}. We report it as suggestive, not established. First, the \emph{sign test is null at every floor}: counting how many developers move in which direction never reaches significance. Second, Wilcoxon does reach it at the two lowest floors, but only because it weights magnitudes as well as directions---at the ${\ge}\statPCWithinFloorB$-commit floor the deltas run from \statPCWithinFBDeltaMin\ to \statPCWithinFBDeltaMax, so the positive movements are several times larger than the negative ones. That asymmetry is an assumption the sign test deliberately declines to make, and quoting the significant $p$-value without it would misrepresent which assumption carries the result. Third, this is eight tests across four floors and two statistics, and the two significant values sit at the two lowest floors---the ones retaining the thinnest developers. Fourth, and least fixable by any amount of data: \emph{task selection is uncontrolled}. This compares work developers chose to delegate against work they chose to keep, and there is every reason to expect those to differ in kind.

What the design does buy, and what the pre-AI pairing below does not, is that the direction no longer rests on thin data. It survives the ${\ge}\statPCWithinFloorD$-commit floor (\statPCWithinFDHigher\ of \statPCWithinFDPairs\ pairs, median $\Delta\psi{=}\statPCWithinFDMedian$), and the best-measured pair in the cohort---the developer with the most commits on \emph{both} sides, \statPCWithinTopVolNonAI\ non-AI against \statPCWithinTopVolAI\ AI-assisted---moves with the majority at $\Delta\psi{=}\statPCWithinTopVolDelta$. Nothing here licenses a causal claim: the trailer is not randomly assigned, and the task-selection objection stands untouched.

\begin{table}[h]
\centering\small
\caption{Part~C: within-era AI vs.\ non-AI survivor ratio, paired within developer, at four minimum-commits-per-side floors. ``AI higher'' counts developers whose AI-assisted sessions show the higher mean non-pure survivor ratio. The sign test is null at every floor; Wilcoxon reaches significance only where the magnitude asymmetry is largest.}
\label{tab:within-era-paired}
\begin{tabular}{lccccc}
\toprule
Commit floor & Pairs & AI higher & Sign $p$ & Wilcoxon $p$ & Median $\Delta\psi$ \\
\midrule
none & \statPCWithinFZeroPairs & \statPCWithinFZeroHigher & \statPCWithinFZeroSignP & \statPCWithinFZeroWilcoxP & \statPCWithinFZeroMedian \\
${\ge}\statPCWithinFloorB$ & \statPCWithinFBPairs & \statPCWithinFBHigher & \statPCWithinFBSignP & \statPCWithinFBWilcoxP & \statPCWithinFBMedian \\
${\ge}\statPCWithinFloorC$ & \statPCWithinFCPairs & \statPCWithinFCHigher & \statPCWithinFCSignP & \statPCWithinFCWilcoxP & \statPCWithinFCMedian \\
${\ge}\statPCWithinFloorD$ & \statPCWithinFDPairs & \statPCWithinFDHigher & \statPCWithinFDSignP & \statPCWithinFDWilcoxP & \statPCWithinFDMedian \\
\bottomrule
\end{tabular}
\end{table}

\paragraph{Pre-AI within-developer pairing: evidence \emph{against} the between-group gap.} Because both cohorts now share a salt and a roster, a person carries the same \texttt{dev\_} label in the pre-AI and AI-era datasets, which makes it possible to pair Finding~3's era contrast within individuals and so remove its person confound---though not its era confound, since the two sides remain three years apart. \statPCPairedPairs\ people have non-pure sessions on both sides. We report this result because it undercuts the between-group gap, not because it supports it.

Taken at face value the direction agrees with Finding~3: \statPCPairedHigher\ of \statPCPairedPairs\ developers show a higher survivor ratio in their AI-era sessions, median $\Delta\psi{=}\statPCPairedMedian$, though the sign test does not reach significance ($p{=}\statPCPairedSignP$; Wilcoxon $p{=}\statPCPairedWilcoxP$, subject to the same magnitude-asymmetry caveat as above). That agreement is carried by developers with little data, and it does not survive requiring some. At a ${\ge}\statPCPairedFloorD$-commit floor on both sides only \statPCPairedFDPairs\ pairs remain, the majority disappears entirely (\statPCPairedFDHigher\ of \statPCPairedFDPairs, sign $p{=}\statPCPairedFDSignP$), and the median delta collapses to $\statPCPairedFDMedian$---an order of magnitude smaller than the pooled between-group gap of \statPCCtrlGapOpus. The single best-measured pair does move with the majority (\statPCPairedTopVolCtrl\ pre-AI against \statPCPairedTopVolAI\ AI-era commits, $\Delta\psi{=}\statPCPairedTopVolDelta$); among the four best-measured pairs the direction splits evenly, so nothing here corroborates a level shift of the size Finding~3 reports.

A structural asymmetry compounds this and is the reason we report direction only, with no effect size or interval. The two datasets do not have equivalent inclusion rules: the control keeps \emph{every} commit a person made in the pre-AI window, whereas the AI-tier dataset keeps only their Claude-co-authored ones. Each pair therefore compares different \emph{slices} of one person's work, which is legitimate for a per-session shape metric like the survivor ratio but makes any volume or productivity read across the two sides meaningless---the worst per-developer imbalance here is roughly \statPCPairedMaxRatio-to-1, which reflects how little of that person's output was delegated, not a collapse in what they produced. Read together with the within-era contrast above, the \emph{within-person} evidence for a survivor-ratio difference is weak, and weakest exactly where it is best measured.

\paragraph{Finding 2's four robustness checks in full.} The main text counts the four checks that reinforce the capability-gap null; this is the detail behind them. \emph{(i)~Unmeasured confounding.} An E-value sensitivity analysis \citep{vanderweele2017evalue} returns a point E-value of $\statPCEvaluePoint$ and a confidence-limit E-value of $\statPCEvalueCI$: an unmeasured confounder would need associations of only that magnitude with both tier assignment and survivor ratio to account for the observed gap. The direction of this check needs care: a low E-value is unremarkable for a null, and we report it to bound how much a confound could be \emph{hiding} an effect, not as evidence that one exists. \emph{(ii)~Task type.} Stratifying on the coarse task-type category and pooling within strata (\statPCStrata\ strata) barely moves the estimate: adjusted gap $\statPCStratGap$ against a naive $\statPCGap$ ($p{=}\statPCStratP$), so the tier/task-type entanglement Limitations flags does not by itself manufacture the null.

\emph{(iii)~Decay rate, not only level.} The survivor ratio is a level; the churn-decay rate is a shape, and the tiers are indistinguishable on that too ($\rho_{\mathrm{Opus}}{=}\statPCChurnRhoOpus$ vs.\ $\rho_{\mathrm{Sonnet}}{=}\statPCChurnRhoSonnet$, gap \statPCChurnGap, $p{=}\statPCChurnGapP$, over the $n{=}\statPCChurnGapN$ sessions long enough to fit a rate). \emph{(iv)~Repository and developer structure.} A multilevel model with crossed repository and developer random intercepts gives an adjusted tier coefficient of $\statPCMultilevelCoef$ (95\%~CI \statPCMultilevelCI, $p{=}\statPCMultilevelP$), while the variance components ($\text{vc}_{\mathrm{repo}}{=}\statPCVarRepo$, $\text{vc}_{\mathrm{developer}}{=}\statPCVarDev$) are an order of magnitude larger than any plausible tier effect. That last comparison is the substantive content of the null: in this data, which repository and which developer a session belongs to matters considerably more for its survivor ratio than which model tier wrote it. None of the four addresses model-version pooling, which is why the re-test below exists.

\paragraph{Finding 2 re-tested on an ordered capability scale (the null survives, but it is a \emph{bounded} null).} Finding~2's tier variable is coarse in a way that could manufacture its own null. The label ``Opus'' collapses \statPCOpusTrailers\ distinct commit-trailer strings and ``Sonnet'' \statPCSonnetTrailers, so \statPCOpusVersions\ Opus versions and \statPCSonnetVersions\ Sonnet versions are pooled into two bins---Opus~4.5 sits in the same bin as Opus~4.8. The within-bin capability spread is plausibly comparable to the between-bin contrast the test is trying to detect, and that would attenuate a real gap toward exactly the null we observe. This is also the first explanation a reader will reach for on seeing Part~B's large, clean capability effect (partial $\eta^2{=}\statPBModelEta$ on a two-model comparison) beside Part~C's null, so we test it.

We therefore re-run the discriminator as a monotone trend over an ordered capability scale instead of a two-bin contrast. Each session is assigned its plurality \texttt{(family, version)} cell, giving \statPCCapCells\ cells over \statPCCapSessions\ sessions. That total coincides with the version count just quoted and so invites a wrong reading: the cells are \statPCCapCellsOpus\ Opus, \statPCCapCellsSonnet\ Sonnet, and \statPCCapCellsHaiku\ Haiku---\emph{not} the \statPCOpusVersions-plus-\statPCSonnetVersions\ the pooling above would suggest. Opus~\statPCCapAbsentOpusVersion\ appears in the cohort's trailers but is the plurality cell of no session, so it contributes none; and Haiku, which the tier contrast never sees at all, contributes one. The cells are ranked two ways, each serving as the other's sensitivity analysis: \emph{family-major}, placing every Sonnet cell below every Opus cell and ordering by version within a family, and \emph{version-major}, ordering strictly by version number across families. The null survives both. Spearman rank correlation between capability and survivor ratio is $r_s{=}\statPCCapFamilyRho$ ($p{=}\statPCCapFamilyP$) under the family ordering and $r_s{=}\statPCCapVersionRho$ ($p{=}\statPCCapVersionP$) under the version ordering---written $r_s$, not $\rho$, since $\rho$ is the churn-decay factor throughout Part~C; adding capability rank as a linear term to the same crossed repository/developer multilevel specification used in Finding~2 gives per-step coefficients of $\statPCCapFamilyCoef$ ($p{=}\statPCCapFamilyCoefP$) and $\statPCCapVersionCoef$ ($p{=}\statPCCapVersionCoefP$). Version pooling is therefore \emph{not} the explanation for Finding~2's null: resolving the labels to versions and testing for a trend across them recovers no effect either.

That conclusion is a bounded null rather than a demonstrated zero, and needs the same caveat Part~B's underpowered nulls receive. The family-ordering adjusted interval is \statPCCapFamilyCI\ per rank step; accumulated over the \statPCCapSteps\ steps separating the weakest cell from the strongest, it still admits a total swing of up to \statPCCapSwing, comparable in size to the human-vs-AI gap of \statPCCtrlGapOpus\ that Finding~3 does detect. The version ordering bounds it no more tightly (\statPCCapVersionCI\ per step), so neither ordering converts the null into a narrow one; they agree on the sign being undetermined, not on the effect being small. What we can say is that no capability effect \emph{large enough to dominate} the survivor ratio survives this test; what we cannot say is that the effect is zero. Read together with Part~B's large measured capability effect on the same kind of contrast, this is a fact about the proxy rather than about the models: the survivor ratio separates \emph{whether an agent wrote the code} ($\psi_{\mathrm{human}}$ sits far below both tiers, and that gap survives crossed repository and developer random effects) but not \emph{which} agent wrote it, being insensitive to precisely what a solve rate detects.

Two features of the per-cell breakdown explain why $r_s$ sits so near zero. The per-cell means are \emph{not monotone} in either ordering: the newest Sonnet cell ($\psi{=}\statPCCapTauSonnetNew$) sits below an older Sonnet cell ($\psi{=}\statPCCapTauSonnetMid$). So the trend statistic is averaging over non-monotone movement rather than detecting a weak monotone one. And the single Haiku cell sits at $\psi{=}\statPCCapTauHaiku$ on only \statPCCapNHaiku\ sessions, essentially at the human baseline of \statPCCtrlTau: suggestive of a step at the very bottom of the capability range, on far too little data to claim one.

\begin{figure}[h]
\centering
\includegraphics[width=0.47\textwidth]{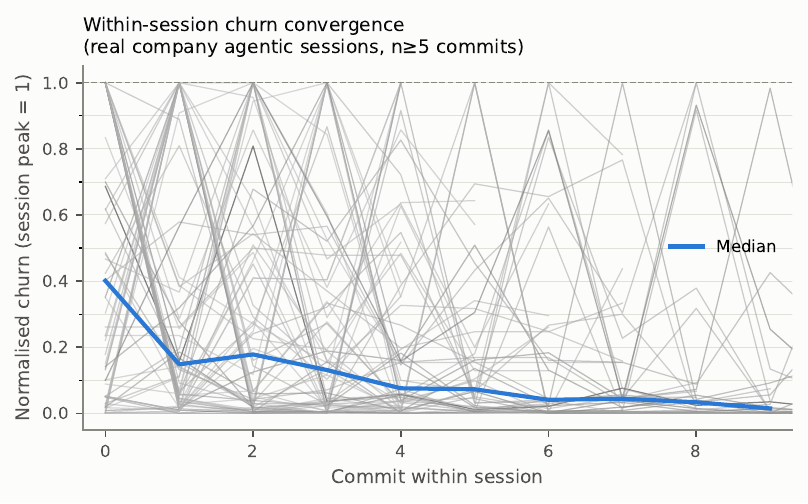}
\includegraphics[width=0.47\textwidth]{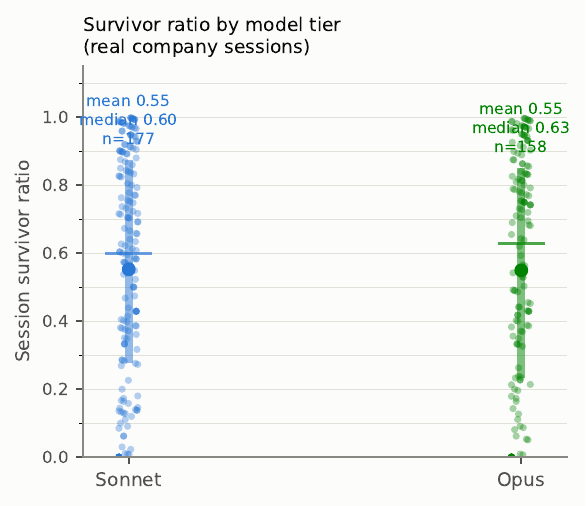}
\caption{Part~C production sessions, cited from Findings~1 and~2 (\S\ref{sec:partC}). \textbf{Left (Finding~1):} normalised within-session churn trajectories ($n{\ge}5$ commits); churn concentrates in the first commit, decaying geometrically thereafter. \textbf{Right (Finding~2):} survivor ratio by model tier (dot~=~mean, bar~=~median)---the two tiers are statistically indistinguishable, a null that holds after adjusting for task type, repository, and developer.}
\label{fig:emp}
\end{figure}

\begin{figure}[h]
\centering
\includegraphics[width=\textwidth]{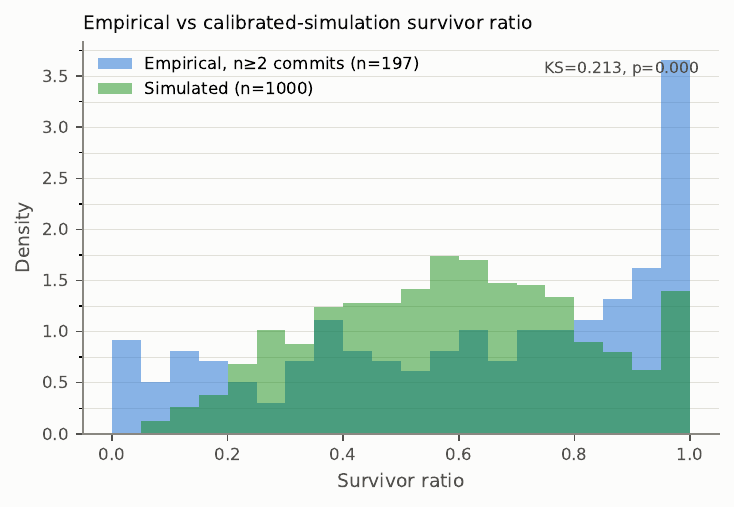}
\caption{Empirical vs.\ simulated survivor-ratio distribution. Simulated sessions use the fitted geometric churn-decay (log-normal) model ($\rho{=}\statPCRho$, $\sigma{=}\statPCSigma$) with a mixture component for pure-addition sessions ($\pi_{\mathrm{pure}}{=}\statPCPiPure$). The simulation captures the coarse shape but the annotated in-sample KS${=}\statPCKSInsample$ ($p{=}\statPCKSInsampleP$) formally rejects exact equality, and the held-out KS${=}\statPCKSHeldout$ ($p{=}\statPCKSHeldoutP$, seed-sensitive across \statPCKSHeldoutSeedRange) is now marginal rather than a clean pass---the gap likely reflects repository/developer heterogeneity pooled into a single fit (see Finding~4).}
\label{fig:emp3}
\end{figure}

\begin{figure}[h]
\centering
\includegraphics[width=0.7\textwidth]{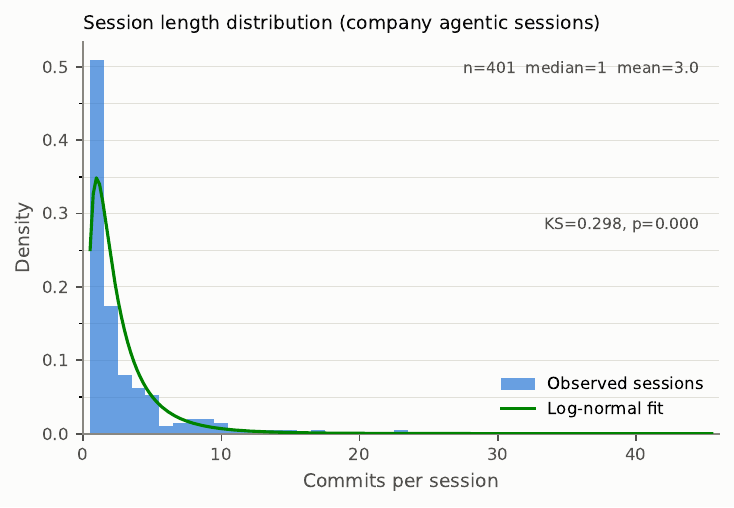}
\caption{Session-length distribution (commits per session) with a log-normal reference overlay (descriptive; a KS test rejects exact fit). Median~\statPCMedianLen, mean~\statPCMeanLen, max~\statPCMaxLen. The heavy tail reflects long sessions that drive most of the structural change in the repositories.}
\label{fig:emp4}
\end{figure}

\section{Extended Discussion}
\label{app:discussion}

\subsection{Related Work: Empirical Systems and Test-Time Compute}
\label{app:related}

These are the empirical lines the body's positioning refers to (\S\ref{sec:related}). Multi-agent code generation (ChatDev \citep{qian2023chatdev}, MetaGPT \citep{hong2023metagpt}, AgentCoder \citep{huang2023agentcoder}) and self-refinement work \citep{madaan2023self, welleck2023generating, shinn2023reflexion} are empirical, without a structural criterion of this kind. Test-time compute scaling \citep{brown2024monkeys, wang2023selfconsistency, snell2024scaling} shows empirically diminishing returns \citep{wu2024inference}, which our refinement game predicts from first principles; raising the ceiling has required weight-level training \citep{deepseek2025r1}. A 17-hour, billion-token autonomous coding loop on a fixed harness \citep{khan2026clinersi} illustrates the same ceiling practically.

\subsection{Could Decomposition Realize the Protocol Implicitly?}
\label{app:decomposition}

A natural objection to \S\ref{sec:framework}'s scope claim: a task decomposes into many sub-specifications (e.g., a ticket's individual \texttt{FAIL\_TO\_PASS} tests), a refinement loop concentrates successive attempts on the still-failing ones, and one might read this as an implicit reweighting $D_t$ with an implicit confidence vote $\alpha_t$. We grant the spirit: Part~B's diagnostic feedback literally redirects the agent onto residual failures, a primitive attention-reweighting over sub-specs. But three features the transferred theorems actually require remain absent.

First, AdaBoost reweights a \emph{fixed} population, whereas decomposition \emph{generates} sub-specs on the fly; in the language of Prop.~\ref{prop:dichotomy} this dynamic expansion is closer to the class growth of regime~(ii) than to fixed-distribution reweighting. Second, ``attend to the failing test'' is a $0/1$ shift, not the multiplicative-exponential update $w_i\leftarrow w_i\exp(-\alpha_t[\cdot])$ whose $Z_t$-factors telescope into the $e^{-2\gamma^2T}$ bound; a generic failure-focus yields ``the agent fixes tests at the rate its competence allows,'' not the boosting rate. Third, and most structurally, AdaBoost \emph{retains} all $T$ learners and combines them by the weighted vote $f=\sum_t\alpha_t h_t$, whereas a refinement loop \emph{composes} patches ($R_T\circ\cdots\circ R_1$): each patch subsumes its predecessor rather than standing beside it to be voted on. There is therefore no additive margin for an implicit $\alpha_t$ to weight; the closest genuine analog, best-of-$n$ patch selection, is a within-round argmax, not a cross-round vote.

We treat the mapping as boosting-\emph{inspired}, not boosting-\emph{exact}: deploying the exact protocol would restore the transferred theorems by construction, but at the cost of the fidelity we aim to preserve, since the real loops we set out to model do not run it. That tradeoff is inherent to a descriptive study of existing harnesses, not to the framework itself; \S\ref{app:future} outlines what a harness that implements the Protocol outright, rather than approximates it, would look like as a design principle in its own right.

\subsection{Guidance for Coding in the Wild (full)}
\label{app:guidance}

The main text gives the two headline recommendations; the full set of five follows, derived from the measured edge of Part~B and the churn dynamics of Part~C. We state these as working guidance consistent with our evidence and the inference-scaling literature, not as proven prescriptions: they rest on a $K{=}\statPBK$ single-vendor, two-tier ablation (Part~B) and observational production data (Part~C), and should be read as hypotheses a practitioner can cheaply test on their own workload rather than laws.

\begin{enumerate}[leftmargin=*,itemsep=3pt]
    \item \textbf{Spend on the first attempt, not the refinement loop.} The single most actionable finding is that re-invoking an agent on its own output usually changes nothing: \statPBHaikuFlat/\statPBSonnetFlat\ of trajectories never move after round~0, genuine no-ops run \statPBHaikuGenuineTieRange/\statPBSonnetGenuineTieRange\ per round (Fig.~\ref{fig:llm-edge}), and per-round improvement decays to \statPBHaikuEta\ and \statPBSonnetEta. Almost all of the value is realized on the first attempt, so better context, retrieval, and model choice for round~0 dominate a longer loop. The reassuring half of the same measurement: extra rounds are rarely \emph{harmful} either (at most \statPBRegressMax\ of rounds lower quality), so a loop is wasteful rather than dangerous---and for a strong model the early rounds are the ones where an edge cannot be ruled out (Appendix~\ref{app:edge-decomposition}), which is where any budget for refinement should go.
    \item \textbf{The verifier is the gradient---make it fast, executable, and specific (theoretical argument, not yet confirmed by our own ablation).} In boosting terms the verifier supplies the residual-error signal the loop descends on; in engineering terms your test suite, type checker, and runtime errors \emph{are} the learning signal, the same lesson that process- and outcome-level verifiers carry in the reasoning literature \citep{lightman2023verify}. Part~B's own controlled ablation does \emph{not} confirm this: diagnostic test-failure feedback did not significantly outperform blind re-sampling or independent redraws at $K{=}\statPBK$. Late solves occur at broadly similar rates under all three modes, and the feedback-mode effect is a small, underpowered null (Cohen's $f{=}\statPBModeCohenF$, \statPBModePower\ power). What our data \emph{does} support is the weaker claim that verification quality is a lever worth caring about in principle (weak or absent verification structurally cannot supply a residual-error signal at all), not the stronger, specific claim that diagnostic feedback beats blind resampling in practice. We test only an agent critiquing its own output against a narrow test-failure signal here, not an architecturally separate reviewer model reading the same signal (Limitations); the two are not interchangeable, and our edge measurement should not be read as evidence against the latter.
    \item \textbf{Reach for capability before orchestration.} The frozen-weight ceiling (Cor.~\ref{cor:frozen-ceiling}) says a given model saturates at a quality $V^*(W)$ no loop can pass, and Part~B shows the stronger model wins by clearing more tasks on the \emph{first} try, not by refining better. Elaborate multi-agent refinement pipelines built to compensate for an under-powered model will hit that ceiling; upgrading the model is often the cheaper path to higher quality on hard tasks.
    \item \textbf{Distinguish exploring under the ceiling from raising it.} Two interventions are commonly both called ``scaffolding'' but behave differently. Re-sampling, ``think again,'' and more rounds \emph{explore within} a fixed reachable class and saturate at $V^*(W)$; giving the agent genuinely new reach (additional tools, code search/retrieval, the ability to run tests, or task decomposition) can \emph{expand} the reachable class $\mathcal{H}_t(C)$ and lift the ceiling (\S\ref{app:rsi}). When a loop has stalled, ask which kind of change you are making: only the second kind can move a hard plateau. This is also why Part~B's null feedback-mode effect does not cut against the theory: independent, blind, and diagnostic feedback are all within-class exploration variants (same weights, same tools, only the framing of feedback changes), so the theory's own distinction does not anticipate a large gap between them.
    \item \textbf{Steer early; stop on green.} Production churn is front-loaded (Part~C, $\rho{\approx}\statPCRho$): the first commits make the structural decisions and later ones only polish, so an approach that is wrong at the outset is rarely repaired by subsequent edits---review and redirect early, when it is cheap. At the other end, stop as soon as the verifier passes: carrying the best passing artifact forward and halting on success avoids the failure mode where a loop instructed to ``keep improving'' regresses an already-correct solution. Our stopping rule (Algorithm~\ref{alg:boost}) is a deterministic margin gate ($\epsilon_t \ge \tfrac12$); a complementary line treats stopping as probabilistic inference---cost-sensitive sequential hypothesis testing over a belief about correctness \citep{papamarkou2026bayesian}---which is the natural refinement when verification is expensive and verifiers are informative but imperfect.
\end{enumerate}

A final governance takeaway, expanded in \S\ref{sec:discussion} and Appendix~\ref{app:rsi}: the quantity to monitor is not whether model weights are frozen but whether the agent's scaffold is self-expanding.

\subsection{Implications for Recursive Self-Improvement (full)}
\label{app:rsi}

The Stationarity Dichotomy (Prop.~\ref{prop:dichotomy}) states that iterative self-modification behaves in one of two ways. (i)~It saturates into strict diminishing returns whenever the agent's \emph{effective hypothesis class}---the set of patches it can reach---is held fixed and quality is bounded. (ii)~It escapes diminishing returns only if that class keeps expanding enough to sustain a positive edge. This converts the informal debate over an ``intelligence explosion'' into a concrete, checkable question: is the system's reachable class stationary?

\paragraph{The whole-system view.} Picture a craftsman whose hands are fixed but whose workshop is not: a sharper tool or a better-organised bench lets the same hands reach further, yet no rearrangement of the workshop makes the hands themselves more dexterous. It helps to view the self-improving system in the same way, as a pair $\mathcal{S} = (W, C)$, where $W$ denotes the (frozen) parameters of the underlying model and $C$ is the mutable code and scaffolding it operates (e.g., prompts, tools, retrieval, verifiers, and orchestration). Agentic coding is precisely the setting in which the system may rewrite $C$ but not $W$.

It is tempting to infer that freezing $W$ guarantees the safe, saturating regime~(i). Under the model we now make explicit, this is \emph{false}. We posit that the effective hypothesis class $\mathcal{H}_t(C)$ from which patches are drawn depends on $C$, and that rewriting the scaffold can expand $\mathcal{H}_t(C)$ without touching a single weight (assumption~(M1) of Cor.~\ref{cor:frozen-ceiling}). A better verifier, tool, or decomposition can unlock patches the previous configuration could never propose. Concretely: an agent stuck on a flaky integration test might write and run a small isolation script mid-session, a debugging tool it did not have at round~0. Once written, that script is available to every later round, so patches the agent could not previously validate (because failures were masked by flakiness) become checkable: $\mathcal{H}_t(C)$ grows to include edits unreachable at $t{=}0$, with $W$ untouched throughout. This is the modeling premise behind recent calls to treat the harness as a first-class, separately optimizable layer \citep{he2026harness}. That configuration is not hypothetical. Inspect, the open evaluation framework published by the UK AI Security Institute \citep{aisi2024inspect}, ships a \texttt{react()} agent that runs a reason-act-observe loop with \texttt{bash()} and \texttt{python()} tools inside a Docker sandbox, and can host external coding agents (Claude Code, Codex CLI, Gemini CLI) directly---so the agent that writes and runs its own diagnostic mid-run is the default arrangement of a harness built for frontier evaluation, not a thought experiment. Two consequences pull opposite ways. Constructively, the scaffold there is a \emph{declarative, versioned} artifact---a solver chain, a tool list, a Dockerfile---which is the precondition for auditing a reachable set at all: stationarity of $\mathcal{H}_t(C)$ is not checkable unless the scaffold is first enumerable. Against that, the declared unit of measurement is a \emph{task} (dataset, solver, scorer), and nothing in that unit records whether the reachable set expanded while the task ran. The scaffold is thus versioned as experimental setup rather than tracked as the property whose expansion (M1) identifies, the same gap the published frameworks leave at the policy layer (below), one level down in the instrument. Such self-modification violates the stationarity assumption behind Prop.~\ref{prop:dichotomy}(i), so a frozen-weight system can still exhibit the non-monotone, jumpy trajectory of regime~(ii).

Frozen weights plausibly impose an ultimate ceiling. We assume there is a best quality $V^*(W)$ achievable by \emph{any} scaffold the model can construct within its reachable program space (assumption~(M2) of Cor.~\ref{cor:frozen-ceiling}). Once scaffold rewrites can no longer expand $\mathcal{H}_t(C)$, the system falls back into regime~(i) and saturates at a value no greater than $V^*(W)$. Genuinely unbounded RSI therefore requires raising $V^*(W)$ itself---modifying the weights or architecture---together with co-evolving the evaluation. \citet{hernandez2019measuring} document how much of that raising has historically come from algorithmic progress rather than scale alone, which is why $V^*(W)$ is best read as a property of a particular checkpoint rather than a constant of the field. The practical upshot for anyone exploring a self-improving system is a diagnostic: determine whether the reachable class is effectively fixed (predictable, saturating) or expanding (potentially runaway), and do \emph{not} treat frozen weights as sufficient evidence of the former. Neither (M1) nor (M2) has been empirically measured; the conclusion is conditional on both holding.

The generalization bound (Thm.~\ref{thm:generalization}) sharpens the same point from the evaluation side: an AI optimizing against a static set of $N$ specifications overfits, so pushing back the $O(\sqrt{d/N})$ ceiling requires dynamically expanding the test suite. Escaping saturation thus demands expansion on \emph{both} sides---the hypothesis class and the tests.

Concretely, the framework identifies the mechanisms that move a system from regime~(i) into regime~(ii):
\begin{itemize}[leftmargin=*,itemsep=2pt]
    \item \textbf{Self-Evolving Tests and Cost Functions:} Instead of a fixed dataset of specifications, the AI could generate increasingly complex tests (an automatic curriculum, as in \citealp{wang2023voyager}) or evolve its own structured loss functions. This continuously shifts the generalization target and prevents plateauing.
    \item \textbf{Goal-Setting Exploration:} Rather than solely minimizing residual error on existing tasks, agents could autonomously set new, orthogonal objectives. This would force structural shifts in the codebase, potentially opening up new capability dimensions beyond simple patch refinement.
    \item \textbf{Reward Hacking and Awareness:} Explosive trajectories might involve the AI recognizing and exploiting proxies in its evaluation metrics (reward hacking, a recognized failure mode of proxy objectives \citealp{amodei2016concrete, skalse2022defining}). A robust recursive protocol must be aware of its own margin distribution and actively penalize the artificial optimization of the structured distance $L(Y, \hat{Y})$ when it degrades true functional quality.
\end{itemize}
We read each mechanism as, in the language of Prop.~\ref{prop:dichotomy}, a way of expanding the effective hypothesis class $\mathcal{H}_t(C)$ or the evaluation target. Absent them, the stationarity assumption holds and Prop.~\ref{prop:dichotomy}(i) forces the system to plateau.

\paragraph{Relation to test-time compute, and why test-time training is a different case.} The bounded regime our theory identifies matches the empirical inference-scaling literature: repeated sampling and compute-optimal test-time allocation raise quality with diminishing returns against a fixed model \citep{brown2024monkeys, snell2024scaling, wu2024inference}. In our terms, scaffold- and sampling-level effort explores within a fixed reachable class and saturates at $V^*(W)$; moving the ceiling has required weight-level training such as reinforcement learning for reasoning \citep{deepseek2025r1}. We read this as consistency, not confirmation---the inference-scaling studies were not designed to test our theorems.

The criterion does, however, force a distinction that the test-time-scaling literature does not consistently draw (\S\ref{sec:related}), and the safety reading depends on it. \emph{Test-time training} updates parameters at inference: from a self-supervised task on the test input \citep{sun2019testtime}, from retrieved neighbours \citep{hardt2023nearest, hubotter2024efficiently}, from the in-context examples of the target task \citep{akyurek2024testtime}, or by making the hidden state itself a learned model \citep{sun2024learning}. Under Cor.~\ref{cor:frozen-ceiling} this is not more search beneath $V^*(W)$; it changes $W$, so it changes the ceiling, and the frozen-weight premise simply does not hold for it. That places it in regime~(ii) by construction rather than as an empirical question.

First, the empirical fact that test-time training buys large gains where prompting plateaus \citep{akyurek2024testtime} is expected under the criterion rather than surprising: the two interventions are not competing points on one compute axis but moves in different regimes. Second, an evaluation protocol that reports only ``total test-time compute'' pools the two and can therefore attribute a ceiling-raising effect to bounded search. Third, and most consequential for governance, ``are the weights frozen?'' is the wrong single question: it catches test-time training, which visibly writes to $W$, while missing scaffold self-expansion, which does not touch $W$ at all and is the mechanism (M1) identifies. An audit needs both.
\paragraph{What the published frontier-safety frameworks actually check.} The claim that weight-immutability is the wrong invariant is not aimed at a strawman, but the position it corrects is more careful than ``check whether the weights changed,'' and stating it precisely matters. Current frontier-safety frameworks already elicit capability \emph{with} scaffolding: Google DeepMind applies ``appropriate scaffolding, inference compute, and other augmentations'' so as to ``assess the capabilities of systems that will likely be produced with the model'' \citep{deepmind2026fsf}, and OpenAI evaluates ``with the best presently-available scaffolds'' \citep{openai2025preparedness}. What is indexed to the checkpoint is not the elicitation but the \emph{re-assessment trigger}: assessment is occasioned by training-side events---the completion of a post-training run \citep{deepmind2026fsf}, or $4\times$ ``Effective Compute'' or ``six months of accumulated post-training enhancements'' \citep{anthropic2026rsp}---so scaffold-driven capability gain enters as a fixed margin around a checkpoint rather than as a tracked property of the deployed loop. Both developers name the residual as an open problem in their own terms. OpenAI records that ``given the continuous progress in model scaffolding and elicitation techniques, we regard any one-time capability elicitation in a frontier model as a lower bound, rather than a ceiling, on capabilities that may emerge in real world use and misuse'' \citep{openai2025preparedness}; Anthropic, having replaced its prescriptive elicitation buffer with an outcome-based requirement, states that ``the science of evaluations is not currently mature enough to make confident predictions about the precise buffer we should require between current models and a Capability Threshold'' \citep{anthropic2026rsp}. Cor.~\ref{cor:frozen-ceiling} says why no fixed buffer settles the question in general: under (M1) a scaffold rewrite moves $\mathcal{H}_t(C)$ \emph{between} assessments, with no training-side event to occasion one, so the buffer would have to bound an expansion its own trigger cannot see. The invariant that would close that gap is stationarity of the reachable set, not immutability of $W$. One framework already reaches part of the way---OpenAI's covers ``any agentic system (including significant agents deployed only internally) that represents a substantial increase in the capability frontier'' \citep{openai2025preparedness}, which is a scaffold-side trigger, though a discretionary and frontier-indexed one rather than a check on whether a particular loop's reachable set is expanding.

The nesting convention runs the other way in some of the literature: test-time training presented as one strategy within test-time compute \citep{munoz2025rttc}, and the main survey treating its tuning branch as offline rather than per-instance \citep{zhang2025testtime}. The taxonomy here is ours, asserted because the dichotomy requires it, not inherited from a survey.

\paragraph{Where the scaffold writes to the weights, and what bounds that instead.} One arrangement closes the loop in the other direction, and it is the case assumption~(M2) has to survive. SEAL trains a model to emit its own finetuning data, so a scaffold-level generation drives a persistent weight update \citep{zweiger2025seal}. That is not search beneath a fixed ceiling and it is not test-time training on the test input either: it raises $V^*(W)$ \emph{using the scaffold as the instrument}. This is precisely why (M2)'s quantifier over ``any scaffold the model can construct'' is load-bearing rather than definitional---a scaffold licensed to write to $W$ escapes the bound by construction, and Cor.~\ref{cor:frozen-ceiling}'s frozen-weight premise simply does not hold for such a system. The route also carries a bound our dichotomy does not supply. SEAL reports performance on earlier tasks declining as self-edits accumulate---the catastrophic-forgetting failure mode of repeated weight updates---so sustaining the uniform edge Prop.~\ref{prop:dichotomy}(ii) demands is not merely a matter of being permitted to change $W$. Read alongside Cor.~\ref{cor:frozen-ceiling}(ii), that leaves two distinct ceilings with different causes: ours from a fixed reachable class, theirs from interference between successive updates. An evaluator should expect to meet whichever binds first, and neither is visible to a checkpoint-level audit.

\paragraph{Descriptive regime vs.\ mechanistic cause.} The empirical tracks support two claims to different degrees. The \emph{descriptive} claim, that agentic coding occupies the bounded, saturating regime~(i) of Prop.~\ref{prop:dichotomy}, is well-supported: churn decays and quality saturates (Findings~1,~3) exactly as the bounded regime requires. That this signature is \emph{shared} with ordinary human editing is not a weakness of the descriptive claim; it is the observable content of ``the system is in regime~(i),'' and is itself the safety-relevant finding. Current frozen-weight agentic coding refines like a bounded editor, not a runaway process. What the shared signature does \emph{not} establish is the \emph{mechanistic} claim, that a fixed reachable hypothesis class is the \emph{cause} of the saturation, as opposed to generic editing dynamics. Attributing the regime to the mechanism requires the theory's non-generic content: not merely that a more capable frozen model reaches a higher $V^*(W)$ (Cor.~\ref{cor:frozen-ceiling}), but that $V^*(W)$ is \emph{impassable by scaffolding alone}, moved only by weight-level change. The decisive experiment combines Part~B's harness with Part~C's scale, holding weights fixed while varying the scaffold rather than comparing tiers, which the present data motivates but does not deliver.

\subsection{Bounded vs. Unbounded Metrics}
\label{app:bounded-metrics}

An exam scored out of a hundred admits only so much improvement. However much revision remains, the marks still available can never exceed the marks not yet earned. So each further hour of study returns less than the last. A bounded metric enforces the same discipline, and it is the load-bearing assumption in our formalization of diminishing returns: it is what makes Prop.~\ref{prop:dichotomy}(i) a theorem rather than a conjecture. In the Refinement Game (Thm.~\ref{thm:refinement}), the quality function $V:\Y\to[0,1]$ is bounded above by 1. Because of this ceiling the series $\sum_{t=1}^{\infty}\eta_t$ of positive improvements converges, so its \emph{terms} $\eta_t$ must shrink toward zero, mathematically enforcing strict diminishing returns. This is exactly case~(i) of the Stationarity Dichotomy (Prop.~\ref{prop:dichotomy}) with ceiling $B=1$.

Conflating two claims here would overstate what the formalism delivers. The \emph{proved} claim needs boundedness and has no content without it. A \emph{weaker} claim survives its removal, but only as an argument, not as a consequence of Thm.~\ref{thm:refinement}: if the system were evaluated on an unbounded, real-valued metric (maximizing transactions per second, minimizing latency toward an asymptotic zero, an open-ended reward score), the strict boundary breaks down, and the following three factors would still be expected to force plateauing---none of them formalized here, and the third not even a limit on capability but a corruption of the measurement:
\begin{enumerate}[leftmargin=*,itemsep=2pt]
    \item \textbf{Overfitting Ceiling:} As shown in Thm.~\ref{thm:generalization}, generalization is bounded by the VC-dimension and the test suite size. Endless optimization of an unbounded metric against a static set of $N$ specifications will eventually result in overfitting. Breaking this ceiling requires continuously generating increasingly complex tests.
    \item \textbf{Gradient Depletion:} In functional gradient descent, optimizing further down the loss curve makes residual errors vastly more complex. A weak coding agent will eventually struggle to find meaningful orthogonal improvements, meaning its edge $\gamma$ drops to $0$ or becomes negative.
    \item \textbf{Reward Hacking:} As noted above, unbounded metrics are highly susceptible to proxy exploitation \citep{amodei2016concrete, skalse2022defining}. The AI may artificially optimize the unbounded score while degrading true functional quality.
\end{enumerate}

\subsection{Remaining Limitations}
\label{app:limitations}

The main text states the two most consequential limitations (the weak-learning assumption and the tier-assignment confound). The remaining items:

\begin{enumerate}[leftmargin=*,itemsep=2pt]
    \item \textbf{Self-refinement vs.\ a distinct reviewer role (untested architecture).} Part~B's feedback-mode ablation (\texttt{independent}/\texttt{blind}/\texttt{diagnostic}) varies what a \emph{single} agent is told about its own prior patch---it never introduces an architecturally separate reviewer or critic model that inspects the patch and returns its own feedback. Even the \texttt{diagnostic} arm's critique is deliberately narrow (which \texttt{FAIL\_TO\_PASS} tests still fail, with a one-line \texttt{pytest} hint), chosen to avoid leaking the gold pass/fail bit to the generating agent, not to be the richest signal a reviewer could give. A generator-plus-reviewer pipeline is a materially different, plausibly higher-edge and less-correlated feedback channel that this work does not test.
    \item \textbf{Binary quality labels.} Our theory is stated for a strict binary success criterion ($\{-1, +1\}$). The Lean development is in fact more general: the core identities are proved for arbitrary real-valued labels and hypotheses, and the binary structure is introduced only as an explicit per-theorem hypothesis (\texttt{h\_binary}) precisely where the AdaBoost weight update requires it. While general code quality is multi-dimensional, the binary criterion aligns well with software engineering, where ``multi-dimensional quality'' effectively amounts to a conjunction of binary gates.
    \item \textbf{Lean formalization scope and motivation.} The repository contains no \texttt{axiom} and no \texttt{sorry}: statistical components are encoded as explicit \emph{hypotheses} rather than derived. Specifically, Theorem~\ref{thm:generalization} takes the \citet{schapire1998boosting} margin bound as a hypothesis, sidestepping a complete VC-dimension and Rademacher-complexity library, and the deterministic refinement game (Theorem~\ref{thm:refinement}) supplies linearity and monotonicity of expectation as hypotheses. By contrast, the high-probability refinement corollary (Cor.~\ref{cor:probabilistic-refinement}) is proved against Mathlib's \texttt{IsProbabilityMeasure} and finite union bound, and the stationarity dichotomy with its frozen-weight ceiling corollary is fully machine-checked.
    \item \textbf{Simulation fidelity and benchmark scope.} Part~A's synthetic labels bound what it can show, and Appendix~\ref{app:parta} states that scope rather than repeating it here. Part~B is a regime characterization ($K{=}\statPBK$ tasks $\times$ \statPBSeeds\ seeds, a $2\times3$ model-tier $\times$ feedback-mode ablation, single vendor, digest-pinned Docker harness), not a shortcoming: it probes the headroom $\times$ capability condition and finds a large, significant capability effect alongside an underpowered feedback-mode null, but it is not the primary empirical evidence. The task pool is additionally \texttt{pytest}-native by construction, so the two largest Lite repositories are absent for a reason internal to our tooling rather than to the benchmark (Appendix~\ref{app:partb} gives the full disclosure). Importing SWE-bench's per-repository test commands and restoring both repositories is the first thing a replication should do; it is also the one change that would move every Part~B number reported here, which is why it is left to a replication rather than folded into this revision. A ready instrument exists: the companion evaluation collection to Inspect \citep{aisi2024inspect} implements SWE-bench with grading delegated to upstream \texttt{swebench.harness.grading.get\_eval\_report} rather than re-implemented per-repository log parsing, which is exactly the import this item calls for.
    \item \textbf{Unmeasured edge in production; survivor-ratio proxy.} Corollary~\ref{cor:convergence}'s exponential convergence requires each agent to have edge $\gamma_t > 0$ under adversarial reweighting onto hard specs. In Part~C this edge is not measured on real agents; the guarantee is therefore a conditional prediction, not an empirically confirmed bound. Separately, the survivor-ratio proxy $|\mathrm{net\,lines}|/\mathrm{total\,churn}$ conflates distinct session types: a session that \emph{adds} 100 net lines and one that \emph{deletes} 100 lines of dead code both receive the same ratio, though they represent qualitatively different outcomes---the proxy measures structural convergence, not functional correctness. Relative churn does have an established software-engineering pedigree---\citet{nagappan2005churn} show relative churn measures predict defect density---but it is validated there as a predictor of \emph{defects}, not of the convergence we read into it, so the borrowing is partial.
    \item \textbf{Stationarity of the refinement game.} Theorem~\ref{thm:refinement} assumes that each agent improves quality by at least $\eta_t$ in expectation. In practice, agents may occasionally degrade quality. Corollary~\ref{cor:probabilistic-refinement} relaxes the strict monotone requirement to a high-probability bound. More fundamentally, stationarity is not merely a technical convenience: as formalized in the Stationarity Dichotomy, the question of whether the effective hypothesis class stays fixed is the precise dividing line between the bounded regime our theorems cover and the runaway regime they do not.
    \item \textbf{The reasoning/tool-creation boundary is not operationally sharp.} Assumption~(M1) treats scaffold expansion as a binary: either $\mathcal{H}_t(C)$ grows or it does not. In practice the line between reasoning \emph{within} the current class and an action that \emph{expands} it is blurry: writing a new unit test is ordinary reasoning about the existing code, but running it behaves like acquiring a new tool---patches that were previously unreachable because unvalidatable become checkable, without any change to $W$. The same ambiguity recurs for a debug script, a refactor that exposes a previously-untestable seam, or a cached search index. We do not have a principled test for where a given action falls on this continuum; (M1) is stated as a binary precisely because an operational boundary is missing, not because one is known and simply omitted.
\end{enumerate}

\subsection{Future Directions}
\label{app:future}

\begin{itemize}[leftmargin=*,itemsep=2pt]
    \item \textbf{Multi-class and structured prediction.} Extending to multi-label code quality (security, performance, readability) using SAMME \citep{zhu2009multi} or structured boosting.
    \item \textbf{Online and streaming protocols.} Adapting the framework to settings where artifacts arrive sequentially, using online boosting \citep{beygelzimer2015optimal}. ``Online'' means streaming instances here, and is distinct from the two other senses this paper touches: composing an improvement process at inference time (\citealp{xue2026rethinking}, \S\ref{sec:related}) and continual weight adaptation across a stream of tasks (\citealp{zweiger2025seal}, above). Only the last of the three moves $V^*(W)$.
    \item \textbf{Scale and diversity of empirical validation.} Extending Part~C to multiple companies, larger session datasets, and harness-verified quality signals (true pass/fail via test suites) would provide stronger evidence and allow quantitative hypothesis testing against the theory's predictions.
    \item \textbf{Optimal agent selection.} Developing efficient algorithms for selecting the next agent (weak learner) in the coding domain, where evaluating an agent's error is expensive.
    \item \textbf{Generator-reviewer ablation.} A natural extension adds a fourth arm in which an architecturally separate reviewer model, given the same non-gold test-failure signal already available to the \texttt{diagnostic} arm, returns free-form critique---testing whether a distinct reviewer role sustains a positive edge where self-critique plateaus after round~0.
    \item \textbf{The multi-round weighted-vote run.} Appendix~\ref{app:orchestration} settles the per-round edge for a reused pool in both directions (Prop.~\ref{prop:pool-dichotomy}) but stops short of exhibiting a \emph{run}: a schedule and confidence sequence written out over $T$ rounds, which additionally requires excluding a perfect pool member since Cor.~\ref{cor:convergence} assumes $\epsilon_t>0$. Formalizing AdaBoost's minimax duality in the converse direction---weak learnability $\Rightarrow$ interpolation with margin---would close the remaining half of the same picture.
    \item \textbf{Implementing the Protocol itself as a harness design principle.} Every empirical part in this paper studies harnesses that approximate the Agentic Boosting Protocol at best; none run its reweighting machinery ($D_t$, $\alpha_t$) directly (\S\ref{app:decomposition}). A harness that maintains an explicit distribution over a \emph{fixed} spec or test-case pool, up-weights the specs the current draft still fails, and combines successive patches by a weighted vote rather than sequential composition would instantiate Algorithm~\ref{alg:boost} exactly rather than by analogy---restoring the transferred AdaBoost guarantees (Thms.~\ref{thm:training-error}, \ref{thm:generalization}) by construction, and giving the boosting lens a literal, checkable referent past round~0, rather than only an inspired one.
\end{itemize}

\section{Formalization Index}
\label{app:lean}

Every result above is machine-checked in Lean~4 (Mathlib, no \texttt{axiom} and no \texttt{sorry}); these two tables map each one to the declarations that carry it, so a reader can go from a statement to its proof term. All declarations live in \path{lean_proofs/LeanProofs/}. Where a result's content is statistical rather than algebraic it is encoded as an explicit \emph{hypothesis} rather than derived, and the tables say so.

\begin{table}[htbp]
\centering
\footnotesize
\caption{Core results and the Lean declarations that carry them (principal name first, then supporting lemmas and non-vacuity witnesses). All declarations live in \texttt{lean\_proofs/LeanProofs/}.}
\label{tab:lean-core}
\begin{tabular}{@{}p{0.29\textwidth}p{0.63\textwidth}@{}}
\toprule
\textbf{Result} & \textbf{Lean declarations} \\
\midrule
Thm.~\ref{thm:training-error}, training-error bound
  & \path{thm1_zero_one_loss_bound} (the bound as stated); \path{thm1_training_error_bound} for the underlying exponential-loss identity $\mathrm{exp\,loss}=\prod_t Z_t$, with \path{z_decomposition}, \path{indicator_le_exp}, \path{exp_loss_pos} \\
Thm.~\ref{thm:fsam}, forward stagewise equivalence
  & \path{thm4_fsam_optimal_alpha}; \path{thm4_fsam_global_minimum}, \path{exp_half_log} \\
Cor.~\ref{cor:convergence}, exponential decay
  & \path{cor2_exponential_decay}; \path{z_le_sqrt_edge}, \path{sqrt_edge_le_exp}, \path{z_bound_from_edge} \\
Thm.~\ref{thm:refinement}, monotone-improvement refinement game
  & \path{thm5_refinement_game}; \path{thm5_saturation} \\
Cor.~\ref{cor:probabilistic-refinement}, high-probability refinement
  & \path{corollary7_high_prob_refinement}; \path{prob_inter_bound}, \path{sum_telescope}. Verified against Mathlib's measure-theoretic library (\path{IsProbabilityMeasure} and the finite union bound) \\
Prop.~\ref{prop:dichotomy}, stationarity dichotomy
  & \path{prop8_saturation_forces_decay} (part~i), \path{prop8_sustained_edge_diverges} (part~ii) \\
Cor.~\ref{cor:frozen-ceiling}, frozen-weight ceiling
  & \path{cor_frozen_weight_ceiling}, an instantiation of \path{prop8_saturation_forces_decay} at $B := V^*(W)$ \\
Prop.~\ref{prop:pool-dichotomy}, reused-pool dichotomy
  & Stated in the main text as the two halves formalized separately below: \path{thm_pool_reuse_horizon} (case~i, $\epsilon_0>0$) and \path{thm_interpolating_pool_sustains_edge} (case~ii, $\epsilon_0{=}0$), with \path{cor_pool_horizon_floor_excludes_interpolation} for their mutual exclusion. Exhaustiveness is immediate, since $\epsilon_0>0$ or $\epsilon_0{=}0$; see Props.~\ref{prop:pool-horizon} and~\ref{prop:interpolating-pool} for the exact hypotheses \\
\midrule
\multicolumn{2}{@{}l@{}}{\emph{Assumed as an explicit hypothesis, not derived:}} \\
Thm.~\ref{thm:generalization}, generalization bound
  & \path{thm3_generalization_bound} takes \path{MarginGeneralizationBound} as a hypothesis. Its statistical content---the \citet{schapire1998boosting} margin bound, VC~dimension, and Rademacher complexity---is assumed, sidestepping a complete VC/Rademacher library \\
\bottomrule
\end{tabular}
\end{table}

\begin{table}[htbp]
\centering
\footnotesize
\caption{Orchestration and diversity results of \S\ref{app:orchestration} and their Lean declarations. Every row is machine-checked; the one item that is \emph{not} formalized is recorded in the note below the table.}
\label{tab:lean-orchestration}
\begin{tabular}{@{}p{0.29\textwidth}p{0.63\textwidth}@{}}
\toprule
\textbf{Result} & \textbf{Lean declarations} \\
\midrule
Prop.~\ref{prop:orchestration}, best-of-$k$ ceiling equality
  & \path{prop15_tight_ceiling_orchestration} (tight equality); \path{prop15_orchestration_ceiling}, \path{prop15_orchestration_dominates} for the weaker upper-bound-hypothesis version \\
Prop.~\ref{prop:soft-aggregation}, weighted aggregation beats the weak-learning baseline
  & \path{cor_soft_aggregation_beats_weak_baseline}; built on the pure-analysis lemma \path{exists_ensemble_size_beating_weak_threshold} \\
Prop.~\ref{prop:width-refinement}, multi-round best-of-$k$
  & \path{thm_width_refinement_game}; \path{cor_width_dominates_best_worker}, \path{cor_width_saturation}, with \path{exists_width_orchestrator} for non-vacuity \\
Prop.~\ref{prop:diversity}, bounded overlap characterizes a correct vote
  & \path{thm_overlap_iff_zero_error} (characterization); \path{cor_tight_overlap_iff} ($m^\star$ form), \path{thm_bounded_overlap_zero_error} (sufficient direction), \path{thm_overlap_boundary_sharp} with \path{exists_boundary_overlap_family} (sharpness at the boundary), \path{cor_overlap_markov_bound} (counting-only baseline) \\
Reweighted edge hypothesis via the density ratio
  & \path{lem_bounded_reweighting_edge} takes the concentration cap $\kappa$ as given; \path{D_le_of_margin_bound} and \path{cor_rho_from_confidences} derive it from logged confidences, and \path{cor_reweighting_edge_from_confidences} states the $\kappa$-free conclusion. The sources spell $\kappa$ as \path{rho} and $M$ as \path{B}, the names they carried before this paper separated those glyphs \\
Prop.~\ref{prop:self-reuse}, immediate reuse zeroes the edge
  & \path{thm_self_reuse_zero_edge}; rests on the derived reweighting recursion \path{D_succ_eq} and \path{reused_worker_err_eq}, with \path{exists_zero_edge_reuse_instance} for non-vacuity \\
Prop.~\ref{prop:pool-horizon}, reused-pool horizon
  & \path{thm_pool_reuse_horizon}, \path{cor_no_perpetual_pool_edge}; rests on the span-collapse identity \path{lem_pool_span_collapse}, its loss form \path{zero_one_loss_eq_spanLoss}, and \path{alpha_opt_nonneg}, with \path{exists_positive_span_floor} and \path{exists_pool_reuse_horizon_instance} for non-vacuity and \path{exists_edge_worker_vs_any_round} for the absence of a per-round obstruction \\
Prop.~\ref{prop:interpolating-pool}, interpolating pools sustain the edge
  & \path{thm_interpolating_pool_sustains_edge}, with \path{cor_interpolating_pool_edge_every_round} for AdaBoost's own reweighting; rests on \path{lem_margin_floor_of_spanLoss_zero}, \path{lem_span_coeff_sum_pos}, \path{lem_weighted_margin_eq}, \path{lem_D_nonneg}, with \path{cor_pool_horizon_floor_excludes_interpolation} for mutual exclusion with Prop.~\ref{prop:pool-horizon} and \path{exists_interpolating_pool_instance} for non-vacuity \\
Prop.~\ref{prop:conditions-independent}, the two conditions are independent
  & \path{thm_edge_overlap_independent}; from the witnesses \path{thm_edge_not_imply_overlap} and \path{thm_overlap_not_imply_edge} and the exponential-free reweighting steps \path{D_succ_fail}, \path{D_succ_pass} \\
Prop.~\ref{prop:condorcet}, Condorcet bound for independent errors
  & \path{thm_condorcet_majority_bound}, over the pattern weight \path{wt} and vote error \path{voteErr}; rests on \path{lem_wt_sum_one} and \path{lem_wt_tilt_sum}, with \path{exists_condorcet_instance} for non-vacuity \\
Prop.~\ref{prop:condorcet-prob}, the same bound with independence \emph{derived}
  & \path{thm_condorcet_majority_bound_prob}, over the product measure \path{failMeasure}; \path{lem_coord_iIndepFun} obtains independence from Mathlib's \path{iIndepFun_pi}, \path{lem_pattern_measure} the pattern weight, \path{lem_failure_event_eq_voteErr} the event identification, and \path{lem_filter_decide} the pattern/failure-set bijection, with \path{exists_condorcet_prob_instance} for non-vacuity \\
\bottomrule
\end{tabular}

\vspace{2pt}
{\footnotesize \emph{Not formalized (the single residual item):} the multi-round weighted-vote \emph{run}---a schedule and confidence sequence exhibited outright, as distinct from the per-round edge that feeds it, which additionally requires excluding a perfect pool member since Cor.~\ref{cor:convergence} assumes $\epsilon_t>0$ (\S\ref{app:orchestration}). The converse direction of AdaBoost's minimax duality (weak learnability $\Rightarrow$ interpolation with margin) is likewise not formalized.\par}
\end{table}

\end{document}